\documentclass[11pt]{article}
\usepackage{times}
\usepackage{helvet}
\usepackage{courier}
\usepackage[hyphens]{url}
\usepackage{graphicx}
\usepackage{everyshi} 
\usepackage{fancyhdr} 

\usepackage[margin=1in]{geometry}
\setlength\parindent{0pt}

\usepackage{tikz}
\usetikzlibrary{arrows,fit}
\usetikzlibrary{positioning}
\usetikzlibrary{calc}
\usetikzlibrary{shapes,decorations}
	\tikzset{
		inner sep=0pt, outer sep=0pt, minimum size=0pt, thick,
		level/.style={sibling distance = (\columnwidth/16)*2^(4-#1)},
		winner/.style={minimum size=1.5em, circle, draw, fill=white, font={\footnotesize}},
		split/.style={minimum size=1.5em, inner sep=1pt, circle split, draw, fill=white, font={\tiny}},
		leaf/.style={inner sep=.15em, font={\footnotesize}},
		ball/.style={minimum size=.4em,circle,fill=black},
		beats/.style={thick,->,>=stealth',draw},
		tline/.style={thick,draw}
	}

\definecolor{light-gray}{gray}{0.9}
\usepackage{amsmath}
\usepackage{amssymb}
\usepackage{amsthm}
\usepackage{hyperref}
\hypersetup{
    colorlinks = true,
    linkcolor = black,
    urlcolor = blue,
    citecolor = green
}
\usepackage{nccmath} 
\usepackage{nicefrac}
\usepackage[font=small]{subcaption}
\usepackage[font=small]{caption}
\usepackage{multirow}


\newtheorem{theorem}{Theorem}
\newtheorem{lemma}[theorem]{Lemma}
\newtheorem{corollary}[theorem]{Corollary}

\newtheorem{definition}[theorem]{Definition}
\newtheorem{decision_rule}{Weighted Majority Rule}
\newtheorem{condition}[theorem]{Condition}

\newtheorem{claim}[theorem]{Claim}

\frenchspacing
\sloppy
\raggedbottom

\usepackage{color}
\usepackage[textsize=scriptsize]{todonotes}


\usepackage{xspace}
\newcommand{\voters}{\ensuremath{\mathcal{V}}\xspace} 
\newcommand{\cands}{\ensuremath{\mathcal{C}}\xspace} 

\newcommand{\thresh}{\ensuremath{\tau}\xspace} 
\newcommand{\threshl}{\ensuremath{\thresh_l}\xspace} 
\newcommand{\threshup}{\ensuremath{\thresh_{l+1}}\xspace} 

\newcommand{\ds}{\ensuremath{\delta}} 
\newcommand{\actual}{\ensuremath{\ds_I}} 

\newcommand{\icd}{\ensuremath{\Delta}} 

\newcommand{\ai}{\ensuremath{\alpha_i}\xspace} 

\newcommand{\sumL}{\ensuremath{\sum\limits} \xspace}

\renewcommand{\P}{\ensuremath{P}\xspace} 
\newcommand{\Q}{\ensuremath{Q}\xspace} 

\DeclareMathOperator*{\argmin}{arg\,min} 

\title{Awareness of Voter Passion Greatly Improves\\ the Distortion of Metric Social Choice}
\author{Ben Abramowitz \and Elliot Anshelevich \and Wennan Zhu}

\date{{\small Rensselaer Polytechnic Institute, Troy, NY\\ \today}}

\begin{document}
\maketitle

\abstract{We develop new voting mechanisms for the case when voters and candidates are located in an arbitrary unknown metric space, and the goal is to choose a candidate minimizing social cost: the total distance from the voters to this candidate. Previous work has often assumed that only ordinal preferences of the voters are known (instead of their true costs), and focused on minimizing distortion: the quality of the chosen candidate as compared with the best possible candidate. In this paper, we instead assume that a (very small) amount of information is known about the voter preference {\em strengths}, not just about their ordinal preferences. We provide mechanisms with much better distortion when this extra information is known as compared to mechanisms which use only ordinal information. We quantify tradeoffs between the amount of information known about preference strengths and the achievable distortion. We further provide advice about which type of information about preference strengths seems to be the most useful. Finally, we conclude by quantifying the {\em ideal candidate distortion}, which compares the quality of the chosen outcome with the best possible candidate that could ever exist, instead of only the best candidate that is actually in the running.
}


\section{Introduction}
One often hears about `where candidates stand' on issues, calling to mind a spatial model of preferences in social choice \cite{arrow1990advances,merrill1999unified,hinich1984spatial,peter1990decade,schofield2007spatial}. In proximity-based spatial models, voters' preferences over candidates are derived from their distances to each of the candidates in some issue space. In particular, we consider voters and candidates which lie in an arbitrary unknown metric space. Our work follows a recent line of research in social choice which considers this setting  \cite{Anshelevich2018,anshelevich2017randomized,
anshelevich2018ordinal,borodin2019primarily,cheng2017people,cheng2018distortion,fain2019random,feldman2016voting,ghodsi2018distortion,goel2017metric,gross2017vote,pierczynski2019approval,skowron2017social}. The distance between each voter and the winning candidate is interpreted as the cost to that voter. Naturally, one of the main goals is to select the candidate which minimizes the total Social Cost, i.e., the sum of costs to the voters.


The crucial observation in the work cited above is that the actual costs of the voters for the selection of each candidate (i.e., the distances in the metric space) are often unknown or difficult to obtain \cite{BCHL+15a}. Instead, it is more reasonable to assume that voters only report {\em ordinal preferences}: orderings over the candidates which are induced by, and consistent with, latent individual costs. Because of this, past research has often focused on optimizing {\em distortion}: the worst-case ratio between the winning candidate selected by a voting rule aware of only ordinal preferences, and the best available candidate which minimizes the overall social cost. Many insights were obtained for this setting, including that there are deterministic voting rules which obtain a distortion of at most a small constant (5 in \cite{Anshelevich2018}, and more recently 4.236 in \cite{mungala2019improved}), and that no deterministic rule can obtain a distortion of better than 3 given access to only ordinal information.\footnote{We focus on deterministic mechanisms in this paper; see Related Work for discussion of why.}

The fundamental assumption and motivation in the above work is that the \emph{strength} or intensity of voter preferences is not possible to obtain, and thus we must do the best we can with only ordinal preferences. And indeed, knowing the exact strength of voter preferences is usually impossible. In many settings, however, {\em some} cardinal information about the ardor of voter preferences is readily available or obtainable, and is often used to affect outcomes and make better collective decisions. For example, a decision in a meeting may be decided in favor of a minority position if those in the minority are significantly more adamant or passionate about the issue than the apathetic majority, as revealed during discussion or debate. In political campaigns, the amounts of monetary donations, activists attending rallies, and other measures of ``grass-root support" can cause a candidate to become a de-facto front-runner even before an official election or primary is ever held. Because of this, in this paper we ask the question: ``How much can the quality of selected candidates be improved if we know some {\em small} amount of information about the {\em strength} of voter preferences?"



There are many different approaches modeling, measuring, eliciting, and aggregating the strength or intensity of voter preferences \cite{campbell1973social,farquhar1989preference}. Such measures can be done through survey techniques, measuring the total amount of monetary contributions, amounts of excitement and time people spend volunteering or advocating for particular issues, etc (see Related Work). All such measures are by their very nature imprecise. And yet while it is unreasonable to assume that exact strength of preference is known for every voter, it is certainly possible to obtain insights such as ``there are many more voters who are passionate about candidate A as compared to candidate B", or quantify the approximate amount of extreme preference strengths as opposed to the voters who are mostly indifferent. As we show in this paper, even such a small amount of information about aggregate preference strengths or the amount of passionate voters can greatly improve distortion, and allow mechanisms which provably result in outcomes which are close to optimal. In fact, knowing only a single additional bit of information for each voter (i.e., do they prefer A to B strongly, or not strongly?) is enough to greatly improve distortion.

\subsubsection*{Model and Notation}
As in previous work on metric distortion, we have a set of voters $\voters = \{1, 2, \ldots, n\}$ and a set of candidates (or alternatives) \cands. These voters and candidates correspond to points in an arbitrary (unknown) metric space $d$. 
The voter preferences over the candidates are induced by the underlying metric, i.e.,  
voters prefer candidates who are closer to them. Voter $i$ prefers candidate \P over candidate \Q (i.e., $\P \succ_i \Q$) only if $d(i,\P) \leq d(i,\Q)$. Moreover, we assume that the strengths of voter preferences are induced by these latent distances. If $i$ prefers \P over \Q, then the strength of this preference is $\ai^{\P\Q} =  \frac{d(i,\Q)}{d(i,\P)}$. 
The cost to voter $i$ if candidate \P is elected is $d(i,P)$, and the goal is to select the candidate minimizing the Social Cost:  $SC(P) = \sumL_{i \in \voters} d(i,P)$.



In previous work on metric distortion only the ordinal preferences were known, i.e., whether $(\P \succ_i \Q)$ or $(\Q \succ_i \P)$. In this paper, however, we assume that we are also given some information about the preference strengths $\ai^{\P\Q} =  \frac{d(i,\Q)}{d(i,\P)}$ as well. Note that knowing these values still does not tell us how $d(i,P)$ compares with $d(j,P)$ for $i\neq j$, only how strongly each voter feels when comparing different candidates. In fact, while even knowing the {\em exact} preference strengths of all the voters is not enough to be able to select the optimum candidate (as we show in this paper), knowing just one bit of information about $\ai^{\P\Q}$ (such as whether $\ai^{\P\Q}\geq\thresh$ for a threshold $\thresh$) is enough to create mechanisms with much better distortion.


For a given voting rule $\mathcal{R}$ and instance $I = \{\voters, \cands, d\}$, let $\P_I$ be the winning candidate selected by $\mathcal{R}$ and let $Z_I$ be the best available candidate (the one minimizing the Social Cost). Then, the \emph{distortion of winning candidate $P_I$} is defined as $$\actual = \frac{SC(P_I)}{SC(Z_I)}$$

The \emph{distortion of a voting rule $\mathcal{R}$} is defined its behavior on a worst-case instance: $$\ds = \max\limits_I \actual = \max\limits_I \frac{SC(P_I)}{SC(Z_I)}$$


\subsubsection*{Our Contributions}
What type of knowledge of the strengths of voter preferences is most useful and advantageous? What voting mechanisms should be used in order to minimize distortion if you have access to more information than only ordinal preferences? If you could gather data about voter preferences in different ways, what should you aim for in order to reduce distortion? These are some of the questions which we attempt to illuminate in this paper.

In this work, we study the possible distortion with different levels of voter preference strength information. A summary of our results is shown in Table \ref{table_results_distortion}. We begin with the setting in which we are given the voters' ordinal preferences, as well as a threshold $\thresh \ge 1$ of voter preference strength. In other words, for any two candidates $P$ and $Q$, we know the number of voters who prefer $P$ to $Q$, as well as how many of them prefer $P$ to $Q$ by at least a factor of $\tau$ (i.e., $d(i, P) < \frac{1}{\thresh}d(i, Q)$). Based on only this information about the voter preferences (and the fact that the voters and candidates are embedded in some arbitrary unknown metric space), we are able to provide new voting mechanisms with much better distortion than possible when only knowing ordinal preferences.
For the case that there are only two candidates, we provide a mechanism which achieves provably best possible distortion of $\max\{ \frac{\thresh+2}{\thresh}, \frac{3\thresh - 1}{\thresh + 1}\}$, as shown in Figure \ref{fig:two_candidates_1_tau}. For the setting with more than two candidates, we get a distortion of $\min \{ \max\{\frac{3\thresh - 1}{\thresh + 1}, \frac{\thresh + 2}{\thresh} \} + 2, \max \{(\frac{3\thresh - 1}{\thresh + 1})^2, (\frac{\thresh + 2}{\thresh})^2 \} \}$ as shown in Figure \ref{fig:multi_candidates_1_tau}. Note that when $\thresh = 1$, we get a distortion of 5. A recently paper shows a deterministic algorithm that gives a distortion of 4.236. We believe our result can be improved using similar mechanisms to start the curve in Figure \ref{fig:multi_candidates_1_tau} from 4.236.

\begin{table}[htb]
\centering
\begin{tabular}{ | l | c | c | }
	\hline
	Distortion & Two Candidates & More than Two Candidates \\ \hline
	\multirow{2}{*}{Preferences and a threshold $\tau$} & \multirow{2}{*}{$\max\{ \frac{\thresh+2}{\thresh}, \frac{3\thresh - 1}{\thresh + 1}\}$} & \multicolumn{1}{|c|}{$\min \{ \max\{\frac{3\thresh - 1}{\thresh + 1}, \frac{\thresh + 2}{\thresh} \} + 2,$} \\
                          & & \multicolumn{1}{|c|}{$\max \{(\frac{3\thresh - 1}{\thresh + 1})^2, (\frac{\thresh + 2}{\thresh})^2 \} \}$} \\ \hline
	$m$ thresholds $\tau_1, \dots, \tau_m$ & $\max\limits_{1 \leq l \leq m} \{ \frac{\threshl \threshup + 2\threshup - 1}{\threshl \threshup + 1} \}$ & $\max\limits_{1 \leq l \leq m} \{ (\frac{\threshl \threshup + 2\threshup - 1}{\threshl \threshup + 1})^2 \}$ \\ \hline
	Exact preference strengths     & $\sqrt{2}$ & 2 \\ \hline
\end{tabular}
\caption{Distortion in different settings.}
\label{table_results_distortion}
\end{table}

From Figure \ref{fig:two_candidates_1_tau} and \ref{fig:multi_candidates_1_tau}, we can see that the distortion is minimized when $\tau = 1+\sqrt{2}$ in both settings. With only voter preferences being known, the best known deterministic distortion bounds are 3 for two candidates \cite{Anshelevich2018}, and 4.236 for multiple candidates \cite{mungala2019improved}. Interestingly, if we are also allowed to a choose a threshold $\thresh$, our results indicate that the optimal thing to do is to differentiate between candidates with lots of supporters who prefer them at least $1+\sqrt{2}$ times to other candidates, and candidates which have few such supporters. By obtaining this information, we can improve the quality of the chosen candidate from a 3-approximation to only a 1.83 approximation (for 2 candidates), and from a 4.236-approximation to a 3.35-approximation (for $\geq 3$ candidates). This is a huge improvement obtained with relatively little extra cost in information gathering.

\begin{figure}[htb]
\centering
\begin{minipage}{.42\linewidth}
  \includegraphics[width=\linewidth]{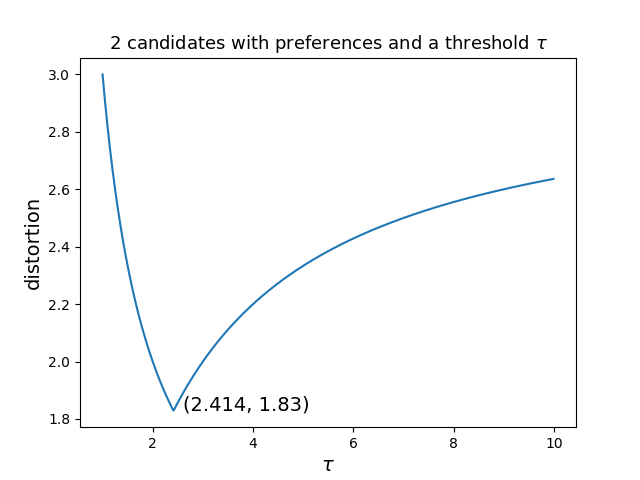}
  \captionof{figure}{Distortion for two candidates with preferences and a threshold $\thresh$.}
  \label{fig:two_candidates_1_tau}
\end{minipage}
\hspace{.04\linewidth}
\begin{minipage}{.42\linewidth}
  \includegraphics[width=\linewidth]{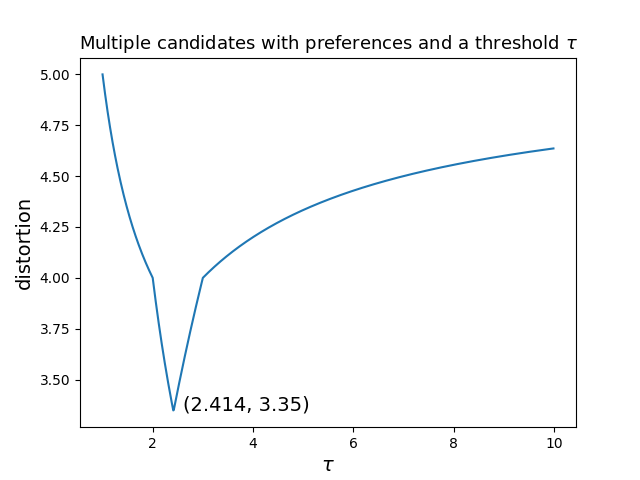}
  \captionof{figure}{Distortion for more than two candidates with preferences and a threshold $\thresh$.}
  \label{fig:multi_candidates_1_tau}
\end{minipage}
\end{figure}

In Section \ref{sec-1tau} we consider the case when we only know the preferences of voters who feel strongly about their choice (prefer $P$ to $Q$ by at least $\thresh$ times), but do not know the preferences of voters who are relatively indifferent. We show that knowing how many voters feel strongly about a candidate is actually {\em more} important than knowing the ordinal preferences of all voters when attempting to minimize distortion: for example if we have $\tau=2$ we can obtain a distortion of 2 as well, even if we don't know the preferences of all voters.

We then consider a more general case in Section \ref{sec_general}. Suppose we have $m$ different thresholds $\{1 \leq \thresh_1 < \thresh_2 < \ldots < \thresh_m\}$, and voters report the largest threshold which their preference strength exceeds for each pair of candidates. As $m$ gets larger, the information about preference strengths gets less coarse; for most settings it would be realistic to assume that $m$ is small, but we provide a result which is as general as possible. With this information, we give a mechanism achieving the provably best distortion of $\max\limits_{1 \leq l \leq m} \{ \frac{\threshl \threshup + 2\threshup - 1}{\threshl \threshup + 1} \}$ in the two candidates setting, and a distortion of $\max\limits_{1 \leq l \leq m} \{ (\frac{\threshl \threshup + 2\threshup - 1}{\threshl \threshup + 1})^2 \}$ in the multiple candidates setting. Note that knowing all the preference strengths {\em exactly} is still not enough to always be able to choose the optimum candidate: the preference strengths are relative (``I like A twice as much as B") as opposed to absolute. We never obtain information about how the costs of different voters compare to each other, the only thing we know is that the voters lie in a metric space. In fact, when we know the exact preference strengths of every voter, we obtain a distortion bound of $\sqrt{2}$ in the two candidates setting, and a distortion of $2$ in the multiple candidates setting. Moreover, we prove that even knowing the exact preference strengths, it is not possible to obtain distortion better than $\sqrt{2}$ in the worst case.

\paragraph{Ideal Candidate Distortion}
In addition to forming mechanisms with small distortion, we also have a secondary goal in this paper. Rather than only comparing the winning candidate to the best available candidate, we can also measure them against the ideal conceivable candidate $Z_I^*$ who may not be an available option to vote upon. $Z_I^*$ is the point in the metric space which minimizes social cost; it is the absolute best consensus of the voters, and it would be wonderful if that point corresponded to a candidate, but that may not be the case (i.e., $Z_I^*$ may not be in \cands). We introduce the notion of \emph{ideal candidate distortion} as follows, where $I=\{\voters, \cands, d\}$ is any instance and $P_I$ is the winner that our mechanism selects for instance $I$:

$$\icd = \frac{SC(P_I)}{SC(Z_I^*)}$$

As we show, while the ideal candidate distortion $\icd$  is unbounded in general, for many simple voting rules it can be bounded as a function of the distortion of the winning candidate ($\actual$). Intuitively, the distortion $\actual$  can only be high when the best available candidate (best in $\cands$) is close to being the ideal possible candidate (best in the entire metric space).

A summary of our results on this topic is shown in Table \ref{table_results_ideal}. 
These results imply that if we are only given ordinal preferences, as in most previous work, and use certain mechanisms like the Copeland voting mechanism, then {\em either} the selected candidate is much closer to the best candidate in the running than the worst-case distortion bound indicates (say within factor of $\delta_I=3$ instead of the worst-case of 5 for the Copeland mechanism), {\em or} the selected candidate is not far from the {\em ideal} candidate, i.e., the best candidate that could ever exist (say within factor of 6 if $\actual=3$). So in the case when distortion is high, we at least can comfort ourselves with the fact that the selected candidate is not too far away from the best possible candidate that could ever exist, not just from the best candidate in the running.

 \renewcommand{\arraystretch}{1.3}

 \renewcommand{\arraystretch}{1.5}
\begin{table}[htb]
\centering
\begin{tabular}{ | l | c | c | }
	\hline
	Ideal Candidates Distortion & Two Candidates & Multiple Candidates \\ \hline
	Only preferences & $\frac{2 \actual}{\actual - 1}$ & $\frac{4 \actual}{\actual - 1}$ \\ \hline
	Preferences and a threshold \thresh& $\frac{2 \actual}{\actual - 1}$ & $\frac{4 \actual}{\actual - 1}$ \\ \hline
	Exact preference strengths  & $\frac{(\sqrt{2} + 1) \actual}{\actual - 1}$ & $\frac{2(\sqrt{2} + 1) \actual}{\actual - 1}$ \\ \hline
\end{tabular}
\caption{Ideal candidate distortion (\icd) bounds}
\label{table_results_ideal}
\end{table}

\section{Related Work and Discussion}

The concept of distortion was introduced by \cite{Procaccia2006} as a measure of efficiency for ordinal social choice functions (see also \cite{Anshelevich2018,BCHL+15a} for discussion). Since then, two main approaches have emerged for analyzing the distortion of various voting mechanisms. One is assuming that the underlying unknown utilities or costs are normalized in some way, as in e.g.,  \cite{benade2017preference,benade2019low,bhaskar2018truthful,BCHL+15a,CNPS17a,caragiannis2011voting}. The second approach, which we take here, assumes all voters and candidates are points in a metric space \cite{Anshelevich2018,anshelevich2017randomized,
anshelevich2018ordinal,borodin2019primarily,cheng2017people,cheng2018distortion,fain2019random,feldman2016voting,ghodsi2018distortion,goel2017metric,gross2017vote,pierczynski2019approval,skowron2017social}. In particular, when the latent numerical costs that induce voter preferences over a set of candidates obey the triangle inequality, it is known that simple deterministic voting rules yield distortion which is always at most a small constant (5 for the well-known Copeland mechanism  \cite{Anshelevich2018}, and recently 4.236 for a more sophisticated, yet elegant, mechanism  \cite{mungala2019improved}).
While \cite{Anshelevich2018} showed that no deterministic mechanism can always produce distortion better than 3, closing this gap remains an open question.

{\bf Randomized vs Deterministic Mechanisms~~} In this paper we restrict our attention to deterministic social choice rules, instead of randomized ones as in e.g., \cite{anshelevich2017randomized, CNPS17a, feldman2016voting, gross2017vote}, for several reasons. First, consider looking at our mechanisms from a social choice perspective, i.e., as voting rules that need to be adopted by organizations and used in practice. People are far more resistant to adopting randomized voting protocols. 
This is because an election with a non-trivial probability of producing a terrible outcome is usually considered undesirable, even if the {\em expected} outcomes are good. There are many exceptions to this, of course, but nevertheless deterministic mechanisms are easier to convince people to adopt. Second, consider looking at our mechanisms from the point of view of approximation algorithms, i.e., as algorithms which attempt to produce an approximately-optimal solution given a limited amount of information. For traditional randomized approximation algorithms with guarantees on the quality of the expected outcome it is possible to run the algorithm several times, take the best of the results, and be relatively sure that you have achieved an outcome close to the expectation. In this setting of limited information, however, we cannot know the ``true" cost of a candidate even after a randomized mechanism chooses it, and thus cannot take the best outcome after several runs. Therefore, unless stronger approximation guarantees are given than simply bounds on the expectation, it is quite likely that the outcome of a randomized algorithm in our setting would be far from the expected value. While randomized algorithms are certainly worthy of study even in our setting, and many interesting questions about them exist, we choose to focus only on deterministic algorithms in this paper.

Attempts to exploit preference strength information have led to various approaches for modeling, eliciting, measuring, and aggregating people's preference intensities in a variety of fields, including Likert scales, semantic differential scales, sliders, constant sum paired comparisons, graded pair comparisons, response times, willingness to pay, vote buying, and many others (see \cite{campbell1973social,farquhar1989preference,gerasimou2019preference} for summaries). 
In our work we specifically consider only a small amount of coarse information about preference strengths, since obtaining detailed information is extremely difficult.
Intuitively, any rule used to aggregate preference strengths must ask under what circumstances an `apathetic majority' should win over a more passionate minority \cite{Willmoore1968Intensity}, and we provide a partial answer to this question when the objective is to minimize distortion.

Perhaps most related to our work is that of \cite{anshelevich2017randomized} which introduced the concept of \emph{decisiveness}. Using our notation, \cite{anshelevich2017randomized} proves bounds on distortion under the assumption that {\em every} voter has a preference strength at least $\alpha$ between their top and second-favorite candidates. We, on the other hand, do not require that voters have any specific preference strength between any of their alternatives, and provide general mechanisms and distortion bounds based on knowing a bit more about voters (arbitrary) preference strengths. In other words, while \cite{anshelevich2017randomized} limits the possible space of voter preferences and locations in the metric space, we instead allow those to be completely arbitrary, but assume that we are given slightly more information about them.



In our model, when voter preference strength is less than the smallest threshold ($\thresh_1 > 1$), they effectively abstain because their preferred candidate is unknown, and so any reasonable weighted majority rule must assign them a weight of 0. Therefore, our work also bears resemblance to literature on voter abstentions in spatial voting (see \cite{ghodsi2018distortion} and references therein). While there are major technical differences in our model and that of \cite{ghodsi2018distortion}, at a high level the model of \cite{ghodsi2018distortion} is similar to a special case of ours with only two candidates and a single threshold on preference strengths (and no knowledge of voter preferences otherwise), which we analyze in Section \ref{sec-1tau}.


Finally, in this paper we assume that the preference strengths given to our algorithms are truthful, i.e., that the voters do not lie. While it would certainly be interesting and important to consider the case where voters may not be truthful (as in e.g., \cite{bhaskar2018truthful,feldman2016voting}), for many settings with preference strengths it is actually more reasonable to expect voters to be truthful than for settings with only ordinal votes. This is because preference strengths are often signaled passively (e.g., average response times to surveys) or expressing this intensity comes at a cost (e.g., time commitments, activism, or monetary contributions and payments). Even in debates and committees where a member signals their strong preference for A over B, this member is putting their reputation on the line in doing so, and so may not want to do this unless their preference is actually that strong, in order to not look foolish or inconsistent in the future.

\section{Preliminaries and Lower Bounds}
In our model we have a set of voters $\voters = \{1, 2, \ldots, n\}$ and a set of candidates \cands. These voters and candidates correspond to points in an arbitrary metric space $d$, so for any three points $x, y, z$ the triangle inequality holds: $d(x,y) \leq d(x,z) + d(y,z)$.
We assume that voters' preferences over the candidates are induced by the underlying metric, and that voters are truthful (i.e., non-strategic). That is, voters prefer candidates who are closer to them. Voter $i$ prefers candidate \P over candidate \Q $(\P \succ_i \Q)$ only if $d(i,\P) \leq d(i,\Q)$. Moreover, we assume that the strength of voters' preferences are induced by these latent distances. If $i$ prefers \P over \Q, then the strength of this preferences is $\ai^{\P\Q} =  \frac{d(i,\Q)}{d(i,\P)}$. When it is clear we are referring to 2 candidates \P and \Q, we will drop the superscript.


Given a set of preference strength thresholds $\{1 \leq \thresh_1 < \thresh_2 < \ldots < \thresh_m\}$, voters report the largest threshold which their preference strength exceeds for each pair of candidates. We let $A_l = \{i \in \voters : d(i,\P) \leq d(i,\Q)$ and $\thresh_{l} \leq \ai^{PQ} < \thresh_{l+1}\}$ and $B_l = \{j \in \voters : d(j,\Q) \leq d(j,\P)$ and $\thresh_{l} \leq \alpha_j^{QP} < \thresh_{l+1} \}$. For convenience, we say $\thresh_{m+1} = \infty$ and $\thresh_0 = 1/\thresh_1$. When $\thresh_1 = 1$ we know the preferred candidate of every voter. When $\thresh_1 > 1$ we let $C$ denote the set of voters with preference strength strictly less than $\thresh_1$ whose preferred candidate is unknown. When $m \rightarrow \infty$, we know the exact preference strength of every voter for every pair of candidates.

We consider cost to voter $i$ if candidate \P is elected as $d(i,P)$ and the Social Cost is the sum of the costs to all of the individual agents, $SC(P) = \sumL_{i \in \voters} d(i,P)$. We would like to select the candidate with the minimum social cost. However, preference strength information is insufficient for any mechanism to guarantee selection of the best available candidate. Therefore, our primary goal is study and design mechanisms which minimize \emph{distortion} (\ds), the worst-case approximation ratio between the social cost of the candidate we select and the best available candidate over all possible instances, as defined in the Introduction.






\subsection{Lower Bounds on Distortion with Preference Strengths} \label{sec_lb}
Here, we provide lower bounds on the minimum distortion any deterministic mechanism can achieve given only preference strength information. First, note that even if all {\em exact} preference strengths were known to us, we still would not be able to choose the optimum candidate: knowing the relative strength of preference for every voter is not the same thing as knowing their exact distances to every candidate (i.e., we would only know $\alpha_i=\frac{d(i,P)}{d(i,Q)}$ and not $d(i,P)$ and $d(i,Q)$ themselves).

\begin{theorem}\label{thm.lower_bound_exact}
No deterministic mechanism with only preference strength information can achieve a worst-case distortion less than $\sqrt{2}$.
\end{theorem}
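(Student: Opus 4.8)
The plan is to use a standard indistinguishability (adversary) argument tailored to the preference-strength setting. I would exhibit two instances $I_1$ and $I_2$ that induce \emph{identical} reported information --- the same ordinal preferences and the same exact preference strengths for every voter --- yet have \emph{different} optimal candidates. Since a deterministic mechanism is a function only of the reported information, it must return the same candidate on both $I_1$ and $I_2$; whichever it returns will be suboptimal by a factor of at least $\sqrt2$ in one of the two instances, which yields the bound.

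Concretely, I would work with two candidates \P and \Q and a symmetric pair of voters: a voter $u$ who prefers \P and a voter $w$ who prefers \Q, both with the identical strength $\alpha = \frac{d(u,\Q)}{d(u,\P)} = \frac{d(w,\P)}{d(w,\Q)}$. The key observation is that fixing the ratio $\alpha$ together with the inter-candidate distance $d(\P,\Q)=1$ does \emph{not} fix the absolute distances: the triangle inequality only forces $d(u,\P)=a$ to lie in $[\frac{1}{1+\alpha},\frac{1}{\alpha-1}]$, and symmetrically for $d(w,\Q)=b$. I would exploit this slack by building $I_1$ with $a$ at its maximum and $b$ at its minimum (making \P optimal), and $I_2$ by swapping these choices (making \Q optimal). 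Both instances report exactly ``one voter prefers \P with strength $\alpha$, one voter prefers \Q with strength $\alpha$,'' so they are genuinely indistinguishable.

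It then remains to compute, for $I_2$ in which \Q is optimal, the distortion $\frac{SC(\P)}{SC(\Q)} = \frac{a+\alpha b}{\alpha a + b}$ at the extreme feasible endpoints, which simplifies to $\frac{\alpha^2+2\alpha-1}{\alpha^2+1}$, and then to pick the worst $\alpha$; maximizing over $\alpha>1$ gives $\alpha = 1+\sqrt2$ and a ratio of exactly $\sqrt2$, and by symmetry the same holds for $I_1$. The main work --- and the only subtle point --- is verifying that these extreme distance choices actually embed in a metric space rather than merely satisfying the two candidate-distance constraints; I would confirm this by explicitly placing \P, \Q, $u$, and $w$ on a line and checking all pairwise distances. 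A pleasant sanity check is that the optimizing strength $1+\sqrt2$ is precisely the threshold our upper-bound mechanisms single out as optimal, so the lower bound dovetails with the positive results.
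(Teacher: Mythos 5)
Your proposal is correct and is essentially the paper's own argument: the paper also builds the symmetric 1D instance where half the voters prefer $P$ and half prefer $Q$, all with strength $1+\sqrt{2}$, forces the deterministic mechanism to commit (via tie-breaking, which your two-instance indistinguishability framing makes explicit), and then reveals the adversarial realization --- indeed your $I_2$ endpoints $a=\frac{1}{\alpha+1}$, $b=\frac{1}{\alpha-1}$ with $\alpha=1+\sqrt{2}$ place the voters exactly at the paper's positions $\frac{1}{2+\sqrt{2}}$ and $\frac{3+2\sqrt{2}}{2+\sqrt{2}}$. Your explicit optimization over $\alpha$ (rather than plugging in $1+\sqrt{2}$ directly) is the only substantive addition, and it confirms the paper's choice is optimal.
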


\begin{proof}
The example used is in 1D, where candidates \P and \Q are represented by points on a line. We normalize the distances so that \P is at location 0 and \Q is at location 1. Suppose half the voters prefer \P with strength $1+\sqrt{2}$, and the other half prefer \Q with strength $1+\sqrt{2}$. Since this is the only information known to the mechanism, the mechanism must tie-break in some arbitrary way (if tie-breaking is undesirable, we can have one extra voter prefer \P, which will result in distortion arbitrarily close to $\sqrt{2}$ instead of exactly $\sqrt{2}$). Thus without loss of generality, we let \P be the winner over \Q.

Suppose the true location of the voters is as follows. Half of the voters are located at $\frac{1}{2+\sqrt{2}}$ and the other half are located at $\frac{3+2\sqrt{2}}{2+\sqrt{2}}$. All voters have a preference strength of $1+\sqrt{2}$. If there are $N$ voters, the candidates have social costs $SC(P) = 2N$ and $SC(Q) = \sqrt{2}N$. Thus, if \P wins we have a lower bound on distortion of $\ds \geq \sqrt{2}$.
\end{proof}

Of course it is unrealistic to expect to know the exact preference strengths of all the voters. Below we give a general lower bound for the best distortion possible given knowledge of certain preference thresholds.

\begin{theorem} \label{thm_mT_lowerbound}
When given knowledge of $m$ fixed thresholds, no deterministic mechanism can always achieve a distortion less than $\max\limits_{0 \leq l \leq m} \{ \frac{\thresh_l \thresh_{l+1} + 2 \thresh_{l+1} - 1}{\thresh_l \thresh_{l+1} + 1} \}$
\end{theorem}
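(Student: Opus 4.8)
The plan is to adapt the adversarial construction of Theorem~\ref{thm.lower_bound_exact}, using the crucial extra freedom that a level-$l$ report reveals only that a voter's strength lies in $[\thresh_l,\thresh_{l+1})$, not its exact value. Fix a level $l$ and work on the line with $P$ at $0$ and $Q$ at $1$. For $1\le l\le m$ I would take $N$ voters preferring $P$ and $N$ voters preferring $Q$, all of whom report level $l$; the resulting profile of counts is invariant under exchanging the labels $P$ and $Q$, so any deterministic rule, seeing only these reports, must break the tie and commit to one candidate. After relabelling we may assume it selects $P$, and the goal is to arrange the true positions so that $Q$ is far cheaper. The point that distinguishes this from Theorem~\ref{thm.lower_bound_exact} is that, although the reported counts are symmetric, the coarseness of the report lets me assign genuinely \emph{different} true strengths to the two sides and hide that asymmetry.

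Concretely, give each voter preferring $P$ the smallest admissible strength $\alpha=\thresh_l$ and place them between the candidates, at distance $\tfrac{1}{1+\thresh_l}$ from $P$ and $\tfrac{\thresh_l}{1+\thresh_l}$ from $Q$. Give each voter preferring $Q$ a strength $\beta$ approaching $\thresh_{l+1}$ from below and place them beyond $Q$, at distance $\tfrac{1}{\beta-1}$ from $Q$ and $\tfrac{\beta}{\beta-1}$ from $P$; both groups still report level $l$. Summing the social costs and letting $\beta\to\thresh_{l+1}$ yields $SC(P)/SC(Q)\to \frac{\thresh_l\thresh_{l+1}+2\thresh_{l+1}-1}{\thresh_l\thresh_{l+1}+1}$, which is the claimed value and exceeds $1$ whenever $\thresh_{l+1}>1$; since the rule chose $P$, this lower-bounds its distortion. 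The top level $l=m$ is handled identically with $\thresh_{m+1}=\infty$, sending $\beta\to\infty$ so that the $Q$-voters sit just past $Q$ and the ratio tends to $\tfrac{\thresh_m+2}{\thresh_m}$, exactly the formula evaluated at $\thresh_{m+1}=\infty$.

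The level $l=0$ needs a different idea and is where the main subtlety lies. If $\thresh_1=1$ this level is empty and contributes only the trivial value $1$, so assume $\thresh_1>1$. Then a voter with strength below $\thresh_1$ does not reveal which candidate she prefers at all, so the adversary can conceal every voter's direction. I would make all $2N$ voters prefer $Q$ with strength $\beta\to\thresh_1^-$, placed beyond $Q$; the report is merely ``$2N$ weak voters of unknown direction'', again symmetric, so the rule can be forced onto $P$. Here every voter contributes a factor $\beta$ to $SC(P)/SC(Q)$, giving a ratio tending to $\thresh_1$, which equals $\frac{\thresh_0\thresh_1+2\thresh_1-1}{\thresh_0\thresh_1+1}$ under the convention $\thresh_0=1/\thresh_1$. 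This is strictly larger than the value $\tfrac{3\thresh_1-1}{\thresh_1+1}$ that a naive direction-preserving construction would give, so exploiting the hidden directions is essential to match the formula.

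Because the adversary is free to realize whichever level $l$ maximizes the bound, combining these cases shows that for every deterministic rule and every $\epsilon>0$ some instance has distortion at least $\max_{0\le l\le m}\{\frac{\thresh_l\thresh_{l+1}+2\thresh_{l+1}-1}{\thresh_l\thresh_{l+1}+1}\}-\epsilon$, and letting $\epsilon\to0$ gives the theorem. I expect the step needing the most care to be the bookkeeping that the two relabelled instances produce identical reports --- in particular confirming that the coarse level report cannot detect the $\thresh_l$-versus-$\thresh_{l+1}$ gap in true strengths --- together with the separate hidden-direction argument at $l=0$ and the $\thresh_{m+1}=\infty$ limit at $l=m$.
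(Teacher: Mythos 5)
Your proposal is correct and takes essentially the same approach as the paper: the paper's proof consists of three 1D lemmas whose constructions coincide with yours --- weak voters with hidden directions just below strength $\tau_1$ for the $l=0$ case, symmetric half-and-half instances with $P$-voters at strength $\tau_l$ between the candidates and $Q$-voters at strength approaching $\tau_{l+1}$ beyond $Q$ for consecutive thresholds, and voters essentially on top of $Q$ for the largest threshold, each forcing a tie-break on symmetric reports. The only cosmetic difference is that you treat $l=m$ as the $\tau_{m+1}=\infty$ limit of the generic consecutive-threshold construction, while the paper states it as a separate lemma.
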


\begin{proof}
The proof follows from the following 3 lemmas. The examples used for these lemmas are all in 1D, where candidates \P and \Q are represented by points on a line. We normalize the distances so that \P is at location 0 and \Q is at location 1 and use $\epsilon$ to denote an infinitesimal quantity. Without loss of generality, we let \P be the winner over \Q. Recall that we have defined $\thresh_0 = \frac{1}{\thresh_1}$ and $\thresh_{m+1} = \infty$ for convenience.

\begin{lemma} \label{lem_smallest_lowerbound}
If we have a set of thresholds of which the smallest is $\thresh_1 > 1$, no deterministic mechanism can always achieve a distortion less than $\thresh_1$.
\end{lemma}

\begin{proof}
Suppose all $N$ voters are located at position $\frac{\thresh_1}{\thresh_1+1} - \epsilon$. All voters therefore have preference strength less than $\thresh_1$, so $|C| = N$ and the preferred candidates of the voters are unknown. If \P wins over \Q due to tie-breaking, as $\epsilon \rightarrow 0$ this yields a lower bound on distortion of $\ds \geq \thresh_1$.
\end{proof}

\begin{lemma} \label{lem_largest_lowerbound}
If we have a set of thresholds of which the largest is $\thresh_m > 1$, no deterministic mechanism can always achieve a distortion less than $\frac{\thresh_m + 2}{\thresh_m}$.
\end{lemma}

\begin{proof}
Suppose half of the voters are located on top of $Q$ at position 1 and the other half of voters are located at $\frac{1}{\thresh_m+1} - \epsilon$, so $|A_m| = |B_m| = \nicefrac{N}{2}$. If \P wins over \Q due to tie-breaking, as $\epsilon \rightarrow 0$ this yields a lower bound on distortion of $\ds \geq \frac{\thresh_m + 2}{\thresh_m}$.
\end{proof}

\begin{lemma} \label{lem_2T_lowerbound}
If we have a set of thresholds, of which two consecutive thresholds are $\thresh_{l}$ and $\thresh_{l+1}$ where $\thresh_{l} < \thresh_{l+1}$, no deterministic mechanism can achieve a distortion less than $\frac{\thresh_{l} \thresh_{l+1} + 2 \thresh_{l+1} - 1}{\thresh_{l} \thresh_{l+1} + 1}$.
\end{lemma}

\begin{proof}
Suppose half of the voters are located at position $\frac{1}{\thresh_{l} + 1} - \epsilon$ and the other half are located at position $1+\frac{1}{\thresh_{l+1} - 1} + \epsilon$. Once again, the mechanism must choose randomly between the candidates because $|A_{l}| = |B_{l}| = \nicefrac{N}{2}$.
Therefore, if \P wins over \Q due to tie-breaking, as $\epsilon \rightarrow 0$, this yields a lower bound on distortion of $\ds \geq \frac{\thresh_{l} \thresh_{l+1} + 2 \thresh_{l+1} - 1}{\thresh_{l} \thresh_{l+1} + 1}$.
\end{proof}

The combination of the three preceding lemmas guarantees the lower bound of Theorem \ref{thm_mT_lowerbound}.
\end{proof}

\section{Adding the knowledge of a single threshold $\thresh$ to ordinal preferences}
\label{sec-1-tau}

\subsection{Distortion with Two Candidates}
In this section we begin by analyzing the case with only two possible candidates. In the section that follows, we use these results to form mechanisms with small distortion for multiple candidates. Suppose there are two candidates $P$ and $Q$. We are given the users' ordinal preferences, and a strength threshold $\thresh$, i.e., for every voter we only know two bits of information: whether they prefer $P$ or $Q$, and whether their preference is strong ($>\thresh$) or weak ($\leq\thresh$). Note that our results still hold if we only have this knowledge in aggregate, i.e., if for both $P$ and $Q$ we know approximately how many people prefer $P$ to $Q$ strongly versus weakly, and vice versa.

Notice that preference strengths tell us little about the true underlying distances for voters with weak preference strengths, because the preference strength of a voter almost directly between \P and \Q who is very close to both can have the same preference strength as a voter who is very distant from both candidates. However, if a voter's preference strength is large, we know they must be fairly close to one of the candidates - and it is these passionate voters who contribute most to distortion.

\begin{decision_rule}
\label{weighted_majority_rule_1_tau}
 Given voters' preferences and a threshold $\thresh$ for two candidates, if $\thresh \ge \sqrt{2} + 1$, assign weight $\frac{\thresh+1}{\thresh-1}$ to all the voters with preference strengths $> \thresh$ and weight $1$ to all the voters with preference strengths $\le \thresh$. If $\thresh < \sqrt{2} + 1$, assign weight $\thresh$ to all the voters with preference strengths $> \thresh$ and weight $1$ to all the voters with preference strengths $\le \thresh$. Choose the candidate by a weighted majority vote.
\end{decision_rule}

The following theorem shows that the above voting rule produces much better distortion than anything possible from knowing only the ordinal preferences. Moreover, due to the lower bounds in the previous section, this is the best distortion possible (apply Theorem \ref{thm_mT_lowerbound} with $\tau_1=1$ and $\tau_2=\tau$).

\begin{theorem}
	\label{thm-1-thresh}
With 2 candidates in a metric space, if we know voters' preferences and a strength threshold $\thresh$, Weighted Majority Rule \ref{weighted_majority_rule_1_tau} has a distortion of at most $\ds = \max\{ \frac{\thresh+2}{\thresh}, \frac{3\thresh - 1}{\thresh + 1}\}$.
\end{theorem}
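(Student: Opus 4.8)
The plan is to assume without loss of generality that \P is the weighted-majority winner and bound $SC(\P)/SC(\Q)$ directly; the case where \Q wins is identical with the roles of \P and \Q swapped. Write $w$ for the weight assigned to the strongly-preferring voters (so $w=\thresh$ when $\thresh<\sqrt2+1$ and $w=\frac{\thresh+1}{\thresh-1}$ when $\thresh\ge\sqrt2+1$), and let $\rho=\max\{\frac{\thresh+2}{\thresh},\frac{3\thresh-1}{\thresh+1}\}$ be the target distortion (whose maximand switches at the same point $\thresh=\sqrt2+1$). Partition \voters into four groups by type: $A_s,A_w$ (voters preferring \P strongly, resp.\ weakly, with weights $w$ and $1$) and $B_s,B_w$ (preferring \Q strongly, resp.\ weakly). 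Give each voter $i$ a signed weight $\sigma_i=+\omega_i$ if $i$ prefers \Q and $\sigma_i=-\omega_i$ if $i$ prefers \P, where $\omega_i\in\{1,w\}$ is $i$'s weight; then ``\P wins'' is exactly the statement $\sum_i\sigma_i\le 0$.

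The heart of the argument is a single per-voter inequality, valid for every voter $i$:
\[
d(i,\P)-\rho\,d(i,\Q)\ \le\ \tfrac{1}{w}\,\sigma_i\, d(\P,\Q).
\]
Granting this, summing over all voters yields $SC(\P)-\rho\,SC(\Q)\le \frac{d(\P,\Q)}{w}\sum_i\sigma_i\le 0$, which is precisely $\actual\le\rho$. I would prove the inequality by checking the four types separately, in each case using only the triangle inequalities $|d(i,\P)-d(i,\Q)|\le d(\P,\Q)\le d(i,\P)+d(i,\Q)$ together with the defining strength bound of the type. For $B_s$ it is immediate: since $\rho\ge 1$ and $\sigma_i=w$, we get $d(i,\P)-\rho\,d(i,\Q)\le d(i,\P)-d(i,\Q)\le d(\P,\Q)=\frac1w\sigma_i\,d(\P,\Q)$. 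The remaining three amount to optimizing a linear function of $(d(i,\P),d(i,\Q))$ over the convex region cut out by the metric and strength constraints; because the objective is linear the extremum sits at a vertex of this region, and a short computation locates it.

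The reason the bound comes out to exactly $\rho$ is that the three nontrivial types reduce to three clean scalar inequalities in \thresh: the $A_w$ extremum (a voter at the midpoint of \P and \Q) forces $w(\rho-1)\ge 2$, the $A_s$ extremum (a voter at the strong/weak boundary between the candidates) forces $\rho\ge\frac{\thresh+2}{\thresh}$, and the $B_w$ extremum forces $w(\thresh-\rho)\le\thresh-1$. The main work is to verify that the prescribed $w$ and $\rho$ satisfy all three simultaneously in each of the two regimes $\thresh<\sqrt2+1$ and $\thresh\ge\sqrt2+1$. This goes through because of three pleasant facts: $w(\rho-1)=2$ holds with equality in both regimes; $\rho\ge\frac{\thresh+2}{\thresh}$ holds by the very definition of $\rho$ as a maximum (with equality precisely when $\thresh\le\sqrt2+1$); and the $B_w$ constraint reduces to $\thresh^2-2\thresh-1\le 0$, i.e.\ $\thresh\le\sqrt2+1$, in the first regime and holds with equality in the second.

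I expect the principal obstacle to be bookkeeping rather than insight: one must correctly identify the extremal vertex of each type's feasible region over the full two-dimensional metric constraint set, rather than over only the one-dimensional line configurations that appear in the lower-bound examples, and then confirm that the resulting three inequalities hold in both weight regimes. A useful sanity check along the way is that the binding vertices coincide with the tight instances behind Lemma \ref{lem_largest_lowerbound} and Lemma \ref{lem_2T_lowerbound}, which is exactly why the upper bound matches the lower bound obtained by applying Theorem \ref{thm_mT_lowerbound} with $\tau_1=1$ and $\tau_2=\thresh$, establishing that Weighted Majority Rule \ref{weighted_majority_rule_1_tau} is optimal.
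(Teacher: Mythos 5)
Your proposal is correct and follows essentially the same route as the paper's proof: the same four-way partition of voters, the same per-type bounds on $d(i,P)-\rho\,d(i,Q)$ expressed in units of $d(P,Q)$ (derived from the triangle inequality plus each type's strength constraint), and the same three scalar verifications ($w(\rho-1)=2$ with equality in both regimes, $\rho\ge\frac{\tau+2}{\tau}$, and $w(\tau-\rho)\le\tau-1$, the last reducing to $\tau^2-2\tau-1\le 0$ in the first regime). Your only departure is organizational: you round each per-type coefficient to exactly $\sigma_i/w$ so that summing lets the weighted-majority condition apply verbatim, whereas the paper keeps the sharp coefficients and invokes the condition on the aggregated $d(P,Q)$ term --- but the underlying inequalities and case analysis are identical.
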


\begin{proof}
 Denote the set of voters prefer $P$ with preference strengths $>\thresh$ as $A_2$, and with preference strengths $\le \thresh$ as $A_1$. Also denote the set of voters prefer $Q$ with preference strengths $>\thresh$ as $B_2$, and with preference strengths $\le \thresh$ as $B_1$. Without loss of generality, suppose we choose $P$ as the winner by our weighted majority rule. It means that if $\thresh \ge \sqrt{2} + 1$, $\frac{\thresh+1}{\thresh-1} |A_2| + |A_1| \ge |B_1| + \frac{\thresh+1}{\thresh-1} |B_2|$, and for $\thresh < \sqrt{2} + 1$,   $\thresh |A_2| + |A_1| \ge |B_1| + \thresh |B_2|$.\\

\noindent Proof Sketch and Main Idea: For all voters, consider their individual ratio of $\frac{d(i,P)}{d(i,Q)}$, regardless of which candidate they prefer. For voters who prefer \P this is their preference strength, and for voters who prefer \Q this is the reciprocal of their preference strength. If for all voters this was less than \ds, then clearly we have a distortion of at most $\ds$ by just summing them up. However, for some voters this ratio is higher and for others it is lower. If we think of charging $SC(P)$ to $SC(Q)$, we should charge the voters for whom this ratio is lower to the voters for whom this ratio is higher. Clearly, for any voters who prefer \P this ratio is less than 1 and so it is less than \ds. For voters who prefer \Q, some voters with weak preferences will allow us to save charge while others with stronger preferences will use up the extra charge. However, charging the voters to other voters seems quite difficult in this setting. The main new technique in our proof is to use $d(P,Q)$ as a sort of numeraire or store of value. We first perform the charging for all voters for whom this ratio is small, and we use $d(P,Q)$ to quantify how much extra charge is saved. We then show that this quantity of charge stored in terms of $d(P,Q)$ is sufficient to expend the charge from the remaining voters, yielding a distortion at most \ds.

 We first show some lemmas to bound $d(i, P)$ by $d(i, Q)$ and $d(P, Q)$ for every voter $i$.

 \begin{lemma}
	 \label{lemma-1-thresh-A2}
	  $\forall i \in A_2$, for any $\delta \ge 1$, 	$d(i, P) \le \delta d(i, Q) - \frac{\delta \thresh - 1}{\thresh + 1} d(P, Q)$.
 \end{lemma}
 \begin{proof}

  $\forall i \in A_2$, $d(i, P) \le \frac{1}{\thresh} d(i, Q)$. By the triangle inequality,

	\begin{align*}
	d(P,Q) \le d(i, P) + d(i, Q) \le \frac{1}{\thresh}d(i, Q) + d(i, Q) = \frac{1 + \thresh}{\thresh}d(i, Q)
  \end{align*}

  Thus $d(i,Q) \ge \frac{\thresh}{ \thresh + 1} d(P, Q)$. $\forall i \in A_2$,
  \begin{align*}
		d(i, P) &\le \frac{1}{\thresh} d(i, Q) \\
						&= \delta d(i, Q) - (\delta - \frac{1}{\thresh}) d(i, Q) \\
					  &\le \delta d(i, Q) - \frac{\delta \thresh - 1}{\thresh + 1} d(P, Q)
	\end{align*}
\end{proof}

\begin{lemma}
	\label{lemma-1-thresh-A1}
	 $\forall i \in A_1$, for any $\delta \ge 1$,	$d(i, P) \le \delta d(i, Q) - \frac{\delta-1}{2} d(P, Q)$.
\end{lemma}

\begin{proof}

	$\forall i \in A_1$, by the triangle inequality,

	\begin{align*}
	d(P,Q) \le d(i, P) + d(i, Q) \le d(i, Q) + d(i, Q) = 2 d(i, Q)
  \end{align*}

  Thus $d(i, Q) \ge \frac{1}{2} d(P, Q)$. $\forall i \in A_1$,
	\begin{align*}
		d(i, P) &\le d(i, Q)\\
						&= \delta d(i, Q) - (\delta - 1) d(i, Q) \\
						&\le \delta d(i, Q) - \frac{\delta-1}{2} d(P, Q)
	\end{align*}
\end{proof}

\begin{lemma}
	\label{lemma-1-thresh-B1}
	 $\forall j \in B_1$, for any $1 \le \delta \le \thresh$,	$d(j, P) \le \delta d(j, Q) + \frac{\thresh - \delta}{\thresh - 1} d(P, Q)$.

	 \hspace{2.1cm} $\forall j \in B_1$, for any $\delta > \thresh$,	$d(j, P) \le \delta d(j, Q) - \frac{\delta - \thresh}{\thresh + 1} d(P, Q)$.
\end{lemma}

\begin{proof}
	First consider the case that $1 \le \delta \le \thresh$.\\

	$\forall j \in B_1$, $d(j, P) \le \thresh d(j, Q)$. Also, by the triangle inequality, $d(j, P) \le d(j, Q) + d(P, Q)$. By a linear combination of these two inequalities,

		\begin{align*}
		d(j, P) &\le \frac{\delta - 1}{\thresh - 1} \thresh d(j, Q) + (1- \frac{\delta - 1}{\thresh - 1})(d(j, Q) + d(P, Q)) \\
		&\le \delta d(j, Q) + \frac{\thresh - \delta}{\thresh - 1} d(P, Q)
	  \end{align*}

	Then consider the case that $\delta > \thresh$.\\

 $\forall j \in B_1$, $d(j, P) \le \thresh d(j, Q)$. By the triangle inequality,

	\begin{align*}
	d(P,Q) \le d(j, P) + d(j, Q) \le \thresh d(j, Q) + d(j, Q) = (1 + \thresh) d(j, Q)
	\end{align*}

	Thus $d(j, Q) \ge \frac{1}{1 + \thresh} d(P, Q)$. $\forall j \in B_1$,

  \begin{align*}
		d(j, P) &\le \thresh d(j, Q) \\
						&= \delta d(j, Q) - (\delta - \thresh) d(j, Q) \\
						&\le \delta d(j, Q) - \frac{\delta - \thresh}{\thresh + 1} d(P, Q)
	\end{align*}

\end{proof}

\begin{lemma}
	\label{lemma-1-thresh-B2}
	 $\forall j \in B_2$, $d(j, P) \le d(j, Q) + d(P, Q)$.
\end{lemma}

\begin{proof}
	This lemma follows directly by the triangle inequality.
\end{proof}

Using the four lemmas above, sum up for all voters, for any $\delta > \thresh$,

\begin{align}
	\label{thm-1-thresh-eq1}
	&\sum_{i \in A_1} d(i, P) + \sum_{i \in A_2} d(i, P) +\sum_{j \in B_1} d(j, P) + \sum_{j \in B_2} d(j, P) \nonumber \\
	&\le \delta \sum_{i \in A_1} d(i, Q) - |A_1| \frac{\delta-1}{2} d(P, Q)
	+ \delta \sum_{i \in A_2} d(i, Q) - |A_2| \frac{\delta \thresh - 1}{\thresh + 1} d(P, Q) \nonumber \\
	&+ \delta \sum_{j \in B_1}d(j, Q) - |B_1| \frac{\delta - \thresh}{\thresh + 1} d(P, Q)
	+ \sum_{j \in B_2}d(j, Q) + |B_2| d(P, Q) \nonumber \\
	&\le \delta \sum_i d(i, Q) + (- |A_1| \frac{\delta-1}{2} - |A_2| \frac{\delta \thresh - 1}{\thresh + 1} - |B_1| \frac{\delta - \thresh}{\thresh + 1} + |B_2|) d(P, Q)
\end{align}

Similarly, for any $1 \le \delta \le \thresh$,
\begin{align}
	\label{thm-1-thresh-eq2}
	\sum_i d(i, P) \le \delta \sum_i d(i, Q) + (- |A_1| \frac{\delta-1}{2} - |A_2| \frac{\delta \thresh - 1}{\thresh + 1} + |B_1| \frac{\thresh-\delta}{\thresh - 1} + |B_2|) d(P, Q)
\end{align}

Now we prove Theorem \ref{thm-1-thresh} by considering two cases: $\thresh \ge \sqrt{2} + 1$ and $\thresh < \sqrt{2} + 1$.\\

  \textbf{Case 1, $\thresh \ge \sqrt{2} + 1$, and $\frac{\thresh+1}{\thresh-1} |A_2| + |A_1| \ge |B_1| + \frac{\thresh+1}{\thresh-1} |B_2|$}\\

   We prove the distortion is at most $\frac{3\thresh - 1}{\thresh + 1}$ in this case. Set $\delta = \frac{3\thresh - 1}{\thresh + 1}$. Note that when $\thresh \ge 1$, $\delta = \frac{3\thresh - 1}{\thresh + 1} \le \thresh$. By inequality \ref{thm-1-thresh-eq2}, if we can prove $(- |A_1| \frac{\delta - 1}{2} - |A_2| \frac{\delta \thresh - 1}{\thresh + 1} + |B_1| \frac{\thresh-\delta}{\thresh - 1} + |B_2|) \le 0$, then $\sum_i d(i, P) \le \delta \sum_i d(i,Q)$. \\

	 When $\delta = \frac{3\thresh - 1}{\thresh + 1}$,

	 \begin{align*}
		 & - |A_1| \frac{\delta-1}{2} - |A_2| \frac{\delta \thresh - 1}{\thresh + 1} + |B_1| \frac{\thresh-\delta}{\thresh - 1} + |B_2| \\
		 &= - \frac{ \thresh - 1}{\thresh + 1}|A_1| - \frac{3\thresh^2 - 2 \thresh - 1}{(\thresh + 1)^2} |A_2| + \frac{ \thresh - 1}{\thresh + 1}|B_1| + |B_2|\\
		 &\le - \frac{ \thresh - 1}{\thresh + 1}|A_1| -  |A_2| + \frac{ \thresh - 1}{\thresh + 1}|B_1| + |B_2|\\
		 &\le 0
	 \end{align*}

	 The second to last line follows because $\frac{3\thresh^2 - 2 \thresh - 1}{(\thresh + 1)^2} \ge 1$ when $\thresh \ge \sqrt{2} + 1$. The last line follows because $\frac{\thresh+1}{\thresh-1} |A_2| + |A_1| \ge |B_1| + \frac{\thresh+1}{\thresh-1} |B_2|$.\\

	\textbf{Case 2, $\thresh < \sqrt{2} + 1$, and $\thresh |A_2| + |A_1| \ge |B_1| + \thresh |B_2|$}\\

	We prove the distortion is at most $\frac{\thresh + 2}{\thresh}$ in this case. Set $\delta = \frac{\thresh + 2}{\thresh}$. Furthermore, we consider two subcases that $1 \le \thresh < 2$ and $2 \le \thresh < \sqrt{2} + 1 $.\\

	\textbf{Case 2.1, $2 \le \thresh < \sqrt{2} + 1$}\\

	When $2 \le \thresh < \sqrt{2} + 1$ and $\delta = \frac{\thresh + 2}{\thresh}$, it is easy to show that $1 \le \delta \le \thresh$. By inequality \ref{thm-1-thresh-eq2}, if we can prove $(- |A_1| \frac{\delta-1}{2} - |A_2| \frac{\delta \thresh - 1}{\thresh + 1} + |B_1| \frac{\thresh - \delta}{\thresh - 1} + |B_2|) \le 0$, then $\sum_i d(i, P) \le \delta \sum_i d(i,Q)$. \\

	When $\delta = \frac{\thresh + 2}{\thresh}$,

	\begin{align*}
		& - |A_1| \frac{\delta-1}{2} - |A_2| \frac{\delta \thresh - 1}{\thresh + 1} + |B_1| \frac{\thresh-\delta}{\thresh - 1} + |B_2| \\
		&= - \frac{1}{\thresh}|A_1| - |A_2| + \frac{\thresh^2 - \thresh -2}{\thresh(\thresh - 1)}|B_1| + |B_2|\\
		&\le - \frac{1}{\thresh}|A_1| - |A_2| + \frac{1}{\thresh}|B_1| + |B_2|\\
		&\le 0
	\end{align*}

  The second to last line follows because $\frac{\thresh^2 - \thresh -2}{\thresh(\thresh - 1)} \le \frac{1}{\thresh}$ when $2 \le \thresh < \sqrt{2} + 1$. The last line follows because $\thresh |A_2| + |A_1| \ge |B_1| + \thresh |B_2|$.\\

	\textbf{Case 2.2, $1 \le \thresh < 2$}\\

	Because $1 \le \thresh < 2$ and $\delta = \frac{\thresh + 2}{\thresh}$, it is easy to show that $\delta > \thresh$. By inequality \ref{thm-1-thresh-eq1}, if we can prove $ (- |A_1| \frac{\delta-1}{2} - |A_2| \frac{\delta \thresh - 1}{\thresh + 1} - |B_1| \frac{\delta - \thresh}{\thresh + 1} + |B_2|) \le 0$, then $\sum_i d(i, P) \le \delta \sum_i d(i,Q)$. When $\delta = \frac{\thresh + 2}{\thresh}$,

	\begin{align*}
		& - |A_1| \frac{\delta-1}{2} - |A_2| \frac{\delta \thresh - 1}{\thresh + 1} - |B_1| \frac{\delta - \thresh}{\thresh + 1} + |B_2| \\
		&= - \frac{1}{\thresh}|A_1| - |A_2| + \frac{\thresh - 2}{\thresh}|B_1| + |B_2|\\
		&\le - \frac{1}{\thresh}|A_1| - |A_2| + \frac{1}{\thresh}|B_1| + |B_2|\\
		&\le 0
	\end{align*}

	The second to last line follows because $\frac{\thresh - 2}{\thresh} < 0 < \frac{1}{\thresh}$ when $1 \le \thresh < 2$. The last line follows because $\thresh |A_2| + |A_1| \ge |B_1| + \thresh |B_2|$.

  Thus, we have shown that the distortion is at most $\frac{3\thresh - 1}{\thresh + 1}$ when $\thresh \ge \sqrt{2} + 1$, and at most $\frac{\thresh + 2}{\thresh}$ when $\thresh < \sqrt{2} + 1$. Note that $\frac{3\thresh - 1}{\thresh + 1} \ge \frac{\thresh + 2}{\thresh}$ when $\thresh \ge \sqrt{2} + 1$, and $\frac{3\thresh - 1}{\thresh + 1} < \frac{\thresh + 2}{\thresh}$ when $\thresh < \sqrt{2} + 1$. Thus, the distortion of the weighted majority rule in this setting is $\max\{\frac{3\thresh - 1}{\thresh + 1}, \frac{\thresh + 2}{\thresh} \}$.
\end{proof}

Note that Weighted Majority Rule \ref{weighted_majority_rule_1_tau} is not the only rule that gives the optimal
distortion for two candidates. Consider the following simpler rule:

\begin{decision_rule}
\label{weighted_majority_rule_1_tau_simple}
 Given voters' preferences and a threshold $\thresh$ for two candidates, assign weight $\frac{\thresh+1}{\thresh-1}$ to all the voters with preference strengths $> \thresh$ and weight $1$ to all the voters with preference strengths $\le \thresh$.
\end{decision_rule}

This rule gives the same distortion as Weighted Majority Rule \ref{weighted_majority_rule_1_tau} for two candidates, as we prove below. When extending these rules to more than 2 candidates, however, Weighted Majority Rule \ref{weighted_majority_rule_1_tau} allows us to form better mechanisms, thus sacrificing a small amount of simplicity for an improvement in distortion. We discuss this in the next section.

\begin{theorem}Weighted Majority Rule \ref{weighted_majority_rule_1_tau_simple} has a distortion of at most $\max\{\frac{3\thresh - 1}{\thresh + 1}, \frac{\thresh + 2}{\thresh} \}$.
\end{theorem}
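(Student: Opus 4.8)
The plan is to reuse the machinery of Theorem~\ref{thm-1-thresh} almost verbatim, changing only the winning condition. First I would keep the notation $A_1,A_2,B_1,B_2$ and observe that Lemmas~\ref{lemma-1-thresh-A2}--\ref{lemma-1-thresh-B2}, together with the summed inequalities~\ref{thm-1-thresh-eq1} and~\ref{thm-1-thresh-eq2}, are purely metric statements about the partition of the voters: they make no reference to the weights used by the decision rule. Hence they carry over unchanged. The only place the rule enters is the inequality certifying that $P$ wins. Under Weighted Majority Rule~\ref{weighted_majority_rule_1_tau_simple}, $P$ winning gives, for every $\thresh>1$,
$$\tfrac{\thresh+1}{\thresh-1}|A_2| + |A_1| \ge |B_1| + \tfrac{\thresh+1}{\thresh-1}|B_2|,$$
equivalently, multiplying by $\tfrac{\thresh-1}{\thresh+1}>0$, the normalized form $(W')$: $\;|A_2| + \tfrac{\thresh-1}{\thresh+1}|A_1| \ge |B_2| + \tfrac{\thresh-1}{\thresh+1}|B_1|$.

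For $\thresh \ge \sqrt{2}+1$ there is nothing new: in this regime Rule~\ref{weighted_majority_rule_1_tau_simple} assigns exactly the weights of Rule~\ref{weighted_majority_rule_1_tau}, so the winning condition is identical and Case~1 of Theorem~\ref{thm-1-thresh} applies word for word, giving distortion at most $\tfrac{3\thresh-1}{\thresh+1}$. All the work is in the range $\thresh<\sqrt{2}+1$, where I would again set $\delta=\tfrac{\thresh+2}{\thresh}$ and split into $2\le\thresh<\sqrt{2}+1$ (here $1\le\delta\le\thresh$, so inequality~\ref{thm-1-thresh-eq2} applies) and $1<\thresh<2$ (here $\delta>\thresh$, so inequality~\ref{thm-1-thresh-eq1} applies). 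Since the $d(P,Q)$-coefficients depend only on $\delta$ and not on the weights, plugging in $\delta=\tfrac{\thresh+2}{\thresh}$ produces, exactly as in Theorem~\ref{thm-1-thresh}, a coefficient of the form $-\tfrac1\thresh|A_1| - |A_2| + c_\thresh|B_1| + |B_2|$, with $c_\thresh=\tfrac{(\thresh-2)(\thresh+1)}{\thresh(\thresh-1)}$ in the first subcase and $c_\thresh=\tfrac{\thresh-2}{\thresh}$ in the second.

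The key step, and the one that genuinely differs from Theorem~\ref{thm-1-thresh}, is discharging this coefficient against the new condition $(W')$ instead of against Rule~\ref{weighted_majority_rule_1_tau}'s condition. I would bound the coefficient above by $-\tfrac{\thresh-1}{\thresh+1}|A_1| - |A_2| + \tfrac{\thresh-1}{\thresh+1}|B_1| + |B_2|$, which is $\le 0$ precisely by $(W')$. This upper bound needs two elementary facts, both valid exactly on the relevant range: $\tfrac1\thresh \ge \tfrac{\thresh-1}{\thresh+1}$ (handling the $|A_1|$ term) and $c_\thresh \le \tfrac{\thresh-1}{\thresh+1}$ (handling the $|B_1|$ term). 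Both reduce to $\thresh^2-2\thresh-1\le 0$, i.e. $\thresh\le 1+\sqrt{2}$, which holds throughout Case~2; in the second subcase $c_\thresh<0$ makes its bound immediate. This yields $\sum_i d(i,P)\le\tfrac{\thresh+2}{\thresh}\sum_i d(i,Q)$, and combining with Case~1 gives the claimed $\max\{\tfrac{3\thresh-1}{\thresh+1},\tfrac{\thresh+2}{\thresh}\}$.

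The main obstacle is conceptual rather than computational. One is tempted to simply invoke Case~2 of Theorem~\ref{thm-1-thresh} by arguing that the new winning condition, which uses the \emph{larger} weight $\tfrac{\thresh+1}{\thresh-1}\ge\thresh$ when $\thresh<\sqrt{2}+1$, implies the old condition with weight $\thresh$. This implication is false whenever $|A_2|>|B_2|$, since scaling the term $|A_2|-|B_2|$ by a larger weight does not preserve the inequality when that term is positive. Hence the finish must be redone directly against $(W')$, and the whole argument hinges on the single inequality $\thresh\le 1+\sqrt{2}$ defining Case~2 — which is exactly why the simpler rule recovers the optimal distortion on this range (and signals why it would not for larger $\thresh$). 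The degenerate endpoint $\thresh=1$, where the weight $\tfrac{\thresh+1}{\thresh-1}$ is infinite, reduces to ordinary majority rule and gives distortion $3=\tfrac{\thresh+2}{\thresh}$.
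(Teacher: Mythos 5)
Your proposal is correct and follows essentially the same route as the paper's own proof: the same three-way case split ($1\le\thresh<2$, $2\le\thresh<\sqrt{2}+1$, $\thresh\ge\sqrt{2}+1$), the same choices of $\delta$, and the same discharge of the $d(P,Q)$ coefficient against the Rule~\ref{weighted_majority_rule_1_tau_simple} winning condition via the bounds $\frac{1}{\thresh}\ge\frac{\thresh-1}{\thresh+1}$ and $c_\thresh\le\frac{\thresh-1}{\thresh+1}$, both of which indeed reduce to $\thresh\le 1+\sqrt{2}$. Your added observation that one cannot simply deduce the Rule~\ref{weighted_majority_rule_1_tau} winning condition from the Rule~\ref{weighted_majority_rule_1_tau_simple} condition (the implication fails when $|A_2|>|B_2|$) is a correct and worthwhile clarification of why the coefficient discharge must be redone, though it does not change the substance of the argument.
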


\begin{proof}
	Denote the set of voters prefer $P$ with preference strengths $>\thresh$ as $A_2$, and with preference strengths $\le \thresh$ as $A_1$. Also denote the set of voters prefer $Q$ with preference strengths $>\thresh$ as $B_2$, and with preference strengths $\le \thresh$ as $B_1$. Without loss of generality, suppose we choose $P$ as the winner by Weighted Majority Rule \ref{weighted_majority_rule_1_tau_simple}. Thus, $\frac{\thresh+1}{\thresh-1} |A_2| + |A_1| \ge |B_1| + \frac{\thresh+1}{\thresh-1} |B_2|$. \\

  Similar to the proof of Theorem \ref{thm-1-thresh}, we discuss three cases based on different values of $\thresh$.\\

	\textbf{Case 1, $1 \le \thresh < 2$}\\

	Set $\delta = \frac{\thresh + 2}{\thresh}$. Because $1 \le \thresh < 2$, it is easy to show that $\delta > \thresh$. By inequality \ref{thm-1-thresh-eq1}, if we can prove $ (- |A_1| \frac{\delta-1}{2} - |A_2| \frac{\delta \thresh - 1}{\thresh + 1} - |B_1| \frac{\delta - \thresh}{\thresh + 1} + |B_2|) \le 0$, then $\sum_i d(i, P) \le \delta \sum_i d(i,Q)$. When $\delta = \frac{\thresh + 2}{\thresh}$,

	\begin{align*}
		& - |A_1| \frac{\delta-1}{2} - |A_2| \frac{\delta \thresh - 1}{\thresh + 1} - |B_1| \frac{\delta - \thresh}{\thresh + 1} + |B_2| \\
		&= - \frac{1}{\thresh}|A_1| - |A_2| + \frac{\thresh - 2}{\thresh}|B_1| + |B_2|\\
		&\le - \frac{\thresh - 1}{\thresh + 1}|A_1| - |A_2| + \frac{\thresh - 1}{\thresh + 1}|B_1| + |B_2|\\
		&\le 0
	\end{align*}

	The second to last line follows because $\frac{1}{\thresh} \ge \frac{\thresh - 1}{\thresh + 1}$ and $\frac{\thresh - 2}{\thresh} < 0 < \frac{\thresh - 1}{\thresh + 1}$ when $1 \le \thresh < 2$. The last line follows because $\frac{\thresh + 1}{\thresh - 1} |A_2| + |A_1| \ge |B_1| + \frac{\thresh + 1}{\thresh - 1} |B_2|$.\\

	\textbf{Case 2, $2 \le \thresh < \sqrt{2} + 1$}\\

	Set $\delta = \frac{\thresh + 2}{\thresh}$. When $2 \le \thresh < \sqrt{2} + 1$, it is easy to show that $1 \le \delta \le \thresh$. By inequality \ref{thm-1-thresh-eq2}, if we can prove $(- |A_1| \frac{\delta-1}{2} - |A_2| \frac{\delta \thresh - 1}{\thresh + 1} + |B_1| \frac{\thresh - \delta}{\thresh - 1} + |B_2|) \le 0$, then $\sum_i d(i, P) \le \delta \sum_i d(i,Q)$. \\

	When $\delta = \frac{\thresh + 2}{\thresh}$,

	\begin{align*}
		& - |A_1| \frac{\delta-1}{2} - |A_2| \frac{\delta \thresh - 1}{\thresh + 1} + |B_1| \frac{\thresh-\delta}{\thresh - 1} + |B_2| \\
		&= - \frac{1}{\thresh}|A_1| - |A_2| + \frac{\thresh^2 - \thresh -2}{\thresh(\thresh - 1)}|B_1| + |B_2|\\
		&\le - \frac{\thresh - 1}{\thresh + 1}|A_1| - |A_2| + \frac{\thresh - 1}{\thresh + 1}|B_1| + |B_2|\\
		&\le 0
	\end{align*}

	The second to last line follows because $\frac{1}{\thresh} \ge \frac{\thresh - 1}{\thresh + 1}$ and $\frac{\thresh^2 - \thresh -2}{\thresh(\thresh - 1)} \le \frac{\thresh - 1}{\thresh + 1}$ when $2 \le \thresh < \sqrt{2} + 1$. The last line follows because $\frac{\thresh + 1}{\thresh - 1} |A_2| + |A_1| \ge |B_1| + \frac{\thresh + 1}{\thresh - 1} |B_2|$.\\

	\textbf{Case 3, $\thresh \ge \sqrt{2} + 1$}\\

	 Set $\delta = \frac{3\thresh - 1}{\thresh + 1}$. Note that when $\thresh \ge \sqrt{2} + 1$, $1 \le \delta \le \thresh$. By inequality \ref{thm-1-thresh-eq2}, if we can prove $(- |A_1| \frac{\delta - 1}{2} - |A_2| \frac{\delta \thresh - 1}{\thresh + 1} + |B_1| \frac{\thresh-\delta}{\thresh - 1} + |B_2|) \le 0$, then $\sum_i d(i, P) \le \delta \sum_i d(i,Q)$. \\

	 When $\delta = \frac{3\thresh - 1}{\thresh + 1}$,

	 \begin{align*}
		 & - |A_1| \frac{\delta-1}{2} - |A_2| \frac{\delta \thresh - 1}{\thresh + 1} + |B_1| \frac{\thresh-\delta}{\thresh - 1} + |B_2| \\
		 &= - \frac{ \thresh - 1}{\thresh + 1}|A_1| - \frac{3\thresh^2 - 2 \thresh - 1}{(\thresh + 1)^2} |A_2| + \frac{ \thresh - 1}{\thresh + 1}|B_1| + |B_2|\\
		 &\le - \frac{ \thresh - 1}{\thresh + 1}|A_1| -  |A_2| + \frac{ \thresh - 1}{\thresh + 1}|B_1| + |B_2|\\
		 &\le 0
	 \end{align*}

	 The second to last line follows because $\frac{3\thresh^2 - 2 \thresh - 1}{(\thresh + 1)^2} \ge 1$ when $\thresh \ge \sqrt{2} + 1$. The last line follows because $\frac{\thresh+1}{\thresh-1} |A_2| + |A_1| \ge |B_1| + \frac{\thresh+1}{\thresh-1} |B_2|$.

	 Thus, we proved that the distortion is at most $\frac{\thresh + 2}{\thresh}$ when $1 \le \thresh < \sqrt{2} + 1$, and at most $\frac{3\thresh - 1}{\thresh + 1}$ when $\thresh \ge \sqrt{2} + 1$.
\end{proof}

\subsection{Multiple candidates (given preferences and a threshold $\thresh$)}

In this section, we discuss mechanisms with small distortion for multiple ($\geq 3$) candidates. We assume that we are given the ordinal preference ordering of each voter for all the candidates, as well as an indication whether, for every pair of candidates, the voter has a strong preference ($>\thresh$), or a weak preference ($\leq\thresh$). While this certainly requires more than a single bit of information for every voter, we believe that such data is reasonably possible to collect: it is usually easy for users to express whether they prefer option A to option B {\em strongly} or {\em weakly}, as opposed to trying to quantify exactly how strong their preference is. In reality we would need to compare only the obviously front-runner candidates in this way, and would not actually need this thresholded knowledge for {\em every} pair of candidates. As discussed in the Introduction, this information could also be reasonably estimated from other sources, such as the amount of monetary donations, attendance to political rallies, the amount of ``buzz" on social media, etc.

The mechanisms we consider are as follows. First, we create a weighted majority graph by choosing pairwise winners using Majority Rule \ref{weighted_majority_rule_1_tau}. Then we study the distortion of the winner(s) in the uncovered set \cite{moulin1986choosing} in this majority graph. Recall that if a candidate $P$ is in the uncovered set, it means that for any candidate $Z$, either $P$ beats $Z$ directly, or there exists another candidate $Q$ such that $P$ beats $Q$, and $Q$ beats $Z$. The uncovered set is always known to be non-empty, and for example the Copeland mechanism always chooses a candidate in the uncovered set.

We begin with the following useful lemma due to Goel at al. \cite{goel2017metric}

\begin{lemma} \label{Goel} \emph{(Goel et al, 2017)}\\
If a majority of voters prefer $P$ to $Q$, then $SC(P) \leq 2 \cdot SC(Z) + SC(Q)$ for any other possible candidate $Z$.
\end{lemma}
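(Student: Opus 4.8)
The plan is to partition the voters by their preference between $P$ and $Q$ and bound the two groups separately. Let $S = \{i \in \voters : d(i,P) \le d(i,Q)\}$ be the set that (weakly) prefers $P$; the hypothesis that a majority prefers $P$ to $Q$ means exactly that $|S| \ge n/2$, and hence $|S| \ge |\voters \setminus S|$. Writing $SC(P) = \sum_{i \in S} d(i,P) + \sum_{j \notin S} d(j,P)$, the first sum is controlled for free: since $d(i,P) \le d(i,Q)$ for every $i \in S$, we get $\sum_{i\in S} d(i,P) \le \sum_{i\in S} d(i,Q) \le SC(Q)$, which is the single copy of $SC(Q)$ appearing in the target bound. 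All of the difficulty is therefore in charging the minority cost $\sum_{j\notin S} d(j,P)$ to $2\,SC(Z)$.

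For the minority voters I would route each distance to $P$ through the reference candidate $Z$ by the triangle inequality, $d(j,P) \le d(j,Z) + d(Z,P)$, so that $\sum_{j\notin S} d(j,P) \le SC(Z) + |\voters \setminus S|\cdot d(Z,P)$ (using $\sum_{j\notin S} d(j,Z) \le SC(Z)$). This isolates the single global quantity $d(Z,P)$, and the remaining task is to absorb $|\voters\setminus S|\cdot d(Z,P)$ into one more copy of $SC(Z)$. The lever is the inequality $|\voters\setminus S| \le |S|$: I would fix an injection $\phi$ from the minority into $S$ and bound the shared term through each voter's partner, $d(Z,P) \le d(\phi(j),Z) + d(\phi(j),P)$. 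Because $\phi$ is injective, summing over the minority turns these partner-contributions into a sum over distinct majority voters, which can be matched against the $Z$-distance sum and the preference inequality already in hand.

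The step I expect to be the crux is making the factor of two land on $SC(Z)$ rather than on $SC(Q)$. Executing the pairing naively --- replacing $d(\phi(j),P)$ by $d(\phi(j),Q)$ via the preference --- charges the extra term to the loser and yields only the weaker estimate $SC(Z) + 2\,SC(Q)$; this is genuinely different from the stated $2\,SC(Z) + SC(Q)$ (the two agree only when $SC(Z) = SC(Q)$) and, crucially, it is the $SC(Q)$-coefficient of $1$ that later lets the uncovered-set argument substitute a pairwise bound $SC(Q) \le \rho\, SC(Z)$ and conclude with the tight constant $\rho + 2$ (e.g.\ $5$ at $\thresh = 1$) rather than $2\rho + 1$. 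Since the statement is quoted from Goel et al., I would reconcile the bookkeeping using that in every application $Z$ may be taken to be the optimal candidate, and finally verify the constants against the tight instances in which half of the voters sit at $Z$ near the midpoint of $P$ and $Q$.
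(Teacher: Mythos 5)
Your decomposition and your handling of the majority side are fine, and you have correctly located the crux --- but your proposal stops at that crux rather than resolving it, so as it stands it does not prove the lemma. (Note the paper itself offers no proof; the result is quoted from Goel et al., so the attempt must stand alone.) Executed as described, the route $d(j,P) \le d(j,Z) + d(Z,P)$ followed by $d(Z,P) \le d(\phi(j),Z) + d(\phi(j),P)$ forces you to discharge the leftover $d(\phi(j),P)$ with the partner's preference inequality $d(\phi(j),P) \le d(\phi(j),Q)$, and that is a \emph{second} copy of a majority voter's $Q$-distance (the first was spent on that voter's own term). No choice of injection fixes this: any detour through $d(Z,P)$ must terminate in some majority voter's candidate distance, so this scheme inherently yields $SC(P) \le SC(Z) + 2\,SC(Q)$, which you concede is genuinely weaker. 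The proposed repair --- appealing to $Z$ being the optimal candidate in applications and ``verifying constants against tight instances'' --- is not an argument; and as you yourself observe, the weaker bound breaks the downstream uncovered-set computation (giving $2\rho+1$ in place of $\rho+2$), so the gap is not cosmetic.

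The missing idea is to route the minority voter through the paired \emph{voter} as a point of the metric space, never through $d(Z,P)$ or $d(P,Q)$, and to bounce back to the minority voter's own $Q$-distance. For $j \notin S$ paired injectively with $i = \phi(j) \in S$:
\[
d(j,P) \;\le\; d(i,j) + d(i,P) \;\le\; d(i,j) + d(i,Q) \;\le\; 2\,d(i,j) + d(j,Q) \;\le\; 2\bigl(d(i,Z) + d(j,Z)\bigr) + d(j,Q),
\]
where the second step uses $i$'s preference, the third bounces $d(i,Q)$ back to $d(j,Q)$ via a second copy of $d(i,j)$ (triangle inequality $d(i,Q) \le d(i,j) + d(j,Q)$), and the last uses $d(i,j) \le d(i,Z) + d(j,Z)$. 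Now the doubled quantity is the voter-to-voter distance, which is absorbed entirely by $Z$-distances, and the only $Q$-distance consumed is $d(j,Q)$ itself, leaving every majority voter's $d(i,Q)$ free for its own charge $d(i,P) \le d(i,Q)$. Summing over the injection, $\sum_{j \notin S} d(j,P) \le 2\,SC(Z) + \sum_{j \notin S} d(j,Q)$, and adding $\sum_{i \in S} d(i,P) \le \sum_{i \in S} d(i,Q)$ gives exactly $SC(P) \le 2\,SC(Z) + SC(Q)$, for $Z$ any point of the metric space.
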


We first show that while this lemma certainly does not hold for all pairwise majority rules, this lemma can be generalized specifically for Majority Rule \ref{weighted_majority_rule_1_tau}. We then use this to prove bounds on the distortion of the above ``uncovered set" mechanisms. This lemma is precisely why we use Majority Rule \ref{weighted_majority_rule_1_tau} instead of, for example, simpler conditions such as Majority Rule \ref{weighted_majority_rule_1_tau_simple}, since while their distortion for two candidates remains the same, the theorem below fails to hold.

\begin{theorem}
\label{thm_PQZ_1_tau}
If Majority Rule \ref{weighted_majority_rule_1_tau} selects P over Q, then $SC(P) \leq 2 \cdot SC(Z) + SC(Q)$ where $Z$ can be any point in the metric space.
\end{theorem}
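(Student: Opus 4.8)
The plan is to reduce the claim to showing $\sum_{i \in \voters}\big(d(i,\P) - d(i,\Q) - 2\,d(i,Z)\big) \le 0$ and to establish this class by class, feeding triangle inequalities through the point $Z$ exactly as in the classical proof of Lemma \ref{Goel}, but now accounting for the weights. Write $D = d(Z,\P) - d(Z,\Q)$. For a voter $j$ preferring \Q (so $j \in B_1 \cup B_2$), adding $d(j,\P) \le d(j,Z) + d(Z,\P)$ and $d(j,\Q) \ge d(Z,\Q) - d(j,Z)$ yields the upper bound $d(j,\P) - d(j,\Q) \le 2\,d(j,Z) + D$. For a voter $i$ preferring \P (so $i \in A_1 \cup A_2$) we have $d(i,\P) - d(i,\Q) \le 0$, so these voters never increase the sum; their job is instead to supply a lower bound on $d(i,Z)$ that ``covers'' the surplus $D$ contributed by the \Q-voters.

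First dispose of the easy regime $D \le 0$ (that is, $Z$ at least as close to \P as to \Q): then each \Q-voter obeys $d(j,\P) - d(j,\Q) \le 2\,d(j,Z)$, and summing with the $\le 0$ bound for the \P-voters already gives $SC(\P) - SC(\Q) \le 2\,SC(Z)$. The real work is the regime $D > 0$, where $Z$ is closer to \Q; there the \P-preferring voters are pushed away from $Z$, which is precisely what makes their distances to $Z$ large. In this regime it suffices to prove the covering inequality $(|B_1| + |B_2|)\,D \le 2\sum_{i \in A_1 \cup A_2} d(i,Z)$.

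The quantitative heart is to lower bound $d(i,Z)$ for each \P-preferring class with a coefficient matching the weight the rule assigns it. The reverse-triangle inequalities $d(i,Z) \ge d(Z,\P) - d(i,\P)$ and $d(i,Z) \ge d(i,\Q) - d(Z,\Q)$ together with $d(i,\P) \le d(i,\Q)$ give $2\,d(i,Z) \ge D$ for every weak \P-voter $i \in A_1$. For a strong \P-voter $i \in A_2$, the extra fact $d(i,\P) \le \frac{1}{\thresh} d(i,\Q)$ sharpens this to $2\,d(i,Z) \ge \frac{2\big(\thresh\, d(Z,\P) - d(Z,\Q)\big)}{\thresh + 1}$, tight when $i$ sits at the balance point making both reverse-triangle inequalities equalities. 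Substituting these collapses the covering inequality to a relation linear in $|A_1|,|A_2|,|B_1|,|B_2|$ and in $d(Z,\P),d(Z,\Q)$, which I then discharge using the winning condition of Weighted Majority Rule \ref{weighted_majority_rule_1_tau} (the condition $\frac{\thresh+1}{\thresh-1}|A_2| + |A_1| \ge |B_1| + \frac{\thresh+1}{\thresh-1}|B_2|$ for $\thresh \ge \sqrt2+1$, and the $\thresh$-weighted version otherwise).

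The main obstacle is exactly this matching step, and it is why the theorem holds for Rule \ref{weighted_majority_rule_1_tau} but fails for the simpler Rule \ref{weighted_majority_rule_1_tau_simple}. Taking $Z$ right at \Q (i.e. $d(Z,\Q)\to 0$) produces only the coefficient $\frac{2\thresh}{\thresh+1}$ on $|A_2|$, which is strictly below the rule's weight $\frac{\thresh+1}{\thresh-1}$ when $\thresh < 2+\sqrt5$; hence a purely per-voter worst-case accounting is not tight enough. The resolution is that the \Q-voter upper bound $2\,d(j,Z)+D$ is itself loose precisely when $d(Z,\Q)$ is small — a weak \Q-voter near \Q then has $d(j,\P)-d(j,\Q)-2\,d(j,Z) < D$, possibly negative — so the slack in the $A_2$ covering and the slack in the $B$ surplus are complementary as $Z$ moves. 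Carrying the dependence on $d(Z,\P),d(Z,\Q)$ through both sides and optimizing over the location of $Z$ reduces the statement to the same family of $\thresh$-dependent cases as in the proof of Theorem \ref{thm-1-thresh} (with break points at $\thresh = 2$ and $\thresh = \sqrt2+1$); and for the boundary configurations where $Z$ is essentially \Q one can alternatively appeal to the two-candidate bound $SC(\P) \le \max\{\tfrac{3\thresh-1}{\thresh+1}, \tfrac{\thresh+2}{\thresh}\}\,SC(\Q) \le 3\,SC(\Q)$ of Theorem \ref{thm-1-thresh}, since $3\,SC(\Q) = 2\,SC(Z)+SC(\Q)$ when $Z=\Q$. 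The symmetric case $\thresh < \sqrt2+1$ (weight $\thresh$) is handled identically with the corresponding coefficients.
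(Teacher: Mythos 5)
Your reduction contains a genuine gap: the covering inequality $(|B_1|+|B_2|)\,D \le 2\sum_{i \in A_1 \cup A_2} d(i,Z)$ that you propose to discharge is actually \emph{false} under the winning condition of Weighted Majority Rule \ref{weighted_majority_rule_1_tau}, and the repair you sketch is an assertion rather than an argument. Concretely, take $\tau = 3$ (so the rule's strong-voter weight is $\frac{\tau+1}{\tau-1}=2$), candidates on a line with $P$ at $0$ and $Q$ at $1$, a set $A_2$ of $n+1$ voters at position $\frac14-\epsilon$, a set $B_1$ of $2n$ voters at position $\frac34$, and $Z = Q$. The rule elects $P$ (weight $2(n+1) > 2n$), and $D = d(Z,P)-d(Z,Q) = 1$, yet $(|B_1|+|B_2|)D = 2n$ while $2\sum_{A} d(i,Z) \approx \frac32(n+1)$, so the covering inequality fails for large $n$ (the theorem itself still holds here with room to spare, because the bound $d(j,P)-d(j,Q)-2d(j,Z) \le D$ is far from tight for these $B_1$ voters). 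You correctly diagnose the shortfall ($\frac{2\tau}{\tau+1} < \frac{\tau+1}{\tau-1}$ whenever $\tau < 2+\sqrt5$), but the proposed fix---``the slack in the $A_2$ covering and the slack in the $B$ surplus are complementary as $Z$ moves,'' followed by ``optimizing over the location of $Z$''---is not carried out and cannot be carried out in the form stated: the slack in the $B$-voter bound depends on each voter's own distances $d(j,Q)$ and $d(j,Z)$, not merely on $d(Z,P)$ and $d(Z,Q)$, so tracking those two parameters ``through both sides'' does not capture the joint worst case over voter positions and $Z$. Nor is there a ``same family of $\tau$-dependent cases as in Theorem \ref{thm-1-thresh}'' to reduce to, since that proof never involves a third point $Z$; and your fallback handles only the boundary configuration $Z = Q$, while the difficult configurations are intermediate ones where a covering shortfall and a genuine $B$-surplus coexist.

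The missing idea is that the surplus weak-$Q$ voters must be charged in part to $SC(Q)$ itself, not only to $\sum_A d(i,Z)$. The moment you replace the $A$-voters' contribution $d(i,P)-d(i,Q)$ by $0$, you discard the budget $d(i,Q)-d(i,P) \ge \frac{\tau-1}{\tau}d(i,Q)$ that each $A_2$ voter carries, and no accounting based solely on distances to $Z$ can recover it---this is exactly what your counterexample coefficient computation detects. The paper's proof is organized around recovering that budget: it first peels off a cardinality-balanced sub-election (matching $A_2' \subseteq A_2$ against $B_2$ and $B_1' \subseteq B_1$ against $A_1$) and applies Lemma \ref{Goel} to it as a black box; then, for the leftover voters, whose counts satisfy $\frac{\tau+1}{\tau-1}|A_2 - A_2'| \ge |B_1 - B_1'|$ (or the $\tau$-weighted analogue), it charges each $j \in B_1 - B_1'$ through $Z$ \emph{and} through a leftover strong voter $i \in A_2-A_2'$, via $d(j,P) \le d(j,Z) + d(P,Z) \le d(j,Z) + d(i,Z) + \frac{1}{\tau} d(i,Q)$, splitting $d(j,P)$ into a $\frac1\tau$ fraction (bounded by $d(j,Q)$) and a $\frac{\tau-1}{\tau}$ fraction (bounded through $Z$). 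The $d(\cdot,Q)$ terms that accumulate on the $A_2$ side then total $\frac{2\tau+1}{\tau^2}\sum_{A_2-A_2'} d(i,Q) \le \sum_{A_2-A_2'} d(i,Q)$ precisely because $\tau \ge 1+\sqrt2$ in that case, which is what keeps the final coefficient of $SC(Q)$ at $1$ and of $SC(Z)$ at $2$. Your approach would need an analogous mechanism routing charge into $SC(Q)$; without one, it cannot close for $1+\sqrt2 \le \tau < 2+\sqrt5$, nor in the corresponding subcases of the $\tau < 1+\sqrt2$ regime.
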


\begin{proof}
	We use the same notation as before. Let $A_1$ denote a subset of voters that prefer $P$ to $Q$ with preference strengths $\le \thresh$, and let $A_2$ denote a subset of voters that prefer $P$ to $Q$ with preference strengths $> \thresh$. Also $B_1$ denote a subset of voters prefer $Q$ to $P$ with preference strengths $\le \thresh$, and let $B_2$ denote a subset of voters prefer $Q$ to $P$ with preference strengths $> \thresh$. Without loss of generality, suppose we choose $P$ as the winner by our weighted majority rule. It means that if $\thresh \ge \sqrt{2} + 1$, $\frac{\thresh+1}{\thresh-1} |A_2| + |A_1| \ge |B_1| + \frac{\thresh+1}{\thresh-1} |B_2|$, and if $\thresh < \sqrt{2} + 1$,   $\thresh |A_2| + |A_1| \ge |B_1| + \thresh |B_2|$. \\

	From Lemma \ref{Goel}, we know that if $|A_2| + |A_1| \ge |B_1| + |B_2|$, then $SC(P) \leq 2 \cdot SC(Z) + SC(Q)$.  Consider the case that $|A_2| + |A_1| < |B_1| + |B_2|$, it is not possible that $|A_2| < |B_2|$ and $|A_1| >= |B_1|$, because $A_2$ and $B_2$ have heavier weight than $A_1$ and $B_1$. Thus the only case left is $|A_2| >= |B_2|$ and $|A_1| < |B_1|$.

	 We separate the voters in $A_2$ into two subsets $A_2'$ and $A_2 - A_2'$, such that $|A_2'| = |B_2|$. Similarly, we separate the voters in $B_1$ into two subsets $B_1'$ and $B_1 - B_1'$, such that $|B_1'| = |A_1|$.\\

	\textbf{Case 1. $\thresh \ge \sqrt{2} + 1$}\\

	By our weighted majority rule,

	\begin{align*}
	\frac{\thresh + 1}{\thresh - 1}|A_2| + |A_1| &\ge |B_1| + \frac{\thresh + 1}{\thresh - 1} |B_2| \\
	\frac{\thresh + 1}{\thresh - 1}(|A_2'| + |A_2 - A_2'|) + |B_1| &\ge (|B_1'| + |B_1 - B_1'|) + \frac{\thresh + 1}{\thresh - 1} |B_2| \\
	\frac{\thresh + 1}{\thresh - 1}|A_2 - A_2'| &\ge  |B_1 - B_1'| \\
  \end{align*}

  First, bound $\sum\limits_{i \in B_1 - B_1'} d(i,P)$, by the triangle inequality,

	\begin{align*}
     \sum\limits_{i \in B_1 - B_1'} d(i,P) &\le \sum\limits_{B_1 - B_1'} d(i,Z) + \sum\limits_{B_1 - B_1'} d(P, Z)\\
		 &= \sum\limits_{B_1 - B_1'} d(i,Z) + |B_1 - B_1'| \ d(P, Z)\\
		 &\le \sum\limits_{B_1 - B_1'} d(i,Z) + \frac{\thresh + 1}{\thresh - 1}|A_2 - A_2'| \ d(P, Z)\\
		 &= \sum\limits_{B_1 - B_1'} d(i,Z) + \frac{\thresh + 1}{\thresh - 1}\sum\limits_{A_2 - A_2'} d(P, Z)
	\end{align*}

	The second to last line follows because $\frac{\thresh + 1}{\thresh - 1}|A_2 - A_2'| \ge  |B_1 - B_1'|$. Using the triangle inequality again, $\forall i \in A_2 - A_2'$, $d(P,Z) \le d(i,P) + d(i,Z)$, and also note that $d(i, P) \le \frac{1}{\thresh}d(i, Q)$. Thus,

	\begin{align*}
     \sum\limits_{i \in B_1 - B_1'} d(i,P) &\le  \sum\limits_{B_1 - B_1'} d(i,Z) + \frac{\thresh + 1}{\thresh - 1}\sum\limits_{A_2 - A_2'} d(P, Z)\\
		 &\le \sum\limits_{B_1 - B_1'} d(i,Z) + \frac{\thresh + 1}{\thresh - 1}\sum\limits_{A_2 - A_2'} (d(i,P) + d(i,Z)) \\
		 &\le \sum\limits_{B_1 - B_1'} d(i,Z) + \frac{\thresh + 1}{\thresh - 1}\sum\limits_{A_2 - A_2'} (\frac{1}{\thresh}d(i,Q) + d(i,Z)) \\
		 &= \frac{\thresh + 1}{\thresh - 1} \sum\limits_{A_2 - A_1'} d(i,Z) + \sum\limits_{B_1 - B_1'} d(i,Z) + \frac{\thresh + 1}{\thresh(\thresh - 1)}\sum\limits_{A_2 - A_2'} d(i,Q)\\
	\end{align*}

  Multiply both sides by $\frac{\thresh - 1}{\thresh}$,

	\begin{equation}
		\label{thm_PQZ_1_tau_eq1}
		\frac{\thresh - 1}{\thresh} \sum\limits_{i \in B_1 - B_1'} d(i,P) \le \frac{\thresh + 1}{\thresh}\sum\limits_{A_2 - A_1'} d(i,Z) + \frac{\thresh - 1}{\thresh} \sum\limits_{B_1 - B_1'} d(i,Z) + \frac{\thresh + 1}{\thresh ^ 2} \sum\limits_{A_2 - A_2'} d(i,Q)
	\end{equation}

  Also, $\forall i \in B_1 - B_1'$, $d(i, P) \le \thresh d(i, Q)$. So $\sum\limits_{i \in B_1 - B_1'} d(i, P) \le \thresh \sum\limits_{i \in B_1 - B_1'} d(i, Q)$. Divide both sides by $\thresh$, we get $\frac{1}{\thresh} \sum\limits_{i \in B_1 - B_1'} d(i, P) \le \sum\limits_{i \in B_1 - B_1'} d(i, Q)$. Finally, $\forall i \in A_2 - A_2'$, bound $d(i, P)$ by $\frac{1}{\thresh}d(i, Q)$. Together with Inequality \ref{thm_PQZ_1_tau_eq1},

	\begin{align*}
		& \sum\limits_{i \in A_2 - A_2'}d(i,P) + \sum\limits_{i \in B_1 - B_1'}d(i,P) \\
		&\leq \frac{1}{\thresh} \sum\limits_{A_2 - A_2'} d(i,Q) +  \frac{1}{\thresh} \sum\limits_{i \in B_1 - B_1'} d(i, P) + \frac{\thresh - 1}{\thresh} \sum\limits_{i \in B_1 - B_1'} d(i, P)\\
    &\le \frac{1}{\thresh} \sum\limits_{A_2 - A_2'} d(i,Q) + \sum\limits_{i \in B_1 - B_1'} d(i, Q) + \frac{\thresh + 1}{\thresh}\sum\limits_{A_2 - A_1'} d(i,Z) + \frac{\thresh - 1}{\thresh} \sum\limits_{B_1 - B_1'} d(i,Z) + \frac{\thresh + 1}{\thresh ^ 2} \sum\limits_{A_2 - A_2'} d(i,Q)\\
		&= \frac{\thresh + 1}{\thresh}\sum\limits_{A_2 - A_1'} d(i,Z) + \frac{\thresh - 1}{\thresh} \sum\limits_{B_1 - B_1'} d(i,Z) + \frac{2\thresh + 1}{\thresh ^ 2} \sum\limits_{A_2 - A_2'} d(i,Q) + \sum\limits_{i \in B_1 - B_1'} d(i, Q)\\
		&\le 2 (\sum\limits_{i \in A_2 - A_2'} d(i,Z) + \sum\limits_{i \in B_1 - B_1'} d(i,Z)) + \sum\limits_{i \in A_2 - A_2'} (i,Q) + \sum\limits_{i \in B_1 - B_1'} d(i,Q)
	\end{align*}

	The last line follows because $\frac{2\thresh+1}{\thresh^2} \le 1$ when $\thresh \ge \sqrt{2}
 	+1$.\\


	\textbf{Case 2. $\thresh < \sqrt{2} + 1$}\\

	By our weighted majority rule,
	\begin{align*}
  \thresh|A_2| + |A_1| &\ge |B_1| + \thresh |B_2| \\
	\thresh(|A_2'| + |A_2 - A_2'|) + |A_1| &\ge (|B_1'| + |B_1 - B_1'|) + \thresh |B_2| \\
	\thresh|A_2 - A_2'|  &\ge |B_1 - B_1'|
	\end{align*}

	The proof is almost the same as \textbf{Case 1}, except that we use the inequality above to bound the ratio between $|B_1 - B_1'|$ and $|A_2 - A_2'|$. Similar to Inequality \ref{thm_PQZ_1_tau_eq1}, we get:

	\begin{equation}
		\label{thm_PQZ_1_tau_eq2}
		\frac{\thresh - 1}{\thresh} \sum\limits_{i \in B_1 - B_1'} d(i,P) \le (\thresh - 1)\sum\limits_{A_2 - A_1'} d(i,Z) + \frac{\thresh - 1}{\thresh} \sum\limits_{B_1 - B_1'} d(i,Z) + \frac{\thresh - 1}{\thresh} \sum\limits_{A_2 - A_2'} d(i,Q)
	\end{equation}

  Then bound $\sum\limits_{i \in A_2 - A_2'}d(i,P) + \sum\limits_{i \in B_1 - B_1'}d(i,P)$ similarly to \textbf{Case 1},

	\begin{align*}
		& \sum\limits_{i \in A_2 - A_2'}d(i,P) + \sum\limits_{i \in B_1 - B_1'}d(i,P) \\
		&= \frac{1}{\thresh} \sum\limits_{A_2 - A_2'} d(i,Q) +  \frac{1}{\thresh} \sum\limits_{i \in B_1 - B_1'} d(i, P) + \frac{\thresh - 1}{\thresh} \sum\limits_{i \in B_1 - B_1'} d(i, P)\\
		&\le \frac{1}{\thresh} \sum\limits_{A_2 - A_2'} d(i,Q) + \sum\limits_{i \in B_1 - B_1'} d(i, Q) + (\thresh - 1)\sum\limits_{A_2 - A_1'} d(i,Z) + \frac{\thresh - 1}{\thresh} \sum\limits_{B_1 - B_1'} d(i,Z) + \frac{\thresh - 1}{\thresh} \sum\limits_{A_2 - A_2'} d(i,Q)\\
		&= (\thresh - 1)\sum\limits_{A_2 - A_1'} d(i,Z) + \frac{\thresh - 1}{\thresh} \sum\limits_{B_1 - B_1'} d(i,Z) +  \sum\limits_{A_2 - A_2'} d(i,Q) + \sum\limits_{i \in B_1 - B_1'} d(i, Q)\\
		&\le 2 (\sum\limits_{i \in A_2 - A_2'} d(i,Z) + \sum\limits_{i \in B_1 - B_1'} d(i,Z)) + \sum\limits_{i \in A_2 - A_2'} (i,Q) + \sum\limits_{i \in B_1 - B_1'} d(i,Q)
	\end{align*}


	We have proved $\sum\limits_{i \in A_2 - A_2' + B_1 - B_1'}(i,P) \le \sum\limits_{i \in A_2 - A_2' + B_1 - B_1'} (i, Q) + 2 \sum\limits_{i \in A_2 - A_2' + B_1 - B_1'} (i, Z)$ for any $\thresh \ge 1$. And because $|A_2'| + |A_1| = |B_1'| + |B_2|$, by Lemma \ref{Goel},

	$$\sum\limits_{i \in A_2' + A_1 + B_1' + B_2}(i,P) \le \sum\limits_{i \in A_2' + A_1 + B_1' + B_2}(i, Q) + 2 \sum\limits_{i \in A_2' + A_1 + B_1' + B_2} (i, Z)$$

	Putting everything together, $\sum_{i} (i, P) \le \sum_{i} (i, Q) + 2 \sum_{i} (i, Z)$.
\end{proof}

Now that we have the above theorem, it is easy to establish distortion bounds based on our weighted majority rule.

\begin{theorem}
	\label{thm_ds_1_tau}
	Suppose a weighted majority graph is formed by using Majority Rule \ref{weighted_majority_rule_1_tau} to choose pairwise winners. The distortion of the uncovered set of this graph is at most $\min \{ \max\{\frac{3\thresh - 1}{\thresh + 1}, \frac{\thresh + 2}{\thresh} \} + 2, \max \{(\frac{3\thresh - 1}{\thresh + 1})^2, (\frac{\thresh + 2}{\thresh})^2 \} \}$ in the multiple candidates setting when given voters' ordinal preferences and a threshold $\thresh$.
\end{theorem}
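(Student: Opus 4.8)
The plan is to reduce everything to the two already-established facts about Majority Rule \ref{weighted_majority_rule_1_tau}: the two-candidate distortion bound of Theorem \ref{thm-1-thresh}, which says that whenever the rule selects $X$ over $Y$ we have $SC(X) \le \lambda \, SC(Y)$ for $\lambda = \max\{\frac{3\thresh - 1}{\thresh + 1}, \frac{\thresh + 2}{\thresh}\}$, and the generalized ``Goel et al.'' inequality of Theorem \ref{thm_PQZ_1_tau}, which says that whenever the rule selects $X$ over $Y$ we have $SC(X) \le 2\, SC(Z) + SC(Y)$ for \emph{any} point $Z$ in the metric space. Let $P$ be any candidate in the uncovered set of the weighted majority graph and let $Z$ be the optimal available candidate. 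Since $Z \in \cands$, the uncovered-set property applies to the pair $(P,Z)$: either $P$ beats $Z$ directly, or there is a witness candidate $Q$ with $P$ beating $Q$ and $Q$ beating $Z$.

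First I would dispatch the direct case. If $P$ beats $Z$, then Theorem \ref{thm-1-thresh} gives $SC(P) \le \lambda\, SC(Z)$, so the distortion is at most $\lambda$; since $\lambda \ge 1$, this is dominated by both target quantities $\lambda + 2$ and $\lambda^2$, so nothing more is needed here. The work is entirely in the two-hop case, where I would prove the two bounds separately and then invoke the minimum. For the $\lambda^2$ bound I apply Theorem \ref{thm-1-thresh} to each edge of the path $P \to Q \to Z$: from $P$ beating $Q$ we get $SC(P) \le \lambda\, SC(Q)$, and from $Q$ beating $Z$ we get $SC(Q) \le \lambda\, SC(Z)$, so composing yields $SC(P) \le \lambda^2\, SC(Z)$. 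For the $\lambda + 2$ bound I instead spend Theorem \ref{thm_PQZ_1_tau} on the first hop, taking the arbitrary point to be the optimum $Z$ itself: this gives $SC(P) \le 2\, SC(Z) + SC(Q)$. I then bound the leftover $SC(Q)$ using Theorem \ref{thm-1-thresh} on the second hop, $SC(Q) \le \lambda\, SC(Z)$, and combine to get $SC(P) \le (\lambda + 2)\, SC(Z)$. Because both derivations hold in the two-hop case, the distortion of $P$ is at most $\min\{\lambda + 2,\ \lambda^2\}$, which is precisely the claimed bound.

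The argument is essentially a short case analysis, so there is no serious computational obstacle; the only conceptual point worth flagging is \emph{why} Theorem \ref{thm_PQZ_1_tau} is available at all. The $\lambda + 2$ half of the bound crucially relies on being able to charge one majority-rule hop against an \emph{arbitrary} comparison point rather than paying a full factor of $\lambda$ on each of two hops, and this is exactly the strengthened inequality that holds for Majority Rule \ref{weighted_majority_rule_1_tau} but fails for the simpler Majority Rule \ref{weighted_majority_rule_1_tau_simple} (as noted in the remark preceding Theorem \ref{thm_PQZ_1_tau}). Beyond that, the proof needs only the standard facts that the uncovered set is nonempty and that the optimal candidate lies in \cands so that the uncovered-set defining property can be applied to it; both are immediate, so I expect the main risk to be purely bookkeeping in keeping the roles of $P$, $Q$, and $Z$ straight across the two sub-bounds.
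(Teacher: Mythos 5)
Your proposal is correct and follows essentially the same route as the paper: apply the uncovered-set property to the pair $(P,Z)$, compose Theorem \ref{thm-1-thresh} over the two hops for the squared bound, and use Theorem \ref{thm_PQZ_1_tau} with the optimum $Z$ as the arbitrary point, together with $SC(Q) \le \lambda\, SC(Z)$, for the $\lambda+2$ bound. Your write-up is in fact slightly more explicit than the paper's (handling the direct-beat case and the role of Theorem \ref{thm_PQZ_1_tau} in so many words), but there is no substantive difference.
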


\begin{proof}

Suppose the optimal candidate is $Z$. By definition, for any candidate $P$ in the uncovered set, either $P$ beats $Z$ directly or there exists a candidate $Q$, that $P$ beats $Q$ and $Q$ beats $Z$. And we know that the distortion between two candidates when one beats the other directly is at most $\max\{\frac{3\thresh - 1}{\thresh + 1}, \frac{\thresh + 2}{\thresh} \}$, so it is straight forward that the distortion is at most $\max\{(\frac{3\thresh - 1}{\thresh + 1})^2, (\frac{\thresh + 2}{\thresh})^2 \}$ for any winner in the uncovered set.

Also, because $Q$ beats $Z$, so $SC(Q) \le \max\{\frac{3\thresh - 1}{\thresh + 1}, \frac{\thresh + 2}{\thresh} \} SC(Z)$. By Theorem \ref{thm_PQZ_1_tau}, we know that $SC(P) \le (\max\{\frac{3\thresh - 1}{\thresh + 1}, \frac{\thresh + 2}{\thresh} \} + 2) SC(Z)$. Thus, we can get a upper bound of distortion for the uncovered set of $\min \{\max\{\frac{3\thresh - 1}{\thresh + 1}, \frac{\thresh + 2}{\thresh} \} + 2, \max\{(\frac{3\thresh - 1}{\thresh + 1})^2, (\frac{\thresh + 2}{\thresh})^2 \} \}$.
\end{proof}

\subsection{Choosing the Best Threshold}
What type of knowledge of the strengths in voter preferences is most useful and advantageous? If you could gather data about voter preferences in different ways, what should you aim for in order to reduce distortion? These are some of the questions which we wish to illuminate in this paper.

Our results in the previous two sections shed some light on these decisions. First, it may be surprising (although it really shouldn't be) that knowing only information about very extreme voters (i.e., $\thresh$ being high) or only about very indecisive voters ($\thresh$ being very close to 1) does not help much when compared to only knowing the voters' ordinal preferences. Our results indicate, however, that the optimal thing to do is to differentiate between candidates with lots of supporters who prefer them at least 2 times to other candidates (or more precisely, at least $1+\sqrt{2}$ times), and candidates which have few such supporters. Our results indicate that by obtaining this information, we can improve the quality of the chosen candidate from a 3-approximation to only a 1.83 approximation (for 2 candidates), and from a 5-approximation to a 3.35-approximation (for $\geq 3$ candidates). This is a huge improvement obtained with relatively little extra cost.

\section{Undecided Voters: working without knowing voter preferences}
\label{sec-1tau}
Suppose there are two candidates \P and \Q and for all voters with preference strength greater than threshold \thresh, we know their preferred candidate. For all other voters we know {\em nothing} about their preferences. This is a strict generalization of the case where we just know voter preferences, since that is the case where $\thresh = 1$. As with the case where we only know preferences, the only reasonable voting rule is to select the candidate preferred by the greatest number of voters, out of those for whom we know preferences. This represents the case where voters abstain if their preference strength is not sufficiently high for them to be motivated enough to vote. In this section we consider mechanisms to deal with such undecided or unmotivated voters.

\begin{decision_rule} \label{rule_majority_1thresh}
Given candidates \P and \Q and any single threshold \thresh $\geq$ 1, give all voters with preference strength at least \thresh a weight of 1 and all other voters a weight of 0. Select the candidate by weighted majority rule.
\end{decision_rule}

\begin{theorem}
With two candidates and only the preferences of voters with preference strength greater than \thresh, Weighted Majority Rule \ref{rule_majority_1thresh} achieves a distortion of $\max\{\frac{\thresh+2}{\thresh}, \thresh \}$, and no deterministic mechanism can do better.
\end{theorem}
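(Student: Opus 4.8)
The plan is to prove the two bounds separately: the lower bound follows essentially for free from the lemmas of Section~\ref{sec_lb}, while the upper bound requires a charging argument that must be adapted to the voters whose preferences are entirely unknown.

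For the lower bound, I would invoke the two lower-bound lemmas of Section~\ref{sec_lb} specialized to a single threshold, i.e.\ $\thresh_1 = \thresh_m = \thresh$. Lemma~\ref{lem_smallest_lowerbound} (all voters placed at $\frac{\thresh}{\thresh+1}-\epsilon$, so every preference strength falls below \thresh and all voters abstain, forcing an arbitrary tie-break) gives a bound of \thresh, and Lemma~\ref{lem_largest_lowerbound} (half the voters on top of \Q, half at $\frac{1}{\thresh+1}-\epsilon$) gives a bound of $\frac{\thresh+2}{\thresh}$. Since both constructions are valid instances of the present setting, together they rule out any deterministic distortion below $\max\{\thresh, \frac{\thresh+2}{\thresh}\}$, which is exactly the claimed value.

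For the upper bound, let $A$ and $B$ denote the voters with known (strong) preference preferring \P and \Q respectively, and let $C$ be the weight-$0$ voters whose preferences are unknown. Assume without loss of generality that \P wins, so $|A| \ge |B|$. Set $\ds = \max\{\frac{\thresh+2}{\thresh}, \thresh\}$; the goal is $SC(P) \le \ds\cdot SC(Q)$. I would bound $d(i,P)$ per voter against $d(i,Q)$ and the ``numeraire'' $d(P,Q)$, exactly as in the proof of Theorem~\ref{thm-1-thresh}. For $i \in A$, Lemma~\ref{lemma-1-thresh-A2} gives the savings bound $d(i,P) \le \ds\, d(i,Q) - \frac{\ds\thresh - 1}{\thresh+1}d(P,Q)$. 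For $j \in B$, the triangle inequality gives $d(j,P) \le d(j,Q) + d(P,Q) \le \ds\, d(j,Q) + d(P,Q)$. The genuinely new ingredient is the set $C$: since every such voter has preference strength below \thresh, we have $d(i,P) \le \thresh\, d(i,Q)$ regardless of which candidate they actually prefer, and because $\ds \ge \thresh$ this upgrades to $d(i,P) \le \ds\, d(i,Q)$ with no charge on $d(P,Q)$ at all.

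Summing these three bounds over all voters yields $SC(P) \le \ds\cdot SC(Q) + \bigl(|B| - |A|\tfrac{\ds\thresh-1}{\thresh+1}\bigr)d(P,Q)$, so it remains only to verify the coefficient of $d(P,Q)$ is nonpositive. Since $\ds \ge \frac{\thresh+2}{\thresh}$ we have $\frac{\ds\thresh - 1}{\thresh+1} \ge 1$, and combined with $|A| \ge |B|$ this forces the bracketed term to be $\le 0$; as $d(P,Q) \ge 0$, the desired $SC(P) \le \ds\cdot SC(Q)$ follows. I expect the main conceptual obstacle to be precisely the handling of the undecided set $C$: because nothing is known about whether these voters lean toward \P or \Q, the argument must rely solely on the two-sided estimate $d(i,P) \le \thresh\, d(i,Q)$ valid for any weak voter, and it is exactly the cost of absorbing these voters into the $\ds\, d(i,Q)$ term that injects the \thresh\ factor into the final distortion (dominating once $\thresh \ge 2$).
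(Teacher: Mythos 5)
Your proof is correct and follows essentially the same route as the paper: the same two lower-bound instances (Lemmas \ref{lem_smallest_lowerbound} and \ref{lem_largest_lowerbound}), the same key estimate $d(i,P)\le \ds\, d(i,Q)-\frac{\ds\thresh-1}{\thresh+1}d(P,Q)$ for $A$-voters using $d(P,Q)$ as numeraire, and the plain triangle inequality for $B$-voters, finished off with $|A|\ge|B|$. If anything your write-up is slightly cleaner than the paper's: you handle the undecided set $C$ explicitly via $d(i,P)\le \thresh\, d(i,Q)\le \ds\, d(i,Q)$ (a step the paper's summation, which runs only over $A\cup B$, silently omits), and your observation that $\ds\ge\frac{\thresh+2}{\thresh}$ already forces $\frac{\ds\thresh-1}{\thresh+1}\ge 1$ replaces the paper's case split on $\thresh\ge 2$ versus $\thresh<2$ with a single uniform argument.
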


\begin{proof}
The proof is similar to that of Theorem \ref{thm-1-thresh}, once again using $d(P,Q)$ as an intermediate value to charge possible voter distances to. Let $A$ be the set of voters who strongly prefer \P. That is, $A = \{i : \frac{d(i,Q)}{d(i,P)} \geq \thresh \}$. Similarly, define the set of voters who strongly prefer \Q as $B = \{j : \frac{d(j,P)}{d(j,Q)} \geq \thresh \}$. Let the set of remaining voters, whose preference strengths are weaker than \thresh be denoted $C$. Without loss of generality, let \P be the winner over \Q because $|A| \geq |B|$.

\begin{lemma} \label{lemma_1T_A}
$\forall i \in A$, \emph{for any} $\ds \geq 1$ : $d(i,P) \leq \ds d(i,Q) - \frac{\ds \thresh - 1}{\thresh+1} d(P,Q)$.
\end{lemma}

\begin{proof}
$\forall i \in A$ we know that $d(i,P) \leq \frac{1}{\thresh} d(i,Q)$.\\

It follows from triangle inequality that $d(P,Q) \leq d(i,P) + d(i,Q) \leq \frac{1}{\thresh} (i,Q) + (i,Q) = \frac{\thresh+1}{\thresh} (i,Q)$.\\

For any $\ds \geq 1$ we when have

\begin{align*}
d(i,P) &\leq \frac{1}{\thresh} d(i,Q)\\
&= \ds d(i,Q) - (\ds - \frac{1}{\thresh}) d(i,Q)\\
&\leq \ds d(i,Q) - (\ds - \frac{1}{\thresh}) (\frac{\thresh}{\thresh+1}) d(P,Q)\\
&= \ds d(i,Q) - (\frac{\ds \thresh - 1}{\thresh+1}) d(P,Q)
\end{align*}
\end{proof}

Recall that $d(j,P) \leq d(j,Q) + d(P,Q)$ from triangle inequality. It therefore follows from Lemma \ref{lemma_1T_A} that for any $\ds \geq 1$, $$\sumL_{i \in A} d(i,P) + \sumL_{j \in B} (j,P) \leq \ds \sumL_{i \in A} d(i,Q) - |A|(\frac{\ds \thresh - 1}{\thresh+1}) d(P,Q) + \sumL_{j \in B} d(j,Q) + |B| d(P,Q)$$

Let $\ds = \max\{\frac{\thresh+2}{\thresh}, \thresh \}$. We consider the two cases in which either of the two terms in this bound are the larger term.\\

\textbf{Case 1:} If $\thresh \geq 2$ then $\ds = \thresh$, and therefore

\begin{align*}
\sumL_{i \in A} d(i,P) + \sumL_{j \in B} (j,P) &\leq \ds \sumL_{i \in A} d(i,Q) - |A|(\thresh - 1) d(P,Q) + \sumL_{j \in B} d(j,Q) + |B| d(P,Q)\\
&\leq \ds \sumL_{i \in A} d(i,Q) + \sumL_{j \in B} (j,Q) \qquad \text{because \ } |A| \geq |B|.
\end{align*}

\textbf{Case 2:} If $\thresh < 2$ then $\ds = \frac{\thresh+2}{\thresh}$, and therefore

\begin{align*}
\sumL_{i \in A} d(i,P) + \sumL_{j \in B} (j,P)& \leq \ds \sumL_{i \in A} d(i,Q) - |A| d(P,Q) + \sumL_{j \in B} d(j,Q) + |B| d(P,Q)\\
&\leq \ds \sumL_{i \in A} d(i,Q) + \sumL_{j \in B} (j,Q)  \qquad \text{because \ } |A| \geq |B|.
\end{align*}

Lastly, we can see that this upper bound on $\ds$ is tight due to the lower bounds given by examples in Lemma \ref{lem_smallest_lowerbound} and Lemma \ref{lem_largest_lowerbound}.
\end{proof}

\subsection{Choosing the Best Threshold}
If we can only select a single threshold for voter preference strengths, which should we choose? Intuitively, this is analogous to determining how difficult it should be to vote. If it takes a little bit of effort to vote, then you know that the voters who actually do participate have a significant interest in the outcome. However, if the barriers to voting are too high, then the outcome can be decided by a small fraction of the voters and fails to capture their collective preferences as a whole (see Figure \ref{fig:General_Case_Distortion_Plot}). In our setting the optimal choice of threshold is $\argmin\limits_{\thresh} \{ \max\{\frac{\thresh+2}{\thresh}, \thresh \} \} = 2$, yielding a distortion of 2 (instead of 3 for the case when $\tau=1$).

\begin{figure}[htb]
\begin{center}
\includegraphics[scale = 0.5]{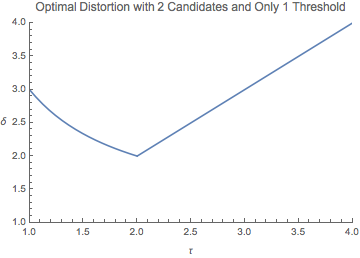}\\
\end{center}
\caption{Best achievable distortion for a single threshold $\thresh$.}
\label{fig:General_Case_Distortion_Plot}
\end{figure}

\subsection{Multiple Candidates (given only a threshold $\thresh$)}

When there are more than two candidates, we study the distortion of the uncovered set.

\begin{theorem}
With mutiple candidates and only the preferences of voters with preference strength greater than \thresh, if Weighted Majority Rule \ref{rule_majority_1thresh} is used to choose pairwise winners, then the distortion of the uncovered set of this graph is at most $\max\{(\frac{\thresh+2}{\thresh})^2, \thresh ^ 2 \}$.
\end{theorem}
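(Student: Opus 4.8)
The plan is to reduce the multiple-candidate bound to the two-candidate bound via the definition of the uncovered set, exactly as in the proof of Theorem~\ref{thm_ds_1_tau}, but now without access to any Goel-style additive improvement. Write $\beta = \max\{\frac{\thresh+2}{\thresh}, \thresh\}$ for the worst-case pairwise distortion established for Weighted Majority Rule~\ref{rule_majority_1thresh} on two candidates. The key observation is that this pairwise guarantee applies verbatim to every pair of candidates in the multiple-candidate instance: restricting attention to any pair $(X,Y)$, the preference strengths between them are determined solely by the voters' distances to $X$ and $Y$, so whenever the rule selects $X$ over $Y$ we have $SC(X) \le \beta \cdot SC(Y)$.

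First I would let $Z$ be the optimal candidate and $P$ be any candidate in the uncovered set, and split into the two cases guaranteed by the definition of the uncovered set. In the first case $P$ beats $Z$ directly, so $SC(P) \le \beta \cdot SC(Z) \le \beta^2 SC(Z)$ since $\beta \ge 1$. In the second case there is a candidate $Q$ with $P$ beating $Q$ and $Q$ beating $Z$; composing the two pairwise bounds gives $SC(P) \le \beta \cdot SC(Q) \le \beta^2 SC(Z)$. In either case $SC(P) \le \beta^2 SC(Z) = \max\{(\frac{\thresh+2}{\thresh})^2, \thresh^2\} SC(Z)$, which is the claimed bound.

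The step that deserves the most care is explaining why the bound here is purely the square $\beta^2$, rather than the sharper $\min\{\beta+2,\beta^2\}$ obtained in Theorem~\ref{thm_ds_1_tau}. There the additive improvement came from Theorem~\ref{thm_PQZ_1_tau}, a generalization of Goel et al.'s Lemma~\ref{Goel} asserting $SC(P) \le 2\,SC(Z) + SC(Q)$ whenever $P$ beats $Q$. That lemma fundamentally requires knowing that a (weighted) majority of \emph{all} voters prefers $P$ to $Q$, so that their individual triangle-inequality bounds can be charged against $SC(Z)$. In the present setting we know nothing about the undecided voters in $C$, and a weighted majority among the motivated voters $A \cup B$ need not be a majority overall; consequently the Goel-style charging argument is unavailable and we cannot improve beyond the composition bound. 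The main obstacle is thus not the computation but recognizing that the absence of information about $C$ forces us to rely solely on the two-candidate guarantee, applied twice.
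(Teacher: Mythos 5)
Your proof is correct and follows essentially the same route as the paper's: apply the two-candidate distortion bound $\max\{\frac{\thresh+2}{\thresh},\thresh\}$ twice through the definition of the uncovered set, yielding the squared bound. Your explanation of why no Goel-style additive improvement is available also matches the paper's own remark following the theorem, where the paper makes the same point concretely via a single-voter counterexample showing $SC(P) \le 2\,SC(Z) + SC(Q)$ can fail when all voters are undecided.
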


\begin{proof}
	Suppose the optimal candidate is $Z$. By definition, for any candidate $P$ in the uncovered set, either $P$ beats $Z$ directly or there exists a candidate $Q$, such that $P$ beats $Q$ and $Q$ beats $Z$. And we know that the distortion between two candidates when one beats the other directly is at most $\max\{\frac{\thresh+2}{\thresh}, \thresh \}$, so it is straight forward that the distortion is at most $\max\{(\frac{\thresh+2}{\thresh})^2, \thresh ^ 2 \}$ for any winner in the uncovered set.
\end{proof}

Note that, unlike in Theorem \ref{thm_ds_1_tau}, for this setting we have to settle for the trivial bound of squaring the distortion for $\geq 3$ candidates. This is because, unlike for the case with known preferences and a threshold, the property that $SC(P) \leq 2 \cdot SC(Z) + SC(Q)$ (Theorem \ref{thm_PQZ_1_tau}) does not hold anymore. Consider the following example: there are three candidates $P$, $Q$, and $Z$, and there is only one voter $i$, that has a preference strength $< \thresh$ between any pair of candidates, so we have no information whatsoever about voter preferences. Without loss of generality, suppose we choose $P$ as the winner. The actual distances could be: $d(i, P) = \thresh - \epsilon$, $d(i, Q) = 1$, and $d(i, Z) = 1$. As $\epsilon$ approaches $0$, $SC(P) \approx \thresh SC(Q)$, and also $SC(P) \approx \thresh SC(Z)$. When $\thresh$ is large, it is not possible to have $SC(P) \le 2 SC(Z) + SC(Q)$. Thus, we cannot bound $SC(P)$ in the multiple candidates setting by $SC(P) \leq 2 \cdot SC(Z) + SC(Q)$ as in Section \ref{sec-1-tau}.


\section{Distortion with General Thresholds} \label{sec_general}
In this section we generalize some of our results in the previous sections to deal with general preference strength thresholds. We are given thresholds $\{1 \leq \thresh_1 < \thresh_2 < \ldots < \thresh_m\}$, and for every voter $i$ and pair of candidates \P and \Q we know the pair of thresholds between which the preference strength of $i$ falls into. In other words, the more thresholds we have, the less coarse our knowledge of voters preferences. We believe it is realistic to assume that we have one or two, perhaps three, such thresholds, and for most candidate pairs we can create a profile describing how devoted and fanatical their supporters are with respect to these thresholds. However, in this section we consider general sets of thresholds in order to provide bounds on distortion which are as general as possible. For convenience, we let $\thresh_{m+1} = \infty$ and $\thresh_0 = \frac{1}{\thresh_1}$.

We begin as before, by analyzing the case with only 2 candidates \P and \Q, and then extending our results to multiple candidates.

\begin{condition} \label{cond_general}
Let $\ds = \max\limits_{0 \leq l \leq m} \{ \frac{\threshl \threshup + 2\threshup - 1}{\threshl \threshup + 1} \}$. Find $k$ such that $\thresh_k \leq \ds < \thresh_{k+1}$. \P wins only if $\sumL_{l=k}^m (\frac{\threshup - \ds}{\threshup - 1})|B_l| \leq \sumL_{l=1}^m (\frac{\ds \threshl - 1}{\threshl + 1})|A_l| + \sumL_{l=1}^{k-1} |B_l| (\frac{\ds - \threshup}{\threshup + 1})$ and \Q wins only if $\sumL_{l=k}^m (\frac{\threshup - \ds}{\threshup - 1})|A_l| \leq \sumL_{l=1}^m (\frac{\ds \threshl - 1}{\threshl + 1})|B_l| + \sumL_{l=1}^{k-1} |A_l| (\frac{\ds - \threshup}{\threshup + 1})$.\\
\end{condition}

The above is not a specific voting rule, but is instead a set of voting rules. We prove below that any rule obeying the above condition has distortion at most $\ds$, and that we can always form a rule satisfying this condition. Note that such a value of $0 < k \leq m$ always exists because distortion is at least $\thresh_1$ (since taking the term for $l=0$ gives $\thresh_1$). It may be that $k = m$, where $\thresh_m \leq \ds$.

\begin{theorem}
Any single-winner voting rule over two candidates which satisfies Condition \ref{cond_general} has distortion $\ds = \max\limits_{0 \leq l \leq m} \{ \frac{\threshl \threshup + 2\threshup - 1}{\threshl \threshup + 1} \}$ and no deterministic mechanism can do better.
\end{theorem}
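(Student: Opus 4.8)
The plan is to prove three separate things packaged in the statement: (i) that any rule obeying Condition~\ref{cond_general} has distortion at most $\ds$; (ii) that such a rule actually exists, i.e. the condition always admits a winner; and (iii) that $\ds$ cannot be improved. Part (iii) is immediate, since the lower bound of Theorem~\ref{thm_mT_lowerbound} is exactly $\max_{0 \leq l \leq m}\{\frac{\threshl\threshup + 2\threshup - 1}{\threshl\threshup + 1}\} = \ds$, so I would dispatch it in one sentence. The core is the upper bound, which generalizes the charging technique of Theorem~\ref{thm-1-thresh}: assuming without loss of generality that \P wins, I would bound each voter's $d(i,\P)$ by $\ds\, d(i,\Q)$ plus or minus a multiple of $d(\P,\Q)$, treating $d(\P,\Q)$ as the store of value that lets the slack from one group pay for the excess charge of another.

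First I would establish the per-group inequalities. For $i \in A_l$ we have $d(i,\P) \le \frac{1}{\threshl} d(i,\Q)$, and the triangle inequality gives $d(i,\Q) \ge \frac{\threshl}{\threshl + 1} d(\P,\Q)$; combining yields $d(i,\P) \le \ds\, d(i,\Q) - \frac{\ds\threshl - 1}{\threshl + 1} d(\P,\Q)$, a genuine saving since $\ds \ge \thresh_1 \ge 1$ forces $\ds\threshl \ge 1$. For $j \in B_l$ the sign of the $d(\P,\Q)$-term depends on where $\threshup$ lies relative to $\ds$, which is exactly why the condition introduces $k$ with $\thresh_k \le \ds < \thresh_{k+1}$. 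When $l \ge k$ (so $\threshup > \ds$), a convex combination of $d(j,\P) < \threshup\, d(j,\Q)$ and $d(j,\P) \le d(j,\Q) + d(\P,\Q)$ gives the \emph{cost} bound $d(j,\P) \le \ds\, d(j,\Q) + \frac{\threshup - \ds}{\threshup - 1} d(\P,\Q)$; when $l < k$ (so $\threshup \le \ds$), the bound $d(j,\Q) \ge \frac{1}{\threshup + 1} d(\P,\Q)$ converts the slack into a saving $d(j,\P) \le \ds\, d(j,\Q) - \frac{\ds - \threshup}{\threshup + 1} d(\P,\Q)$. The only remaining group is $C$ (present when $\thresh_1 > 1$): since $d(i,\P) < \thresh_1 d(i,\Q)$ irrespective of direction, the same manipulation yields a saving proportional to $\frac{\ds - \thresh_1}{\thresh_1 + 1} d(\P,\Q) \ge 0$, so these voters can only help.

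Summing all of these gives $SC(\P) \le \ds\, SC(\Q) + \Lambda\, d(\P,\Q)$, where $\Lambda = \sum_{l \ge k}\frac{\threshup - \ds}{\threshup - 1}|B_l| - \sum_l \frac{\ds\threshl - 1}{\threshl + 1}|A_l| - \sum_{l < k}\frac{\ds - \threshup}{\threshup + 1}|B_l|$, plus a further nonpositive contribution from $C$. The ``\P wins'' half of Condition~\ref{cond_general} asserts precisely that this expression for $\Lambda$ is $\le 0$; since $d(\P,\Q) \ge 0$ and the $C$-contribution is nonpositive, we conclude $SC(\P) \le \ds\, SC(\Q)$, i.e. distortion at most $\ds$.

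For existence I would show that for every instance at least one of the two inequalities in Condition~\ref{cond_general} holds, so a winner can always be chosen consistently. Suppose both failed; adding the two strict reversed inequalities and writing $n_l = |A_l| + |B_l|$, a contradiction follows provided that, termwise, the left coefficient never exceeds the right: for $l < k$ the right-hand coefficients $\frac{\ds\threshl - 1}{\threshl+1}$ and $\frac{\ds - \threshup}{\threshup+1}$ are both nonnegative, and for $l \ge k$ we need $\frac{\threshup - \ds}{\threshup - 1} \le \frac{\ds\threshl - 1}{\threshl + 1}$. After cross-multiplying, this last inequality rearranges exactly to $\ds \ge \frac{\threshl\threshup + 2\threshup - 1}{\threshl\threshup + 1}$, which holds by the very definition of $\ds$. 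I expect recognizing that this coefficient comparison collapses to the defining maximum to be the crux of the whole proof: it is simultaneously why the condition is feasible and why $\ds$ is the right target. The most error-prone routine parts are the bookkeeping for the boundary group $B_k$ (where $\threshl \le \ds < \threshup$) and the separate treatment of $C$.
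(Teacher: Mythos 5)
Your proposal is correct and takes essentially the same approach as the paper: the same per-group charging bounds (for $A_l$, for $B_l$ split at the index $k$, and for $C$) with $d(P,Q)$ as the store of value, the same summation and application of Condition \ref{cond_general} to make the $d(P,Q)$ coefficient non-positive, the same termwise coefficient comparison (which collapses to the defining maximum of $\ds$) to show one candidate always satisfies the condition, and the same appeal to Theorem \ref{thm_mT_lowerbound} for tightness. Your extraction of a saving term $\frac{\ds - \thresh_1}{\thresh_1+1}d(P,Q)$ for the voters in $C$ is a harmless strengthening of the paper's simpler bound $d(i,P) \le \thresh_1 d(i,Q) \le \ds\, d(i,Q)$.
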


\begin{proof} $ $\\
{\em Outline:}
First, we prove the upper bound on distortion. We want to show that if \P wins then $SC(P) = \sumL_{l=1}^m \sumL_{i \in A_l} d(i,P) + \sumL_{l=1}^m \sumL_{j \in B_l} d(j,P) + \sumL_{k \in C} d(k,P) \leq \ds \big( \sumL_{l=1}^m \sumL_{i \in A_l} d(i,Q) + \sumL_{l=1}^m \sumL_{j \in B_l} d(j,Q) + \sumL_{k \in C} d(j,Q) \big) = \ds SC(Q)$. We prove this by using four lemmas which each establish an upper bound on the social cost accrued to \P by a subset of the voters. To do this we use $d(\P,\Q)$ as a sort of numeraire or store of value. Summing over the three inequalities in these lemmas proves the upper bound on distortion as long as Condition \ref{cond_general} is met. Tightness follows from Theorem \ref{thm_mT_lowerbound} in Section \ref{sec_lb}.

\begin{lemma}
If \P wins then
$\sumL_{k \in C} d(k,P) \leq \sumL_{k \in C} \ds d(k,Q)$
\end{lemma}

\begin{proof} $ $\\
$\forall k \in C$ : $d(k,P) \leq \thresh_1 \ d(k,Q)$ and we know $\thresh_1 \leq \ds$ by our choice of $\ds$.
\end{proof}

\begin{lemma}
If \P wins then
$\sumL_{l = 1}^m \sumL_{i \in A_l} d(i,P) \leq \sumL_{l = 1}^m \sumL_{i \in A_l} \ds d(i,Q) - \sumL_{l = 1}^m \sumL_{i \in A_l} ( \frac{\ds \thresh_l - 1}{\thresh_l + 1}) d(P,Q)$
\end{lemma}

\begin{proof} $ $\\
Recall from the definition of $A_l$ that $\forall l \leq m, \forall i \in A_l$ : $d(i,P) \leq \frac{1}{\thresh_l}d(i,Q)$.\\

This implies $\forall l \leq m, \forall i \in A_l$ : $d(P,Q) \leq d(i,P) + d(i,Q) \leq \frac{\thresh_l + 1}{\thresh_l} d(i,Q)$.\\

It follows that
\begin{align*}
\sumL_{l = 1}^m \sumL_{i \in A_l} d(i,P) &\leq \sumL_{l = 1}^m \sumL_{i \in A_l} \frac{1}{\thresh_l} d(i,Q)\\
&= \sumL_{l = 1}^m \sumL_{i \in A_l} \Big(\ds d(i,Q) - (\ds - \frac{1}{\thresh_l}) d(i,Q) \Big)\\
&\leq \sumL_{l = 1}^m \sumL_{i \in A_l} \Big( \ds d(i,Q) - \big(\frac{\ds \thresh_l - 1}{\thresh_l}\big)\big(\frac{\thresh_l}{\thresh_l + 1}\big) d(P,Q) \Big)\\
&= \sumL_{l = 1}^m \sumL_{i \in A_l} \ds d(i,Q) - \sumL_{l = 1}^m \sumL_{i \in A_l} \Big(\frac{\ds \thresh_l - 1}{\thresh_l + 1} \Big)d(P,Q)
\end{align*}

\end{proof}

\vspace{5mm}
\begin{lemma}
If \P wins then $\sumL_{l = 1}^{k-1} \sumL_{j \in B_l} d(j, P) \leq \sumL_{l = 1}^{k-1} \sumL_{j \in B_l} \ds d(j,Q) - \sumL_{l = 1}^{k-1} \sumL_{j \in B_l} ( \frac{\ds - \threshup}{\threshup + 1}) d(P,Q)$
\end{lemma}

\begin{proof}
Recall from the definition of $B_l$ that $\forall l < k, \forall j \in B_l$ : $d(j,P) \leq \threshup d(j,Q)$.\\

This implies $\forall l < k, \forall j \in B_l$ : $d(P,Q) \leq d(j,P) + d(j,Q) \leq (\threshup + 1) d(j,Q)$.\\

It follows that
\begin{align*}
\sumL_{l = 1}^{k-1} \sumL_{j \in B_l} d(j,P) &\leq \sumL_{l = 1}^k \sumL_{j \in B_l} \threshup d(j,Q)\\
&= \sumL_{l = 1}^{k-1} \sumL_{j \in B_l} \Big(\threshup d(j,Q) + (\ds - \threshup) d(j,Q) - (\ds - \threshup) d(j,Q) \Big)\\
&= \sumL_{l = 1}^{k-1} \sumL_{j \in B_l} \ds d(j,Q) -  \sumL_{l = 1}^k \sumL_{j \in B_l} (\ds - \threshup) d(j,Q)\\
&\leq \sumL_{l = 1}^{k-1} \sumL_{j \in B_l} \ds d(j,Q) - \sumL_{l = 1}^k \sumL_{j \in B_l} \Big(\frac{\ds - \threshup}{\threshup + 1} \Big) d(P,Q)
\end{align*}
\end{proof}

\vspace{5mm}
\begin{lemma}
If \P wins then $\sumL_{l = k}^{m} \sumL_{j \in B_l} d(j, P) \leq \sumL_{l = k}^{m} \sumL_{j \in B_l} \ds d(j,Q) +  \sumL_{l = k}^{m} \sumL_{j \in B_l} ( \frac{ \threshup - \ds}{\threshup - 1}) d(P,Q)$
\end{lemma}

\begin{proof}
Recall from the definition of $B_l$ that $\forall l \geq k, \forall j \in B_l$ : $d(j,P) \leq \threshup d(j,Q)$.\\

From triangle inequality $\forall j : d(j,P) \leq d(j,Q) + d(P,Q)$.\\

Together these imply, $\forall l \geq k, \forall j \in B_l$ : $d(j,P) \leq x \threshup d(j,Q)  + (1-x) \big( d(j,Q) + d(P,Q) \big)$ for any $0 \leq x \leq 1.$\\

Below, for each $l \geq k$ we choose $x = \frac{\ds - 1}{\threshup - 1} \leq 1$.\\

It follows that
\begin{align*}
\sumL_{l = k}^m \sumL_{j \in B_l} d(j,P) &\leq \sumL_{l = k}^m \sumL_{j \in B_l} \big(\frac{\ds - 1}{\threshup - 1} \big)\threshup d(j,Q)  + \big(1- \frac{\ds - 1}{\threshup - 1} \big) \big( d(j,Q) + d(P,Q) \big)\\
&= \sumL_{l = k}^m \sumL_{j \in B_l} \ds d(j,Q) +  \sumL_{l = k}^m \sumL_{j \in B_l} \big(1- \frac{\ds - 1}{\threshup - 1} \big) d(P,Q)\\
&= \sumL_{l = k}^m \sumL_{j \in B_l} \ds d(j,Q) +  \sumL_{l = k}^m \sumL_{j \in B_l} \big(\frac{\threshup - \ds}{\threshup - 1} \big) d(P,Q)
\end{align*}

\end{proof}

By summing over the inequalities in the four preceding lemmas, we have
$$SC(P) \leq \ds SC(Q) +  d(P,Q) \bigg(\sumL_{l = k}^m \sumL_{j \in B_l} \big(\frac{\threshup - \ds}{\threshup - 1} \big) - \sumL_{l = 1}^k \sumL_{j \in B_l} \Big(\frac{\ds - \threshup}{\threshup + 1} \Big) - \sumL_{l = 1}^m \sumL_{i \in A_l} \Big(\frac{\ds \thresh_l - 1}{\thresh_l + 1} \Big) \bigg)$$

If Condition \ref{cond_general} holds when \P wins the $d(P,Q)$ term on the RHS is non-positive, and we have $SC(P) \leq \ds SC(Q)$ as desired.
\end{proof}

We have now shown that any voting rule obeying the above condition has distortion at most \ds. We now prove that for any instance, selecting one of the two candidates {\em must} satisfy Condition \ref{cond_general}, so we can construct resolute single-winner voting rules which satisfy this condition. Last, we provide a specific weighted majority rule which always satisfies Condition \ref{cond_general}.

\begin{lemma}
Given any instance, i.e., a set of voters, two candidates, and a set of thresholds, selecting at least one of the candidates must satisfy Condition \ref{cond_general}.
\end{lemma}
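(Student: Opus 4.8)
The plan is to treat Condition~\ref{cond_general} as a pair of linear inequalities, one certifying that \P may be chosen as winner and one certifying that \Q may be chosen, and then to show these two inequalities cannot fail simultaneously. I would write
$$\Phi_P = \sumL_{l=1}^m \frac{\ds \threshl - 1}{\threshl + 1}|A_l| + \sumL_{l=1}^{k-1} \frac{\ds - \threshup}{\threshup + 1}|B_l| - \sumL_{l=k}^m \frac{\threshup - \ds}{\threshup - 1}|B_l|,$$
so that \P satisfies Condition~\ref{cond_general} exactly when $\Phi_P \geq 0$, and let $\Phi_Q$ denote the same expression with the roles of $A_l$ and $B_l$ interchanged, so that \Q satisfies the condition exactly when $\Phi_Q \geq 0$. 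It then suffices to prove $\Phi_P + \Phi_Q \geq 0$, since then at least one of $\Phi_P,\Phi_Q$ is nonnegative and the corresponding candidate can be selected.

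Adding the two expressions, each group $A_l$ and $B_l$ enters with the same coefficient, so $\Phi_P + \Phi_Q = \sumL_{l=1}^m c_l (|A_l| + |B_l|)$, where $c_l = \frac{\ds \threshl - 1}{\threshl + 1} + \frac{\ds - \threshup}{\threshup + 1}$ for $1 \leq l \leq k-1$ and $c_l = \frac{\ds \threshl - 1}{\threshl + 1} - \frac{\threshup - \ds}{\threshup - 1}$ for $k \leq l \leq m$. Since $|A_l| + |B_l| \geq 0$, it is enough to show $c_l \geq 0$ for every $l$. For $l \leq k-1$ this is immediate: here $\threshup = \thresh_{l+1} \leq \thresh_k \leq \ds$, so both summands $\frac{\ds \threshl - 1}{\threshl + 1}$ and $\frac{\ds - \threshup}{\threshup + 1}$ are nonnegative (recall $\ds \geq \thresh_1 \geq 1$, which comes from the $l=0$ term of the maximum defining $\ds$).

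The crux is the case $k \leq l \leq m$, which I would handle by clearing denominators. Since $\threshup - 1 > 0$ and $\threshl + 1 > 0$, the inequality $c_l \geq 0$ is equivalent to $(\ds \threshl - 1)(\threshup - 1) \geq (\threshup - \ds)(\threshl + 1)$; expanding both products and collecting terms reduces this to $\ds(\threshl \threshup + 1) \geq \threshl \threshup + 2\threshup - 1$, i.e.\ $\ds \geq \frac{\threshl \threshup + 2\threshup - 1}{\threshl \threshup + 1}$. This last inequality holds by the very definition $\ds = \max\limits_{0 \leq l \leq m} \frac{\threshl \threshup + 2\threshup - 1}{\threshl \threshup + 1}$. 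Hence every $c_l \geq 0$, giving $\Phi_P + \Phi_Q \geq 0$, and at least one candidate satisfies Condition~\ref{cond_general}.

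I expect the main obstacle to be exactly this algebraic reduction in the second case: one must verify that the per-group coefficient nonnegativity collapses precisely onto the term appearing in the maximum defining $\ds$. This is what makes the choice of $\ds$ simultaneously tight (matching the lower bound of Theorem~\ref{thm_mT_lowerbound}) and large enough that the winner-selection condition is always satisfiable, so the whole argument hinges on recognizing that the definition of $\ds$ is tailored to make each $c_l$ nonnegative.
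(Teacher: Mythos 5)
Your proposal is correct and follows essentially the same route as the paper: the paper likewise observes that the sum of the two inequalities in Condition~\ref{cond_general} reduces to $\sum_l c_l(|A_l|+|B_l|)\geq 0$, verifies the coefficients for $l\geq k$ via the algebraic identity $(\ds\threshl-1)(\threshup-1)-(\threshl+1)(\threshup-\ds)=\ds(\threshl\threshup+1)-(\threshl\threshup+2\threshup-1)\geq 0$ from the definition of $\ds$, handles $l<k$ by nonnegativity of both terms, and concludes that both inequalities cannot fail simultaneously. The only cosmetic difference is direction: the paper starts from the definition of $\ds$ and expands forward to the combined inequality, while you start from the combined inequality and reduce backward to the definition of $\ds$.
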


\begin{proof}
Put another way, at least one of the two inequalities in Condition \ref{cond_general} must hold, so there can be no instance in which neither candidate can be selected.\\

Suppose $\ds \geq \max\limits_{0 \leq l \leq m} \{ \frac{\threshl \threshup + 2\threshup - 1}{\threshl \threshup + 1} \}$.\\

By moving over the denominator, this can be rewritten as $$\forall l \leq m : \ds(\threshl \threshup + 1) \geq \threshl \threshup + 2\threshup - 1$$

or  $$\forall l \leq m : (\ds \threshl - 1)(\threshup - 1) - (\threshl + 1)(\threshup - \ds) \geq 0.$$

We can divide both sizes to obtain $$\forall l \leq m : \frac{(\ds \threshl - 1)(\threshup - 1) - (\threshl + 1)(\threshup - \ds)}{(\threshl + 1)(\threshup - 1)} \geq 0$$

and simplify to get, $$\forall l \leq m : \frac{\ds \threshl - 1}{\threshl + 1} - \frac{\threshup - \ds}{\threshup - 1} \geq 0.$$

We can now express our inequality in terms of the sets of voters $$\sumL_{l = k}^m (|A_l| + |B_l|)(\frac{\ds \threshl - 1}{\threshl + 1} - \frac{\threshup - \ds}{\threshup - 1}) \geq 0$$

and separate to yield $$\sumL_{l = k}^m (|A_l| + |B_l|)(\frac{\ds \threshl - 1}{\threshl + 1} - \frac{\threshup - \ds}{\threshup - 1}) + \sumL_{l = 1}^{k-1} (|A_l| + |B_l|)(\frac{\ds \threshl - 1}{\threshl + 1} + \frac{\ds - \threshup}{\threshup + 1}) \geq 0.$$

We can separate terms further to see that $$\sumL_{l=k}^m (\frac{\threshup - \ds}{\threshup - 1})(|A_l| + |B_l|) \leq \sumL_{l=1}^m (\frac{\ds \threshl - 1}{\threshl + 1})(|A_l|+|B_l|) + \sumL_{l=1}^{k-1} (|A_l| + |B_l|) (\frac{\ds - \threshup}{\threshup + 1}).$$

As a consequence, one of the following must be true for the sum of these inequalities to be true:
\begin{align}
\sumL_{l=k}^m (\frac{\threshup - \ds}{\threshup - 1})|B_l| &\leq \sumL_{l=1}^m (\frac{\ds \threshl - 1}{\threshl + 1})|A_l| + \sumL_{l=1}^{k-1} |B_l| (\frac{\ds - \threshup}{\threshup + 1})\\
\sumL_{l=k}^m (\frac{\threshup - \ds}{\threshup - 1})|A_l| &\leq \sumL_{l=1}^m (\frac{\ds \threshl - 1}{\threshl + 1})|B_l| + \sumL_{l=1}^{k-1} |A_l| (\frac{\ds - \threshup}{\threshup + 1})
\end{align}

\end{proof}

Therefore resolute single-winner voting rules which maintain Condition \ref{cond_general} can be created, and such a rule achieves optimal distortion between two candidates. We consider one such rule below, although many are possible.

\begin{decision_rule} \label{rule_general} $ $\\ 
For all $l <  k$, assign to all voters in $A_l$ and $B_l$ a weight of $\frac{(\ds+1)(\threshl \threshup - 1)}{(\threshl + 1)(\threshup + 1)}$. For all $l \geq k$, assign voters in $A_l$ and $B_l$ a weight of $\big((\frac{\threshup - \ds}{\threshup - 1}) + (\frac{\ds \threshl - 1}{\threshl+1}) \big)$. Lastly, assign all voters in $C$ a weight of 0. Choose the candidate by a weighted majority vote.
\end{decision_rule}

\begin{theorem}
	\label{thm-distortion-general-rule}
Weighted Majority Rule \ref{rule_general} satisfies Condition \ref{cond_general}, and therefore achieves the optimal distortion for two candidates with preference strength information.
\end{theorem}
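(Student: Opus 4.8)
The plan is to show that the weighted-majority comparison induced by Weighted Majority Rule \ref{rule_general} is exactly the difference of the two sides of Condition \ref{cond_general}, so that ``\P wins the weighted vote'' forces the \P-inequality of the condition to hold. Throughout I would write $a_l = \frac{\ds\thresh_l - 1}{\thresh_l+1}$, $b_l = \frac{\thresh_{l+1}-\ds}{\thresh_{l+1}-1}$, and $c_l = \frac{\ds-\thresh_{l+1}}{\thresh_{l+1}+1}$, and set $D_P = \sumL_{l=1}^m a_l|A_l| + \sumL_{l=1}^{k-1} c_l|B_l| - \sumL_{l=k}^m b_l|B_l|$ together with $D_Q$ defined analogously with the roles of $A$ and $B$ swapped. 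With this notation the \P-part of Condition \ref{cond_general} is precisely $D_P \geq 0$ and the \Q-part is $D_Q \geq 0$. Recall that the split index satisfies $\thresh_k \leq \ds < \thresh_{k+1}$, so for $l < k$ we have $\thresh_{l+1} \leq \ds$ (hence $c_l \geq 0$) and for $l \geq k$ we have $\thresh_{l+1} > \ds$ (hence $b_l > 0$).

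The only real computation is the algebraic identity $\frac{(\ds+1)(\thresh_l\thresh_{l+1}-1)}{(\thresh_l+1)(\thresh_{l+1}+1)} = a_l - c_l$, which I would verify by clearing denominators: expanding $(\ds\thresh_l-1)(\thresh_{l+1}+1) - (\ds-\thresh_{l+1})(\thresh_l+1)$ collapses to $(\ds+1)(\thresh_l\thresh_{l+1}-1)$. Consequently the weight that Rule \ref{rule_general} assigns for $l<k$ is exactly $a_l-c_l$, while by definition the weight for $l \geq k$ is exactly $a_l + b_l$, and voters in $C$ carry weight $0$. Writing the signed weighted-vote total in favour of \P as $WM_P = \sumL_{l=1}^{k-1}(a_l-c_l)(|A_l|-|B_l|) + \sumL_{l=k}^m (a_l+b_l)(|A_l|-|B_l|)$ and regrouping $D_P - D_Q$ by index range shows $WM_P = D_P - D_Q$: in $D_P - D_Q$ the coefficient of $(|A_l|-|B_l|)$ is $a_l - c_l$ for $l<k$ and $a_l+b_l$ for $l\geq k$, matching the weights termwise.

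The second ingredient is already available: the displayed inequality $\sumL_{l=k}^m b_l(|A_l|+|B_l|) \leq \sumL_{l=1}^m a_l(|A_l|+|B_l|) + \sumL_{l=1}^{k-1} c_l(|A_l|+|B_l|)$ derived in the proof of the preceding lemma (the one guaranteeing that at least one candidate satisfies Condition \ref{cond_general}) is exactly the statement $D_P + D_Q \geq 0$, and it holds for every instance because the choice of $\ds$ makes $a_l - b_l \geq 0$ for all $l$.

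Combining the two facts finishes the argument in one line: if Rule \ref{rule_general} selects \P, then the weighted total for \P is at least that for \Q, i.e.\ $WM_P \geq 0$, so $D_P - D_Q \geq 0$; adding this to $D_P + D_Q \geq 0$ yields $2D_P = (D_P-D_Q)+(D_P+D_Q) \geq 0$, hence $D_P \geq 0$, which is the \P-part of Condition \ref{cond_general}. The symmetric computation handles selection of \Q, and a tie ($WM_P = 0$) forces $D_P = D_Q \geq 0$ so either choice is valid. Since all weights are strictly positive (for $l\geq k$, $b_l>0$ and $a_l\geq0$; for $l<k$, $\thresh_l\thresh_{l+1}>1$ gives a positive numerator), the rule is a well-defined weighted majority vote. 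Having shown Rule \ref{rule_general} always meets Condition \ref{cond_general}, the distortion bound $\ds$ follows from the theorem that any rule satisfying the condition attains distortion $\ds$, and optimality follows from the matching lower bound of Theorem \ref{thm_mT_lowerbound}. I expect the main obstacle to be bookkeeping the index split at $k$ so that the coefficient matching $WM_P = D_P - D_Q$ is exact; once the designed weights are recognized as $a_l-c_l$ and $a_l+b_l$, and the prior lemma is read as $D_P+D_Q\geq 0$, the final implication is immediate.
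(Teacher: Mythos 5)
Your proof is correct and follows essentially the same route as the paper: the paper's (very terse) proof likewise observes that Rule \ref{rule_general} is exactly the rule ``choose the candidate whose difference $\mathrm{RHS}-\mathrm{LHS}$ in Condition \ref{cond_general} is larger,'' and combines this with the preceding lemma guaranteeing that at least one of the two differences is non-negative. The only difference is one of completeness: you explicitly verify the algebraic identity $\frac{(\ds+1)(\threshl\threshup-1)}{(\threshl+1)(\threshup+1)} = \frac{\ds\threshl-1}{\threshl+1} - \frac{\ds-\threshup}{\threshup+1}$ and the termwise matching $WM_P = D_P - D_Q$, which the paper asserts without proof, so your write-up is a fleshed-out version of the paper's argument rather than a different one.
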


\begin{proof}
Consider the two inequalities in Condition \ref{cond_general} which dictate whether it is permissible to choose \P or \Q respectively. We can take the difference RHS - LHS of each inequality, which must be non-negative for at least one of them, and choose the candidate corresponding to the inequality that yields a bigger difference. This is exactly our weighted majority rule.
\end{proof}

Weighted Majority Rule \ref{rule_general} is well-behaved because voters with weaker preferences are assigned smaller weights. Voters whose preferences are so weak that we cannot determine their preferred candidate must have a weight of 0 because it is unknown who they support, and no voters have negative weight. However, voters with preference strength tending towards infinity cannot have infinitely large weights. Here, the weights of the voters whose decisiveness is higher than $\thresh_m$ is $1 + \frac{\ds \thresh_m - 1}{\thresh_m + 1}$, which converges asymptotically to $\ds + 1$ as $\thresh_m \rightarrow \infty$. However, many other rules with the same distortion are possible and it is an open question to determine which rules yield the best distortion for multiple candidates.

How much effort, time, and money, should someone charged with developing a voting protocol, or with choosing an alternative minimizing social cost, spend in order to understand the preference strengths of voters in more detail? With only ordinal preferences $(m = 1, \thresh = 1)$, the best distortion achievable is by simple majority vote, yielding a distortion of 3. However, if we are permitted any single threshold of our choice $(m = 1, 1 < \thresh)$, we can bring the distortion down to significantly to 2. With any two thresholds of our choice $(m = 1, 1 \leq \thresh_1 < \thresh_2)$, we can bring distortion down further to $\nicefrac{5}{3} \approx 1.67$, and as the number of thresholds permitted increases we see distortion converge to $\sqrt{2} \approx 1.4$. (See Figure \ref{fig:General_Case_Distortion_Plot}.) This is because in the limit when we know the exact preference strengths of all voters, distortion can be bounded by $\sqrt{2}$, as we show in the next section. Thus, there is not much incentive to spend a huge amount of money to understand exact preference strengths, as one or two carefully chosen thresholds already provide very good distortion.

For the general case with arbitrary thresholds and no extra assumptions, we can demonstrate a bound of $\ds^2$ on distortion for three or more candidates. This is obtained simply by forming a pairwise majority graph based on the above weighted majority rule, and then taking any alternative in the uncovered set of the resulting graph. It remains an open question whether there exist weighted majority rules that can improve the bound on distortion in the general case using this method, as we can when we have a single threshold and preferences, or preferences alone.  More generally, it is unknown how to get a tight bound on distortion with multiple candidates using any rule, even in the simpler case with only ordinal preferences \cite{mungala2019improved}.

\begin{figure}[htb]
\centering
\begin{minipage}{.43\linewidth}
  \includegraphics[width=\linewidth]{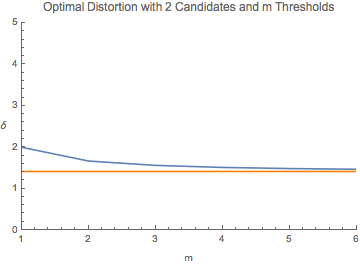}
  \captionof{figure}{Best achievable distortion for two candidates if allowed the best choice of $m$ thresholds. Converges to $\sqrt{2}$ with the number of thresholds.}
  \label{fig:two_candidates_m_thresh}
\end{minipage}
\hspace{.04\linewidth}
\begin{minipage}{.43\linewidth}
  \includegraphics[width=\linewidth]{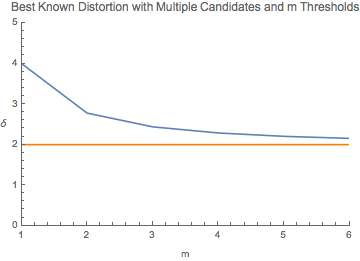}
  \captionof{figure}{Best known distortion for multiple candidates if allowed the best choice of $m$ thresholds. Converges to $2$ with the number of thresholds.}
  \label{fig:multi_candidates_m_thresh}
\end{minipage}
\end{figure}

\subsection{Exact Preference Strengths of All Voters}


In this section, for completeness of analysis, we consider the case when we know the exact preference strengths of all the voters with respect to every pair of candidates. This corresponds to the limit settings in which are have an infinite number of thresholds which includes every number greater than 1. As we established previously, even with this knowledge it is not possible to form deterministic algorithms with distortion better than $\sqrt{2}$. Here we give a mechanism which obtains this bounds.

Suppose there are two candidates $P$ and $Q$, and we are given the preference strengths of every voter. Denote $A$ as the set of voters that prefer $P$ to $Q$, and $B$ as the set of voters that prefer $Q$ to $P$. The preference strength of any $i \in A$ is denoted as $\alpha_i$, and the preference strength of any $j \in B$ is denoted as $\beta_j$,


\begin{theorem}
	\label{thm-exact}
With 2 candidates $P$ and $Q$ in a metric,  given the exact preference strength of every voter, if $\sum_{i \in A} \frac{\sqrt{2} \alpha_i-1}{\alpha_i + 1} \ge \sum_{j \in B | \beta_j > \sqrt{2}}  \frac{\beta_j - \sqrt{2}}{\beta_j - 1} - \sum_{j \in B | \beta_j \le \sqrt{2}} \frac{(\sqrt{2} - \beta_j)}{\beta_j + 1}$, then $SC(P) \le \sqrt{2} SC(Q)$. \\
\end{theorem}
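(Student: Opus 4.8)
The plan is to mirror the charging argument used in Theorem~\ref{thm-1-thresh} and in the general-threshold theorem, but now applied \emph{pointwise}: each voter's exact preference strength plays the role of their own personal threshold, and I fix the target ratio to be $\ds = \sqrt{2}$ throughout. As before, the central device is to use $d(P,Q)$ as a store of value. I bound each voter's distance to $P$ by $\sqrt{2}$ times their distance to $Q$ plus or minus a multiple of $d(P,Q)$, sum over all voters so that the $d(\cdot,Q)$ terms collapse into $\sqrt{2}\,SC(Q)$, and then show that the leftover $d(P,Q)$ term is non-positive precisely under the stated hypothesis.

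First I would establish the per-voter bounds in three cases. For $i \in A$ we have $d(i,P) \le \frac{1}{\alpha_i} d(i,Q)$; combining this with the triangle-inequality estimate $d(i,Q) \ge \frac{\alpha_i}{\alpha_i+1} d(P,Q)$ (exactly as in Lemma~\ref{lemma-1-thresh-A2} with $\thresh$ replaced by $\alpha_i$) yields
$$d(i,P) \le \sqrt{2}\, d(i,Q) - \frac{\sqrt{2}\,\alpha_i - 1}{\alpha_i + 1}\, d(P,Q),$$
so these voters contribute a \emph{saving}, the coefficient being positive since $\alpha_i \ge 1$. For $j \in B$ I split on whether $\beta_j$ exceeds $\sqrt{2}$. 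When $\beta_j \ge \sqrt{2}$, I take the convex combination of $d(j,P) \le \beta_j\, d(j,Q)$ and $d(j,P) \le d(j,Q) + d(P,Q)$ with weight $x = \frac{\sqrt{2}-1}{\beta_j - 1} \in [0,1]$ on the first inequality (the first case of Lemma~\ref{lemma-1-thresh-B1} with $\thresh = \beta_j$, $\ds = \sqrt{2}$), giving
$$d(j,P) \le \sqrt{2}\, d(j,Q) + \frac{\beta_j - \sqrt{2}}{\beta_j - 1}\, d(P,Q),$$
a \emph{cost}. When $\beta_j \le \sqrt{2}$, the triangle inequality gives $d(j,Q) \ge \frac{1}{\beta_j + 1} d(P,Q)$, and writing $\beta_j\, d(j,Q) = \sqrt{2}\, d(j,Q) - (\sqrt{2}-\beta_j)\, d(j,Q)$ yields
$$d(j,P) \le \sqrt{2}\, d(j,Q) - \frac{\sqrt{2} - \beta_j}{\beta_j + 1}\, d(P,Q),$$
again a saving. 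The boundary case $\beta_j = \sqrt{2}$ produces coefficient $0$ in either branch, so it is harmless and may be placed in the $\le\sqrt{2}$ group to match the statement.

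Summing these inequalities over $A$ and the two parts of $B$ leaves
$$SC(P) \le \sqrt{2}\, SC(Q) + d(P,Q)\left( \sum_{j \in B,\, \beta_j > \sqrt{2}} \frac{\beta_j - \sqrt{2}}{\beta_j - 1} - \sum_{j \in B,\, \beta_j \le \sqrt{2}} \frac{\sqrt{2} - \beta_j}{\beta_j + 1} - \sum_{i \in A} \frac{\sqrt{2}\,\alpha_i - 1}{\alpha_i + 1} \right).$$
Since $d(P,Q) \ge 0$, the desired conclusion $SC(P) \le \sqrt{2}\,SC(Q)$ holds exactly when the parenthesized coefficient is non-positive, which is a verbatim rearrangement of the theorem's hypothesis.

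I expect essentially no genuine obstacle here, since the argument is a direct specialization of the already-proven threshold machinery. The only points requiring care are verifying that $x = \frac{\sqrt{2}-1}{\beta_j - 1}$ indeed lies in $[0,1]$ precisely when $\beta_j \ge \sqrt{2}$ (so the convex combination is legitimate), confirming that every ``saving'' coefficient is non-negative (using $\alpha_i \ge 1$ and $\beta_j \ge 1$), and checking that the $\beta_j = \sqrt{2}$ voters can be assigned to either branch without affecting the bound. The conceptual content is simply the recognition that, with exact strengths known, each voter may be charged at their own individual rate against the common numeraire $d(P,Q)$.
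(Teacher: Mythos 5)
Your proposal is correct and takes essentially the same approach as the paper: the same three per-voter bounds with identical coefficients (a saving of $\frac{\sqrt{2}\alpha_i-1}{\alpha_i+1}\,d(P,Q)$ for $i \in A$, a saving of $\frac{\sqrt{2}-\beta_j}{\beta_j+1}\,d(P,Q)$ for weak $B$-voters, a cost of $\frac{\beta_j-\sqrt{2}}{\beta_j-1}\,d(P,Q)$ for strong $B$-voters), summed so that the stated hypothesis makes the leftover $d(P,Q)$ coefficient non-positive. The only cosmetic difference is that you derive the $\beta_j > \sqrt{2}$ bound via an explicit convex combination of $d(j,P) \le \beta_j d(j,Q)$ and the triangle inequality, whereas the paper splits $d(j,P)$ as $\frac{\sqrt{2}}{\beta_j} d(j,P) + \bigl(1-\frac{\sqrt{2}}{\beta_j}\bigr) d(j,P)$ and bounds the second piece by $\frac{\beta_j-\sqrt{2}}{\beta_j-1}\,d(P,Q)$; the two manipulations are algebraically equivalent.
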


\begin{proof}

	$\forall i \in A$,
	$$d(P, Q) \le d(i, P) + d(i, Q) = \frac{\alpha_i + 1}{\alpha_i} (i, Q)$$

	Bound the sum of $d(i, P)$ for all $i \in A$:

	\begin{align*}
		\sum_{i \in A} d(i, P) &= \sum_{i \in A} \frac{1}{\alpha_i} d(i, Q)\\
													 &= \sum_{i \in A} \frac{1}{\alpha_i} d(i, Q) + \sqrt{2}\sum_{i \in A} d(i, Q) - \sqrt{2}\sum_{i \in A} d(i, Q)\\
													 &= \sqrt{2}\sum_{i \in A} d(i, Q) - \sum_{i \in A} (\sqrt{2} - \frac{1}{\alpha_i})d(i, Q)\\
													 &\le \sqrt{2}\sum_{i \in A} d(i, Q) - \sum_{i \in A} (\sqrt{2} - \frac{1}{\alpha_i}) \frac{\alpha_i}{\alpha_i + 1} d(P, Q)\\
													 &= \sqrt{2}\sum_{i \in A} d(i, Q) - \sum_{i \in A} \frac{\sqrt{2}\alpha_i-1}{\alpha_i + 1} d(P, Q)
	\end{align*}

	We know that $\forall j \in B$ such that  $\beta_j \le \sqrt{2}$,

	$$ d(P, Q) \le d(j, P) + d(j, Q) = (\beta_j + 1) (j, Q) $$

	Bound the sum of $d(j, P)$ for all $j \in B$ that $\beta_j \le \sqrt{2}$,

	\begin{align*}
		\sum_{j \in B | \beta_j \le \sqrt{2}} d(j, P) &= \sum_{j \in B | \beta_j \le \sqrt{2}} \beta_j d(j, Q) \\
		&= \sum_{j \in B | \beta_j \le \sqrt{2}} \beta_j d(j, Q) + \sqrt{2} \sum_{j \in B | \beta_j \le \sqrt{2}} d(j, Q) - \sqrt{2} \sum_{j \in B | \beta_j \le \sqrt{2}} d(j, Q)\\
		&= \sqrt{2} \sum_{j \in B | \beta_j \le \sqrt{2}} d(j, Q) - \sum_{j \in B | \beta_j \le \sqrt{2}} (\sqrt{2} - \beta_j) d(j, Q) \\
		&\le \sqrt{2} \sum_{j \in B | \beta_j \le \sqrt{2}} d(j, Q) -\sum_{j \in B | \beta_j \le \sqrt{2}} \frac{\sqrt{2} - \beta_j}{\beta_j + 1} d(P, Q)
	\end{align*}

	$\forall j \in B$  such that $\beta_j > \sqrt{2}$,

  \begin{align*}
		d(j, P) &\le d(j, Q) + d(P,Q) \\
		(1 - \frac{1}{\beta_j}) d(j, P) &\le d(P, Q) \\
		d(j, P) &\le \frac{\beta_j}{\beta_j - 1} d(P, Q)
	\end{align*}

	We also know that $d(j, P) = \beta_j d(j, Q)$. Thus,
  \begin{align*}
		d(j, P) &= \frac{\sqrt{2}}{\beta_j} d(j, P) + (1 - \frac{\sqrt{2}}{\beta_j}) d(j, P)\\
		&= \frac{\sqrt{2}}{\beta_j} \times \beta_j d(j, Q) + \frac{\beta_j - \sqrt{2}}{\beta_j}d(j, P) \\
		&\le \sqrt{2} d(j, Q) + \frac{\beta_j - \sqrt{2}}{\beta_j} \times \frac{\beta_j}{\beta_j - 1} d(P, Q)\\
		&= \sqrt{2} d(j, Q) + \frac{\beta_j - \sqrt{2}}{\beta_j - 1} d(P, Q)
	\end{align*}

	Summing up for all $j \in B$  such that $\beta_j > \sqrt{2}$,
	$$ \sum_{j \in B | \beta_j > \sqrt{2}} d(j, P) \le \sqrt{2} \sum_{j \in B | \beta_j > \sqrt{2}} d(j, Q) + \sum_{j \in B | \beta_j > \sqrt{2}} \frac{\beta_j - \sqrt{2}}{\beta_j - 1} d(P, Q) $$

	Putting everything together,

	\begin{align*}
		&\sum_{i \in A} d(i, P) + \sum_{j \in B} d(j, P)\\
		&= \sum_{i \in A} \frac{1}{\alpha_i} d(i, Q) + \sum_{j \in B | \beta_j \le \sqrt{2}} d(j, P) + \sum_{j \in B | \beta_j > \sqrt{2}} d(j, P) \\
		&\le \sqrt{2} \sum_{i \in A \cup B} d(i, Q) + (- \sum_{i \in A} \frac{\sqrt{2}\alpha_i-1}{\alpha_i + 1} \text{\ } - \sum_{j \in B | \beta_j \le \sqrt{2}} \frac{\sqrt{2} - \beta_j}{\beta_j + 1} \text{\ } + \sum_{j \in B | \beta_j > \sqrt{2}} \frac{\beta_j - \sqrt{2}}{\beta_j - 1} ) d(P, Q)\\
		&\le \sqrt{2} \sum_{i \in A \cup B} d(i, Q)
	\end{align*}
\end{proof}

\begin{decision_rule}
\label{weighted_majority_rule_exact}
Given the exact preference strength of every voter for two candidates, assign weight $\frac{\sqrt{2} \alpha_i-1}{\alpha_i + 1}$ to each voter $i \in A$ such that $\alpha_i > \sqrt{2}$, and weight $\alpha_i - 1$ to each voter $i \in A$ such that $\alpha_i \le \sqrt{2}$. Assign weight $\frac{\sqrt{2} \beta_j-1}{\beta_j + 1}$ to each voter $j \in B$ such that $\beta_j > \sqrt{2}$ and weight $\beta_j - 1$ to each voter $j \in B$ such that $\beta_j \le \sqrt{2}$.\\
\end{decision_rule}

\begin{theorem}
	Using Weighted Majority Rule \ref{weighted_majority_rule_exact}, the distortion is at most $\sqrt{2}$ for two candidates, and this is the best bound possible.
\end{theorem}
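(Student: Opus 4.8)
The plan is to reduce everything to the already-proven Theorem~\ref{thm-exact}: I would show that whenever Weighted Majority Rule~\ref{weighted_majority_rule_exact} selects a candidate, that candidate satisfies the hypothesis of Theorem~\ref{thm-exact}, and hence has social cost at most $\sqrt{2}$ times the other's. Tightness is then immediate from Theorem~\ref{thm.lower_bound_exact}, which already rules out any deterministic distortion below $\sqrt{2}$. Assume without loss of generality that the rule selects $P$, so the total $A$-weight is at least the total $B$-weight. Writing $w(s)=\frac{\sqrt{2}s-1}{s+1}$ for $s>\sqrt{2}$ and $w(s)=s-1$ for $1\le s\le\sqrt{2}$ (note these two formulas agree at $s=\sqrt{2}$, both giving $\sqrt{2}-1$), the rule's choice reads $W_P:=\sum_{i\in A}w(\alpha_i)\ \ge\ \sum_{j\in B}w(\beta_j)=:W_Q$. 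Setting $L_A=\sum_{i\in A}\frac{\sqrt{2}\alpha_i-1}{\alpha_i+1}$, $L_B=\sum_{j\in B,\,\beta_j\le\sqrt{2}}\frac{\sqrt{2}-\beta_j}{\beta_j+1}$, and $R_B=\sum_{j\in B,\,\beta_j>\sqrt{2}}\frac{\beta_j-\sqrt{2}}{\beta_j-1}$, the hypothesis of Theorem~\ref{thm-exact} is exactly $L_A+L_B\ge R_B$, so this is the single inequality I would target.

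I would prove $L_A+L_B\ge R_B$ through the chain $L_A+L_B\ \ge\ W_P+L_B\ \ge\ W_Q+L_B\ \ge\ R_B$, where the middle step is just the rule's selection of $P$, and the two outer steps follow from pointwise comparisons. For $L_A\ge W_P$: the two weight formulas coincide when $\alpha_i>\sqrt{2}$, while for $1\le\alpha_i\le\sqrt{2}$ one has $\frac{\sqrt{2}\alpha_i-1}{\alpha_i+1}-(\alpha_i-1)=\frac{\alpha_i(\sqrt{2}-\alpha_i)}{\alpha_i+1}\ge 0$, so each $A$-term in $L_A$ dominates its counterpart in $W_P$. For $W_Q+L_B\ge R_B$ I would argue term by term over $B$: a voter with $\beta_j\le\sqrt{2}$ contributes $(\beta_j-1)+\frac{\sqrt{2}-\beta_j}{\beta_j+1}=\frac{\beta_j^2-\beta_j+\sqrt{2}-1}{\beta_j+1}$, which is nonnegative on $[1,\sqrt{2}]$ (the numerator is increasing there with value $\sqrt{2}-1>0$ at $\beta_j=1$), and a voter with $\beta_j>\sqrt{2}$ contributes $\frac{\sqrt{2}\beta_j-1}{\beta_j+1}-\frac{\beta_j-\sqrt{2}}{\beta_j-1}$, which I must show is nonnegative.

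The hard part is this last nonnegativity, and it is exactly where the constant $\sqrt{2}$ (and the recurring threshold $1+\sqrt{2}$) is forced. Clearing the positive denominators $\beta_j+1$ and $\beta_j-1$ turns the claim into $(\sqrt{2}-1)\beta_j^2-2\beta_j+(1+\sqrt{2})\ge 0$, a convex quadratic whose discriminant is $4-4(\sqrt{2}-1)(\sqrt{2}+1)=4-4=0$; hence it is nonnegative everywhere, with its double root at $\beta_j=1+\sqrt{2}$. (This is the tight case, and it is no accident that it matches the optimal threshold identified earlier in the paper.) With both $B$-contributions nonnegative, summing gives $W_Q+L_B\ge R_B$, completing the chain and yielding $L_A+L_B\ge R_B$. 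Applying Theorem~\ref{thm-exact} then gives $SC(P)\le\sqrt{2}\,SC(Q)$; since with two candidates the optimum is $\min\{SC(P),SC(Q)\}$, the distortion is at most $\sqrt{2}$, and a symmetric argument covers the case where $Q$ is selected (a tie is handled by either choice). Finally, Theorem~\ref{thm.lower_bound_exact} shows no deterministic mechanism can achieve distortion below $\sqrt{2}$, so $\ds=\sqrt{2}$ is the best bound possible.
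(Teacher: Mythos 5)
Your proposal is correct and follows essentially the same route as the paper's own proof: both reduce to the hypothesis of Theorem~\ref{thm-exact} via the pointwise comparison $(\alpha_i-1)\le\frac{\sqrt{2}\alpha_i-1}{\alpha_i+1}$ on the $A$-side and the key inequality $\frac{\sqrt{2}\beta_j-1}{\beta_j+1}\ge\frac{\beta_j-\sqrt{2}}{\beta_j-1}$ on the $B$-side (which the paper verifies from $(\beta_j-(\sqrt{2}+1))^2\ge 0$, the completed-square form of your zero-discriminant argument), and both invoke Theorem~\ref{thm.lower_bound_exact} for tightness. The only differences are bookkeeping: you carry $L_B$ through the chain and compare term-by-term, while the paper drops the nonnegative terms and subtracts $L_B$ at the end.
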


\begin{proof}
	Without loss of generality, suppose $\sum_{i \in A| \alpha_i > \sqrt{2}} \frac{\sqrt{2} \alpha_i-1}{\alpha_i + 1} + \sum_{i \in A| \alpha_i \le \sqrt{2}} (\alpha_i - 1) \ge \sum_{j \in B | \beta_j > \sqrt{2}}  \frac{\sqrt{2} \beta_j-1}{\beta_j + 1} + \sum_{j \in B| \beta_j \le \sqrt{2}} (\beta_j - 1)$, and we choose $P$ as the winner.\\

	For $\alpha_i \le \sqrt{2}$, $(\alpha_i - 1) \le \frac{\sqrt{2} \alpha_i-1}{\alpha_i + 1}$. By the condition above,

	\begin{align*}
		\sum_{i \in A| \alpha_i > \sqrt{2}} \frac{\sqrt{2} \alpha_i-1}{\alpha_i + 1} + \sum_{i \in A| \alpha_i \le \sqrt{2}} (\alpha_i - 1) &\ge \sum_{j \in B | \beta_j > \sqrt{2}}  \frac{\sqrt{2} \beta_j-1}{\beta_j + 1} + \sum_{j \in B| \beta_j \le \sqrt{2}} (\beta_j - 1) \\
		\sum_{i \in A| \alpha_i > \sqrt{2}} \frac{\sqrt{2} \alpha_i-1}{\alpha_i + 1} + \sum_{i \in A| \alpha_i \le \sqrt{2}} \frac{\sqrt{2} \alpha_i-1}{\alpha_i + 1}  &\ge \sum_{j \in B | \beta_j > \sqrt{2}}  \frac{\sqrt{2} \beta_j-1}{\beta_j + 1} + \sum_{j \in B| \beta_j \le \sqrt{2}} (\beta_j - 1) \\
		\sum_{i \in A} \frac{\sqrt{2} \alpha_i-1}{\alpha_i + 1}  &\ge \sum_{j \in B | \beta_j > \sqrt{2}}  \frac{\beta_j - \sqrt{2}}{\beta_j - 1}  \\
		\sum_{i \in A} \frac{\sqrt{2} \alpha_i-1}{\alpha_i + 1}  &\ge \sum_{j \in B | \beta_j > \sqrt{2}}  \frac{\beta_j - \sqrt{2}}{\beta_j - 1} - \sum_{j \in B | \beta_j \le \sqrt{2}} \frac{(\sqrt{2} - \beta_j)}{\beta_j + 1}
	\end{align*}

	The second to last line follows because $\forall \beta_j \ge 1$, $ \frac{\sqrt{2}\beta_j-1}{\beta_j + 1} \ge \frac{\beta_j - \sqrt{2}}{\beta_j - 1}$. By Theorem \ref{thm-exact}, the distortion is at most $\sqrt{2}$.

	Now we show the claim above that $\forall \beta_j \ge 1$, $ \frac{\sqrt{2}\beta_j-1}{\beta_j + 1} \ge \frac{\beta_j - \sqrt{2}}{\beta_j - 1}$ to finish the proof.

	\begin{align*}
		(\beta_j - (\sqrt{2} + 1)) ^ 2 &\ge 0 \\
		\beta_j ^ 2 - 2(\sqrt{2} + 1) \beta_j + (\sqrt{2} + 1) ^ 2 &\ge 0 \\
		(\sqrt{2} - 1)\beta_j ^ 2 - 2(\sqrt{2} + 1)(\sqrt{2} - 1)\beta_j + (\sqrt{2} + 1) ^ 2 (\sqrt{2} - 1) &\ge 0 \\
		(\sqrt{2} - 1)\beta_j ^ 2 - 2 \beta_j + \sqrt{2} + 1 &\ge 0 \\
    \sqrt{2} \beta_j ^ 2 - \beta_j + 1 &\ge \beta_j ^ 2  + \beta_j - \sqrt{2}\\
		\sqrt{2} \beta_j ^ 2 - \beta_j - \sqrt{2}\beta_j + 1 &\ge \beta_j ^ 2  + \beta_j - \sqrt{2}\beta_j - \sqrt{2}\\
		(\sqrt{2}\beta_j-1)(\beta_j-1) &\ge (\beta_j- \sqrt{2})(\beta_j + 1)\\
		\frac{\sqrt{2}\beta_j-1}{\beta_j + 1} &\ge \frac{\beta_j - \sqrt{2}}{\beta_j - 1}
	\end{align*}

\end{proof}

\begin{corollary}
	Choosing a candidate from the uncovered set of a weighted majority graph obtained by using pairwise rule \ref{weighted_majority_rule_exact} results in distortion of at most 2 for any number of candidates.
\end{corollary}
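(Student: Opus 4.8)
The plan is to reduce the multi-candidate claim to the two-candidate guarantee of $\sqrt{2}$ already established for Weighted Majority Rule \ref{weighted_majority_rule_exact}, and then apply the standard ``squaring'' argument for the uncovered set, exactly as in the proof of Theorem \ref{thm_ds_1_tau} but now with the sharper pairwise bound. First I would fix the social-cost-minimizing available candidate $Z$ and let $P$ be any candidate returned from the uncovered set of the weighted majority graph whose pairwise edges are decided by Rule \ref{weighted_majority_rule_exact}. By the defining property of the uncovered set, one of two cases holds: either $P$ beats $Z$ directly, or there is an intermediate candidate $Q$ such that $P$ beats $Q$ and $Q$ beats $Z$.

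In the first case the two-candidate bound proved above gives $SC(P) \le \sqrt{2}\, SC(Z)$ immediately. In the second case I would chain the pairwise bound twice: since $Q$ beats $Z$ we get $SC(Q) \le \sqrt{2}\, SC(Z)$, and since $P$ beats $Q$ we get $SC(P) \le \sqrt{2}\, SC(Q)$; composing yields $SC(P) \le (\sqrt{2})^2\, SC(Z) = 2\, SC(Z)$. Taking the worse of the two cases gives distortion at most $2$, which is the stated claim. The crux is therefore just the two-hop composition, and the only point requiring any care is checking that the two-candidate guarantee applies edge-by-edge to the graph (it does, since each edge is decided by Rule \ref{weighted_majority_rule_exact} and the two-candidate theorem bounds the ratio whenever one candidate beats another directly).

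I do not expect a genuine obstacle for the bound of $2$ as stated, since the squaring is immediate. The interesting difficulty lies in whether one could do \emph{better} than $2$: in the single-threshold-with-preferences setting, Theorem \ref{thm_PQZ_1_tau} supplied an additive Goel-type inequality $SC(P) \le 2\,SC(Z) + SC(Q)$ that improved on naive squaring. The main obstacle to sharpening the constant below $2$ here would be establishing an analogous additive inequality tailored to Rule \ref{weighted_majority_rule_exact}'s weights; absent such a lemma, the clean squaring argument above is exactly what proves the corollary.
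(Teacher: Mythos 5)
Your proposal is correct and matches the paper's own argument: the paper proves this corollary with exactly the same squaring observation, noting that if pairwise distortion is at most $\sqrt{2}$, then the uncovered-set distortion is at most $(\sqrt{2})^2 = 2$. Your closing remark about the missing additive Goel-type lemma also echoes the paper, which states that for this setting the general squaring bound is the best result it has.
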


This corollary is simply because if pairwise distortion is at most $\ds$, then the distortion of the uncovered set is at most $\ds^2$. While for other special cases we have better bounds on distortion with multiple candidates, for this case this general bound provides the best result.


\section{Ideal Candidate Distortion}

In this section, we study the tradeoff between the winning candidate distortion $\actual$ and the ideal candidate distortion $\icd$. For instance $I=\{\voters, \cands, d\}$, suppose the winner is $P$, and the optimal candidate is $Q$. Denote the distortion of $P$ as $\actual = \frac{SC(P)}{SC(Q)}$. Recall the ideal candidate is the best possible point in the metric that minimizes the total social cost (this point may or may not be in \cands); we denote this point as $Z^*$ \footnote{In a Euclidean metric, Z* is the centroid}. Then the ideal candidate distortion $\icd$ of a candidate $P$ is $\icd = \frac{SC(P)}{SC(Z^*)}$.

We show that for {\em any instance}, we have that {\em either} the distortion of our mechanism is small, {\em or} the ideal candidate distortion of our winning candidate is small. In other words, we establish that the only time when the selected alternative is not similar to the absolutely best possible alternative (which may be even better than any of the candidates in the running), is when it is similar to the best candidate from the ones up for consideration. So, for cases when distortion is large, at least we have a ``consolation prize" that the chosen candidate is not too far from all possible alternatives, even the ones which the voters don't know about and do not express their preferences over.

To prove our results, we first need the following definition of a $\lambda$-bounded rule.

\begin{definition}
A majority rule is $\lambda$-bounded if \P beating \Q directly in pairwise comparison implies $SC(P) \leq SC(Q) + \lambda\cdot SC(Z)$ for any point $Z$ in the metric space.
\end{definition}


\begin{theorem}
\label{thm-ideal-distortion}
If $P$ wins under a $\lambda$-bounded majority rule with two candidates, then $\icd = \frac{SC(P)}{SC(Z^*)} \leq \frac{\lambda \actual}{\actual - 1}$. If $P$ is in the uncovered set under a $\lambda$-bounded majority rule with multiple candidates, then $\icd = \frac{SC(P)}{SC(Z^*)} \leq \frac{2 \lambda \actual}{\actual - 1}$.
\end{theorem}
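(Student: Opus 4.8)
The plan is to relate the social cost of the ideal point $Z^*$ to the social cost of the best available candidate $Q$, and then leverage the $\lambda$-boundedness property together with the definition of $\actual$. The key observation is that high distortion $\actual$ forces the best available candidate $Q$ to be close to the ideal point $Z^*$, because if $Q$ were far from $Z^*$, then $P$ beating $Q$ under a $\lambda$-bounded rule would force $SC(P)$ to be small relative to $SC(Q)$, contradicting large $\actual$.

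\medskip

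First I would handle the two-candidate case. Since $P$ beats $Q$ directly and the rule is $\lambda$-bounded, applying the definition with the point $Z = Z^*$ gives $SC(P) \leq SC(Q) + \lambda \cdot SC(Z^*)$. Now dividing through by $SC(Z^*)$ and writing $\icd = \frac{SC(P)}{SC(Z^*)}$, the crucial step is to express $SC(Q)$ in terms of $SC(P)$ and $\actual$. Since $Q$ is the optimal (best available) candidate and $\actual = \frac{SC(P)}{SC(Q)}$, we have $SC(Q) = \frac{SC(P)}{\actual}$. Substituting yields $SC(P) \leq \frac{SC(P)}{\actual} + \lambda \cdot SC(Z^*)$, which rearranges to $SC(P)(1 - \frac{1}{\actual}) \leq \lambda \cdot SC(Z^*)$, i.e. $SC(P) \cdot \frac{\actual - 1}{\actual} \leq \lambda \cdot SC(Z^*)$. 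Dividing by $SC(Z^*)$ and by $\frac{\actual-1}{\actual}$ gives exactly $\icd \leq \frac{\lambda \actual}{\actual - 1}$, as desired.

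\medskip

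For the multiple-candidate case, $P$ lies in the uncovered set, so by definition either $P$ beats $Q$ directly, or there is an intermediate candidate with $P$ beating it and it beating $Q$. I would apply the $\lambda$-boundedness property along the covering chain with $Z = Z^*$ as the reference point. The idea is that traversing at most two edges of the majority graph, each contributing a bounded amount in terms of $SC(Z^*)$, accumulates an extra factor of $2$ compared to the single-edge two-candidate case. One then combines these chained inequalities with $SC(Q) = \frac{SC(P)}{\actual}$ exactly as before, and the same algebraic rearrangement produces the factor $\frac{2\lambda \actual}{\actual - 1}$.

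\medskip

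The main obstacle I anticipate is the multiple-candidate chaining: I need to verify that applying $\lambda$-boundedness along a two-edge path (from $P$ to the intermediate candidate to $Q$) correctly accumulates the bounds in terms of $SC(Z^*)$ while still allowing me to substitute $SC(Q) = \frac{SC(P)}{\actual}$, rather than an intermediate candidate's social cost. I expect one can route all the $\lambda$-boundedness applications against the common reference point $Z^*$ (so each edge contributes a $\lambda \cdot SC(Z^*)$ term) and use the triangle-inequality-style bookkeeping on the social costs of the chain candidates, ensuring that the only ``optimal'' cost that appears in the final substitution is $SC(Q)$; the doubling arises naturally from having two such edges. The algebra in both cases is routine once the correct inequality is set up, so the conceptual crux is entirely in setting up this chained bound.
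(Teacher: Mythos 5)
Your proposal is correct and follows essentially the same argument as the paper: apply $\lambda$-boundedness with $Z = Z^*$, substitute $SC(Q) = SC(P)/\delta_I$, and rearrange; in the multi-candidate case, chain the $\lambda$-bound across the two edges $P$ over $Y$ and $Y$ over $Q$ so that $SC(Y)$ cancels and the two $\lambda\, SC(Z^*)$ terms give the factor of $2$. The chaining step you flagged as the potential obstacle works exactly as you anticipated, and it is precisely what the paper does.
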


\begin{proof}
First consider the two candidates setting. We know that $SC(Q) = \frac{1}{\actual} SC(P)$, and by the definition of $\lambda$-bounded majority rule,

\begin{align*}
	SC(P) &\le SC(Q) + \lambda SC(Z^*) \\
	SC(P) &\le \frac{1}{\actual} SC(P) + \lambda SC(Z^*) \\
	SC(P) &\le \frac{\lambda \actual}{\actual - 1} SC(Z^*)
\end{align*}

Then for the multiple candidates setting, if $P$ is in the uncovered set, we know that either $P$ beats $Q$ directly (and we get the same bound as in the two candidates setting), or there exists a candidate $Y$, that $P$ beats $Y$ and $Y$ beats $Q$. Then by the definition of $\lambda$-bounded majority rule,

\begin{align*}
	SC(P) &\le SC(Y) + \lambda SC(Z^*) \\
	SC(P) &\le SC(Q) + \lambda SC(Z^*) + \lambda SC(Z^*) \\
	SC(P) &\le \frac{1}{\actual} SC(P) + 2\lambda SC(Z^*) \\
	SC(P) &\le \frac{2 \lambda \actual}{\actual - 1} SC(Z^*)
\end{align*}
\end{proof}

\begin{corollary}
	\label{corollary-ideal-distortion-1}
With only voters' ordinal preferences, in the two candidates setting, the majority winner has an ideal candidate distortion of $\icd \leq \frac{2 \actual}{\actual - 1}$. And in the multiple candidate setting, any candidate in the uncovered set has an ideal candidate distortion of  $\icd \leq \frac{4 \actual}{\actual - 1}$.
\end{corollary}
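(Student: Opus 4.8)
The plan is to recognize this corollary as an immediate consequence of Theorem~\ref{thm-ideal-distortion}, once we identify the correct value of $\lambda$ for the simple majority rule. So the only real work is to verify that the ordinal majority rule is $2$-bounded, i.e., that whenever a majority of voters prefer \P to \Q we have $SC(P) \le SC(Q) + 2\cdot SC(Z)$ for every point $Z$ in the metric space.

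First I would appeal directly to Lemma~\ref{Goel} of Goel et al., which states precisely that a majority preferring \P to \Q forces $SC(P) \le 2\cdot SC(Z) + SC(Q)$. Matching this against the definition of a $\lambda$-bounded rule shows that the ordinal majority rule is $\lambda$-bounded with $\lambda = 2$. The one point worth checking is that the inequality is needed at $Z = Z^*$, which may not itself lie in \cands; this causes no difficulty, because the triangle-inequality argument behind Lemma~\ref{Goel} never uses that $Z$ is an actual candidate, so the bound holds for an arbitrary point of the metric space (exactly as is made explicit for the weighted version in Theorem~\ref{thm_PQZ_1_tau}).

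Having fixed $\lambda = 2$, I would then simply substitute into Theorem~\ref{thm-ideal-distortion}. In the two-candidate case the majority winner \P satisfies $\icd \le \frac{\lambda \actual}{\actual - 1} = \frac{2\actual}{\actual - 1}$. In the multiple-candidate case, forming the pairwise majority graph from this same $2$-bounded rule and taking any \P in the uncovered set gives $\icd \le \frac{2\lambda \actual}{\actual - 1} = \frac{4\actual}{\actual - 1}$. This yields both claimed bounds.

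I expect no genuine obstacle: the content is entirely front-loaded into Lemma~\ref{Goel} and Theorem~\ref{thm-ideal-distortion}, and the only subtlety is the (harmless) observation that the $\lambda$-bounded guarantee must be read for $Z$ ranging over all metric points rather than only over candidates, which is what lets us plug in the ideal point $Z^*$.
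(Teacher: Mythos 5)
Your proposal is correct and follows exactly the paper's own proof: the paper likewise observes that Lemma~\ref{Goel} makes the ordinal majority rule $2$-bounded and then invokes Theorem~\ref{thm-ideal-distortion} to obtain both bounds. Your extra remark that the Goel-style bound must be read for arbitrary metric points $Z$ (not just candidates) so that $Z^*$ can be substituted is a careful point the paper glosses over, but it does not change the argument.
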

\begin{proof}
	By Lemma \ref{Goel}, the majority rule is $2$-bounded. Then we get the conclusion directly from Theorem \ref{thm-ideal-distortion}.
\end{proof}

Thus, in the usual ``ordinal preference" setting of \cite{Anshelevich2018} and \cite{goel2017metric}, either distortion of Copeland (or any candidate in the uncovered set) is actually bounded by $\actual \leq 3$ (instead of the worst-case of 5), or the ideal candidate distortion $\icd \leq 6$, which may not seem like a great bound, but is impressive because it means that the selected candidate is a factor of 6 away from all possibilities, ones that are not known to anyone, ones that no one expresses their preferences over, and ones that may arise sometime in the future. The only assumption required is that all the possible alternatives and voters lie in some arbitrary, possibly very high-dimensional, metric space.

The same tradeoff between $\actual$ and $\icd$ occurs if we have are given voters' preferences and a single threshold on preference strength, as in Section \ref{sec-1-tau}.

\begin{corollary}
	\label{corollary-ideal-distortion-1-tau}
With voter preferences and a single threshold $\thresh$, we use Weighted Majority Rule \ref{weighted_majority_rule_1_tau} to decide pairwise winners. Then in the two candidate setting, the winner has an ideal candidate distortion of $\icd \leq \frac{2 \actual}{\actual - 1}$ (Figure \ref{fig:two_candidates_tradeoff_1_tau}). And in the multiple candidate setting, any candidate in the uncovered set has an ideal candidate distortion of  $\icd \leq \frac{4 \actual}{\actual - 1}$ (Figure \ref{fig:multiple_candidates_tradeoff_1_tau}).
\end{corollary}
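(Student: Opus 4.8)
The plan is to recognize that this corollary is an immediate consequence of Theorem \ref{thm-ideal-distortion}, once we identify the correct value of $\lambda$ for Weighted Majority Rule \ref{weighted_majority_rule_1_tau}. The key observation is that Theorem \ref{thm_PQZ_1_tau} is precisely the statement that this rule is $2$-bounded: it guarantees that whenever Majority Rule \ref{weighted_majority_rule_1_tau} selects \P over \Q in a pairwise comparison, we have $SC(P) \leq SC(Q) + 2 \cdot SC(Z)$ for any point $Z$ in the metric space. This is exactly the definition of a $\lambda$-bounded rule instantiated at $\lambda = 2$.

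First I would invoke Theorem \ref{thm_PQZ_1_tau} to conclude that Weighted Majority Rule \ref{weighted_majority_rule_1_tau} is $2$-bounded. Then I would apply Theorem \ref{thm-ideal-distortion} directly with $\lambda = 2$. In the two-candidate setting this yields $\icd \leq \frac{\lambda \actual}{\actual - 1} = \frac{2 \actual}{\actual - 1}$. In the multiple-candidate setting, where the winner is taken from the uncovered set of the weighted majority graph, it yields $\icd \leq \frac{2 \lambda \actual}{\actual - 1} = \frac{4 \actual}{\actual - 1}$. These are exactly the claimed bounds, and the figures referenced simply plot these two functions of $\actual$.

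This mirrors the argument for Corollary \ref{corollary-ideal-distortion-1}, where Lemma \ref{Goel} played the role of establishing $2$-boundedness for the plain majority rule; here that role is played by Theorem \ref{thm_PQZ_1_tau}. There is no genuine obstacle, because the substantive work was already carried out in proving Theorem \ref{thm_PQZ_1_tau}: that theorem was designed precisely to show that, despite assigning nontrivial weights to passionate voters, Weighted Majority Rule \ref{weighted_majority_rule_1_tau} retains the same $SC(P) \leq SC(Q) + 2 \cdot SC(Z)$ guarantee that the unweighted majority rule enjoys through the lemma of Goel et al. The only thing left to verify is that the hypothesis of Theorem \ref{thm-ideal-distortion} is met, namely that the pairwise comparisons used to form the majority graph are generated by a $2$-bounded rule, which holds by construction.
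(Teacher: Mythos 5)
Your proposal is correct and matches the paper's own proof exactly: the paper also observes that Theorem \ref{thm_PQZ_1_tau} establishes that Weighted Majority Rule \ref{weighted_majority_rule_1_tau} is $2$-bounded, and then applies Theorem \ref{thm-ideal-distortion} with $\lambda = 2$ to obtain both bounds. Nothing is missing; your identification of the parallel with Corollary \ref{corollary-ideal-distortion-1} (where Lemma \ref{Goel} plays the same role) is also how the paper frames it.
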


\begin{proof}
	By Theorem \ref{thm_PQZ_1_tau}, Weighted Majority Rule \ref{weighted_majority_rule_1_tau} is $2$-bounded. Then we get the conclusion directly from Theorem \ref{thm-ideal-distortion}.
\end{proof}

\begin{figure}[htb]
\centering
\begin{minipage}{.43\linewidth}
  \includegraphics[width=\linewidth]{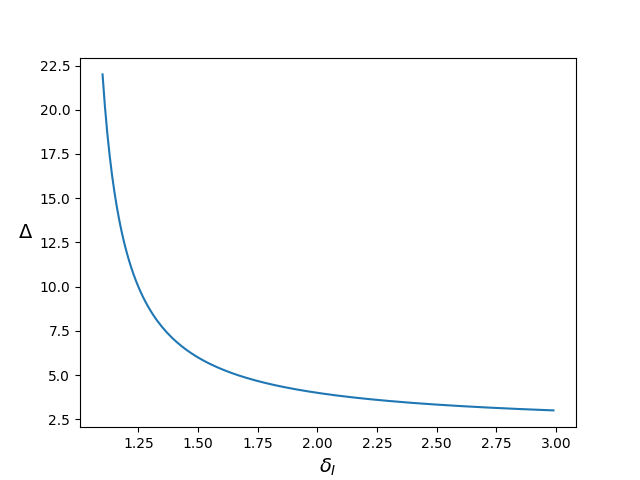}
  \captionof{figure}{Tradeoff between $\actual$ and $\icd$ with voter preferences and a single threshold $\thresh$ in the two candidates setting.}
  \label{fig:two_candidates_tradeoff_1_tau}
\end{minipage}
\hspace{.04\linewidth}
\begin{minipage}{.43\linewidth}
  \includegraphics[width=\linewidth]{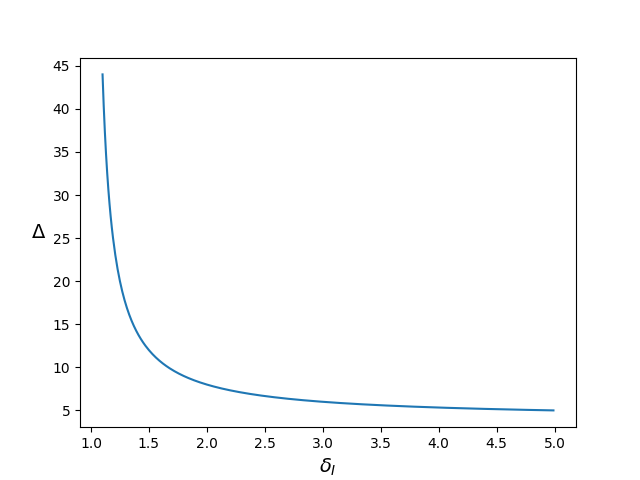}
  \captionof{figure}{Tradeoff between $\actual$ and $\icd$ with voter preferences and a single threshold $\thresh$ in the multiple candidates setting.}
  \label{fig:multiple_candidates_tradeoff_1_tau}
\end{minipage}
\end{figure}

\subsection{Ideal Candidate Distortion with Exact Preference Strengths}
In this section, we discuss the ideal candidate distortion when we know the voters' exact preference strength. We first show Weighted Majority Rule \ref{weighted_majority_rule_exact} is $(1+\sqrt{2})$-bounded, then get the ideal candidate distortion by Theorem \ref{thm-ideal-distortion}.

 Suppose there are two candidates $P$ and $Q$, and we are given the preference strength of every voter. Denote $A$ as the set of voters that prefer $P$ to $Q$, and $B$ as the set of voters that prefer $Q$ to $P$. The preference strength of any $i \in A$ is denoted as $\alpha_i$, and the preference strength of any $j \in B$ is denoted as $\beta_j$,

 We first present a lemma which allows us to charge voters in $B$ to voters in $A$; this lemma has not appeared previously and may be useful as a technique for proving other results as well.

\begin{lemma}
	\label{lemma-exact-ij}
	Given any voter $i \in A$, and voter $j \in B$, we have that $d(j, P) \le \frac{\beta_j (\alpha_i + 1)}{\alpha_i \beta_j - 1} d(i, j)$.
\end{lemma}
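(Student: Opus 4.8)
The plan is to reduce the claim to a bound on $d(j,Q)$ and then obtain that bound from exactly two triangle inequalities together with the definitions of the preference strengths. Since $j \in B$ we have $d(j,P) = \beta_j\, d(j,Q)$, so the stated inequality is equivalent to $d(j,Q) \le \frac{\alpha_i + 1}{\alpha_i \beta_j - 1}\, d(i,j)$, and it suffices to prove this reduced form. Likewise, since $i \in A$ I will use $d(i,Q) = \alpha_i\, d(i,P)$ to trade the cost of $i$ to $Q$ for its cost to $P$.

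First I would write down two triangle inequalities, chosen so that the voter pair $i,j$ is the ``interior'' edge of each: applying the triangle inequality to the triple $i,j,Q$ gives $d(i,Q) \le d(i,j) + d(j,Q)$, and applying it to the triple $i,j,P$ gives $d(j,P) \le d(i,j) + d(i,P)$. Substituting $d(i,Q) = \alpha_i d(i,P)$ into the first and $d(j,P) = \beta_j d(j,Q)$ into the second yields
\[
\alpha_i\, d(i,P) \le d(i,j) + d(j,Q), \qquad \beta_j\, d(j,Q) \le d(i,j) + d(i,P).
\]

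The key step is to eliminate the auxiliary distance $d(i,P)$, which appears in both inequalities but is otherwise unconstrained. The first inequality gives the upper bound $d(i,P) \le \big(d(i,j) + d(j,Q)\big)/\alpha_i$, while the second gives the lower bound $d(i,P) \ge \beta_j\, d(j,Q) - d(i,j)$. Chaining these and clearing the denominator $\alpha_i$ produces $\alpha_i \beta_j\, d(j,Q) - \alpha_i\, d(i,j) \le d(i,j) + d(j,Q)$, which rearranges to $(\alpha_i \beta_j - 1)\, d(j,Q) \le (\alpha_i + 1)\, d(i,j)$. Dividing by $\alpha_i\beta_j - 1$ and then multiplying by $\beta_j$ gives exactly the claimed bound on $d(j,P)$.

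The one point requiring care — and the only real obstacle — is the sign of the factor $\alpha_i\beta_j - 1$ by which we divide: the inequality direction is preserved precisely because $\alpha_i\beta_j - 1 > 0$, which holds since $i \in A$ and $j \in B$ force $\alpha_i \ge 1$ and $\beta_j \ge 1$ (the bound being vacuous only in the degenerate case $\alpha_i = \beta_j = 1$). Conceptually, the subtlety is that among the several triangle inequalities available on the four points $i,j,P,Q$, one must select the two that route the $i$–$j$ edge through $Q$ and through $P$ respectively; this is exactly what makes the unconstrained distance $d(i,P)$ cancel, whereas other combinations would leave $d(P,Q)$ or $d(i,Q)$ uncancelled and fail to yield a bound in terms of $d(i,j)$ alone.
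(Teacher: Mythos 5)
Your proof is correct and rests on exactly the same two ingredients as the paper's: the triangle inequalities $d(i,j) \ge d(j,P) - d(i,P)$ and $d(i,j) \ge d(i,Q) - d(j,Q)$, combined with the exact strengths $d(i,Q) = \alpha_i\, d(i,P)$ and $d(j,P) = \beta_j\, d(j,Q)$. The only difference is cosmetic: you eliminate $d(i,P)$ by chaining its upper and lower bounds directly, whereas the paper reaches the same cancellation via a two-case analysis on which of the two lower bounds on $d(i,j)$ is larger.
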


\begin{proof}
	By the triangle inequality, $d(i, j) \ge d(j, P) - d(i, P)$, and $d(i, j) \ge d(i, Q) - d(j, Q) = \alpha_i d(i, P) - \frac{1}{\beta_j}d(j, P)$. Thus,  $d(i, j) \ge \max \{ d(j, P) - d(i, P), \alpha_i d(i, P) -  \frac{1}{\beta_j} d(j, P) \}$.\\

	\textbf{Case 1. $ d(j, P) - d(i, P) \ge \alpha_i d(i, P) - \frac{1}{\beta_j} d(j, P)$}:\\

	\begin{align*}
		d(j, P) - d(i, P) &\ge \alpha_i d(i, P) - \frac{1}{\beta_j} d(j, P)\\
	  (1 + \frac{1}{\beta_j})d(j, P) &\ge (\alpha+1)d(i, P)\\
		 d(i,P) &\le \frac{\beta_j+1}{\beta_j(\alpha_i + 1)} d(j, P)
	\end{align*}

	\begin{align*}
		d(i, j) &\ge d(j, P) - d(i, P) \\
						&\ge d(j, P) - \frac{\beta_j+1}{\beta_j(\alpha_i + 1)} d(j, P)\\
						&= \frac{\alpha_i \beta_j - 1}{\beta_j(\alpha_i+1)} d(j, P)
	\end{align*}

 \textbf{Case 2. $ d(j, P) - d(i, P) < \alpha_i d(i, P) - \frac{1}{\beta_j} d(j, P)$}:\\

 \begin{align*}
	 d(j, P) - d(i, P) < \alpha_i d(i, P) - \frac{1}{\beta_j} d(j, P)\\
	 (1 + \frac{1}{\beta_j})d(j, P) < (\alpha+1)d(i, P)\\
		d(i,P) > \frac{\beta_j+1}{\beta_j(\alpha_i + 1)} d(j, P)
 \end{align*}

\begin{align*}
 d(i, j) &\ge \alpha_i d(i, P) - \frac{1}{\beta_j} d(j, P) \\
				 &>
				 \frac{\alpha_i \beta_j - 1}{\beta_j(\alpha_i+1)} d(j, P)
\end{align*}

\end{proof}

\begin{lemma}
	\label{lemma-exact-2}
		If Weighted Majority Rule \ref{weighted_majority_rule_exact} selects P over Q, for any $i\in A$, $j \in B$, let $r_{ij} = \frac{(\beta_j-1) (\alpha_i + 1)}{\alpha_i \beta_j - 1}$. Then $\sum_{j \in B} \frac{1}{\sum_{i \in A} \frac{1}{r_{ij}}} \le 1 + \sqrt{2}$.
\end{lemma}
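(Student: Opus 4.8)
The plan is to collapse this nonlinear ``harmonic-mean'' statement into a single inequality that holds for each individual pair $(i,j)$, after which the winning condition of Rule \ref{weighted_majority_rule_exact} finishes the job. First I would record the identity $\frac{1}{r_{ij}} = g(\alpha_i,\beta_j)$ where $g(\alpha,\beta) := \frac{\alpha\beta - 1}{(\alpha+1)(\beta-1)}$, and write $w(\cdot)$ for the weight that Rule \ref{weighted_majority_rule_exact} assigns, i.e.\ $w(x) = x-1$ when $x \le \sqrt2$ and $w(x) = \frac{\sqrt2 x - 1}{x+1}$ when $x > \sqrt2$. With this notation the hypothesis that $P$ is selected is exactly $\sum_{i \in A} w(\alpha_i) \ge \sum_{j \in B} w(\beta_j)$.

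The heart of the argument is the per-pair claim that for every $i \in A$ and $j \in B$,
$$ w(\alpha_i) \ \le\ (1+\sqrt2)\, w(\beta_j)\, g(\alpha_i,\beta_j). $$
Granting this, I would fix $j$ and sum over $i \in A$ (the factor $w(\beta_j)$ is constant in $i$) to obtain $\sum_{i \in A} w(\alpha_i) \le (1+\sqrt2) w(\beta_j) \sum_{i \in A} \frac{1}{r_{ij}}$, which rearranges to $\frac{1}{\sum_{i \in A} 1/r_{ij}} \le \frac{(1+\sqrt2) w(\beta_j)}{\sum_{i \in A} w(\alpha_i)}$. Summing this over $j \in B$ and applying the winning condition $\sum_{i \in A} w(\alpha_i) \ge \sum_{j \in B} w(\beta_j)$ then yields $\sum_{j \in B} \frac{1}{\sum_{i \in A} 1/r_{ij}} \le (1+\sqrt2)\frac{\sum_{j} w(\beta_j)}{\sum_{i} w(\alpha_i)} \le 1+\sqrt2$, which is the lemma. (A degenerate case where all $\alpha_i = 1$ forces all $\beta_j \to 1$ and is handled as a limit.)

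To prove the per-pair claim I would route it through the auxiliary quantity $\frac{\alpha_i - 1}{\alpha_i + 1}$ using two one-variable inequalities. The first, $w(\alpha) \le (1+\sqrt2)\frac{\alpha-1}{\alpha+1}$ for all $\alpha \ge 1$, is elementary in both regimes and reduces precisely to the threshold test $\alpha \lessgtr \sqrt2$ (for $\alpha > \sqrt2$ it is $\sqrt2\alpha - 1 \le (1+\sqrt2)(\alpha - 1)$, i.e.\ $\alpha \ge \sqrt2$). The second is $w(\beta)\, g(\alpha,\beta) \ge \frac{\alpha - 1}{\alpha + 1}$ for every $\beta > 1$; chaining the two gives the claim. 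The main obstacle is this second inequality. For $\beta \le \sqrt2$ it simplifies to $\frac{\alpha\beta - 1}{\alpha + 1} \ge \frac{\alpha - 1}{\alpha + 1}$, which is immediate from $\beta > 1$; but for $\beta > \sqrt2$ it becomes $(\sqrt2\beta - 1)(\alpha\beta - 1) \ge (\alpha - 1)(\beta^2 - 1)$, and I would prove it by rewriting the difference of the two sides as $\alpha\big[(\sqrt2 - 1)\beta^2 - \beta + 1\big] + \beta(\beta - \sqrt2)$ and showing both summands are positive: the bracketed quadratic has positive leading coefficient and negative discriminant $5 - 4\sqrt2$, while $\beta(\beta - \sqrt2) > 0$ for $\beta > \sqrt2$. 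The one subtle point to flag is that $w(\beta) g(\alpha,\beta)$ has an interior dip just above $\beta = \sqrt2$, so its infimum over $\beta > 1$ is genuinely the boundary value $\frac{\alpha-1}{\alpha+1}$ attained as $\beta \to 1^+$; the two cases above are exactly what confirm the dip never falls below this value.
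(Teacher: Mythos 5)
Your proof is correct, and its skeleton matches the paper's: both arguments pivot on the quantity $\frac{\alpha_i-1}{\alpha_i+1}$, both use the one-variable bound $w(\alpha)\le(1+\sqrt{2})\frac{\alpha-1}{\alpha+1}$ on the weights of $A$-voters, and both finish by invoking the winning condition $\sum_{i\in A}w(\alpha_i)\ge\sum_{j\in B}w(\beta_j)$. Where you genuinely depart is in how the $B$-side is handled. The paper factors the needed estimate into two one-variable steps: it first drops the $\nicefrac{1}{\beta_j}$ term to get $\frac{1}{r_{ij}}\ge\frac{\beta_j}{\beta_j-1}\cdot\frac{\alpha_i-1}{\alpha_i+1}$, which bounds each harmonic term by $\frac{(\beta_j-1)/\beta_j}{\sum_i(\alpha_i-1)/(\alpha_i+1)}$, and then proves separately that $\frac{\beta_j-1}{\beta_j}\le w(\beta_j)$ (its $\beta_j>\sqrt{2}$ case rests on the square $\bigl((\sqrt{2}-1)\beta_j-1\bigr)^2\ge 0$). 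You instead prove the fused two-variable inequality $w(\beta)\,g(\alpha,\beta)\ge\frac{\alpha-1}{\alpha+1}$, i.e.\ $(\sqrt{2}\beta-1)(\alpha\beta-1)\ge(\alpha-1)(\beta^2-1)$ for $\beta>\sqrt{2}$, directly via the decomposition $\alpha\bigl[(\sqrt{2}-1)\beta^2-\beta+1\bigr]+\beta(\beta-\sqrt{2})$ and the discriminant $5-4\sqrt{2}<0$; this is a correct and rather more mechanical verification, and chaining it with the $A$-side bound gives your clean per-pair claim $w(\alpha_i)\le(1+\sqrt{2})\,w(\beta_j)\,g(\alpha_i,\beta_j)$, after which the fix-$j$/sum-over-$i$/sum-over-$j$ bookkeeping is transparent. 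The two routes are equivalent in content (the paper's pair of inequalities chains to exactly your fused one), but the paper's factorization buys a reusable intermediate: the quantity $\frac{\beta_j-1}{\beta_j}$ is precisely the coefficient of $d(j,P)$ when this lemma is deployed in the proof of Theorem \ref{thm_PQZ_exact}, so isolating it there is not an accident. One shared caveat, which you rightly flag and the paper glosses over: both proofs divide by a sum that vanishes when every $\alpha_i=1$ (yours by $\sum_i w(\alpha_i)$, the paper's by $\sum_i\frac{\alpha_i-1}{\alpha_i+1}$), so that degenerate case, where the winning condition forces every $\beta_j=1$ as well, must indeed be treated separately or as a limit.
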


\begin{proof}
	\begin{align*}
		\sum_{j \in B} \frac{1}{\sum_{i \in A} \frac{1}{r_{ij}}}
		&= \sum_{j \in B} \frac{1}{\sum_{i \in A}  \frac{\alpha_i \beta_j - 1}{(\beta_j-1) (\alpha_i + 1)}} \\
		&= \sum_{j \in B} \frac{1}{\frac{\beta_j}{\beta_j-1} \sum_{i \in A}  \frac{\alpha_i - \frac{1}{\beta_j}}{\alpha_i + 1}} \\
		&\le \sum_{j \in B} \frac{1}{\frac{\beta_j}{\beta_j-1} \sum_{i \in A}  \frac{\alpha_i - 1}{\alpha_i + 1}} \\
		&= \frac{ \sum_{j \in B} \frac{\beta_j-1}{\beta_j} } { \sum_{i \in A} \frac{\alpha_i - 1}{\alpha_i + 1} } \\
	\end{align*}

	If $P$ is selected as the winner, it means

	\begin{equation*}
	\sum_{i \in A| \alpha_i > \sqrt{2}} \frac{\sqrt{2} \alpha_i-1}{\alpha_i + 1} + \sum_{i \in A| \alpha_i \le \sqrt{2}} (\alpha_i-1) \ge \sum_{j \in B | \beta_j > \sqrt{2}}  \frac{\sqrt{2} \beta_j-1}{\beta_j + 1} + \sum_{j \in B| \beta_j \le \sqrt{2}} (\beta_j-1)
\end{equation*}

	\begin{claim}
		$\forall \alpha_i \ge \sqrt{2}$, 	$\frac{\frac{\sqrt{2} \alpha_i-1}{\alpha_i + 1} } { \frac{\alpha_i - 1}{\alpha_i + 1} } = \frac{\sqrt{2} \alpha_i-1}{\alpha_i - 1} \le \sqrt{2}$
	\end{claim}

	\begin{claim}
		$\forall \alpha_i \leq \sqrt{2}$, $\frac{\alpha_i - 1}{ \frac{\alpha_i - 1}{\alpha_i + 1} } = \alpha_i + 1 \le 1 +
		 \sqrt{2}$
	\end{claim}

	\begin{claim}
		$\forall \beta_j \ge \sqrt{2}$, $\frac{\frac{\beta_j-1}{\beta_j}}{\frac{\sqrt{2} \beta_j-1}{\beta_j + 1} } \le 2 (\sqrt{2} - 1) < 1$
	\end{claim}

	\begin{proof}
		\begin{align*}
			((\sqrt{2} - 1)\beta_j - 1)^2 &\ge 0 \\
			(3 - 2\sqrt{2}) \beta_j ^ 2 - 2 (\sqrt{2} - 1) \beta_j + 1 &\ge 0 \\
			(2 - 2\sqrt{2}) \beta_j ^ 2 - 2 (\sqrt{2} - 1) \beta_j &\ge \beta_j ^ 2 - 1 \\
			2 (\sqrt{2} - 1) (\sqrt{2}\beta_j ^ 2 - \beta_j) &\ge \beta_j ^ 2 - 1 \\
			2 (\sqrt{2} - 1) \beta_j(\sqrt{2}\beta_j  - 1) &\ge (\beta_j + 1)(\beta_j - 1) \\
			2 (\sqrt{2} - 1) \frac{\sqrt{2} \beta_j-1}{\beta_j + 1} &\ge \frac{\beta_j-1}{\beta_j} \\
		\end{align*}
	\end{proof}

	\begin{claim}
		$\forall \beta_j \leq \sqrt{2}$, $\frac{\frac{\beta_j-1}{\beta_j}}{\beta_j-1} = \frac{1}{\beta_j} \le 1$\\
	\end{claim}

  By the four claims above,

  \begin{align*}
		\sum_{j \in B} \frac{\beta_j-1}{\beta_j} &\le \sum_{j \in B | \beta_j > \sqrt{2}}  \frac{\sqrt{2} \beta_j-1}{\beta_j + 1}  + \sum_{j \in B| \beta_j \le \sqrt{2}} (\beta_j-1)\\
		&\le \sum_{i \in A| \alpha_i > \sqrt{2}} \frac{\sqrt{2} \alpha_i-1}{\alpha_i + 1} + \sum_{i \in A| \alpha_i \le \sqrt{2}} (\alpha_i-1) \\
		&\le (1+\sqrt{2}) \sum_{i \in A} \frac{\alpha_i - 1}{\alpha_i + 1}\\
	\end{align*}

	Thus,

	\begin{align*}
		\sum_{j \in B} \frac{1}{\sum_{i \in A} \frac{1}{r_{ij}}}
		&\le \frac{ \sum_{j \in B} \frac{\beta_j-1}{\beta_j} } { \sum_{i \in A} \frac{\alpha_i - 1}{\alpha_i + 1} } \\
		&\le (1+\sqrt{2})
	\end{align*}
\end{proof}

\begin{theorem}
	\label{thm_PQZ_exact}
 If Weighted Majority Rule \ref{weighted_majority_rule_exact} selects P over Q, then $SC(P) \leq (1+\sqrt{2}) SC(Z) + SC(Q)$ where $Z$ can be any point in the metric space.
\end{theorem}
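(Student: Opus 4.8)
The plan is to establish the $(1+\sqrt2)$-bounded property by charging $SC(P)$ to $SC(Q)+(1+\sqrt2)SC(Z)$, using the two preceding lemmas as the only nontrivial ingredients. First I would split $SC(P)=\sumL_{i\in A}d(i,P)+\sumL_{j\in B}d(j,P)$. For the $A$-voters this is immediate: since $i\in A$ prefers $P$, $d(i,P)=\frac{1}{\alpha_i}d(i,Q)\le d(i,Q)$, so $\sumL_{i\in A}d(i,P)\le\sumL_{i\in A}d(i,Q)$, which is absorbed into $SC(Q)$. Everything then reduces to controlling the $B$-voters, and since $d(j,Q)=\frac{1}{\beta_j}d(j,P)$ for $j\in B$, writing $d(j,P)=d(j,Q)+\frac{\beta_j-1}{\beta_j}d(j,P)$ shows it suffices to prove $\sumL_{j\in B}\frac{\beta_j-1}{\beta_j}d(j,P)\le(1+\sqrt2)\,SC(Z)$.

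The key reduction is to turn Lemma \ref{lemma-exact-ij} into a bound on exactly this quantity. Multiplying its inequality $d(j,P)\le\frac{\beta_j(\alpha_i+1)}{\alpha_i\beta_j-1}d(i,j)$ by $\frac{\beta_j-1}{\beta_j}$ gives, for every $i\in A$ and $j\in B$, the clean bound $\frac{\beta_j-1}{\beta_j}d(j,P)\le r_{ij}\,d(i,j)$, with $r_{ij}=\frac{(\beta_j-1)(\alpha_i+1)}{\alpha_i\beta_j-1}$ being precisely the quantity appearing in Lemma \ref{lemma-exact-2}. I would then spread each $j$'s charge fractionally over the $A$-voters with weights $x_{ij}=g_j/r_{ij}$, where $g_j=(\sumL_{i\in A}1/r_{ij})^{-1}$, so that $\sumL_{i\in A}x_{ij}=1$. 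Combining the scaled Lemma \ref{lemma-exact-ij} with the triangle inequality $d(i,j)\le d(i,Z)+d(j,Z)$ yields $\frac{\beta_j-1}{\beta_j}d(j,P)=\sumL_{i\in A}x_{ij}\frac{\beta_j-1}{\beta_j}d(j,P)\le\sumL_{i\in A}x_{ij}r_{ij}\big(d(i,Z)+d(j,Z)\big)$, and summing over $j$ collects a coefficient $\sumL_{j\in B}x_{ij}r_{ij}$ on each $d(i,Z)$ and $\sumL_{i\in A}x_{ij}r_{ij}$ on each $d(j,Z)$.

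Bounding both coefficients by $1+\sqrt2$ is the crux, and it works exactly because $x_{ij}r_{ij}=g_j$. The coefficient on $d(i,Z)$ collapses to $\sumL_{j\in B}g_j=\sumL_{j\in B}(\sumL_{i\in A}1/r_{ij})^{-1}$, which is at most $1+\sqrt2$ by Lemma \ref{lemma-exact-2}; this is exactly what that lemma was designed to supply. The coefficient on $d(j,Z)$ equals $\sumL_{i\in A}g_j=|A|\,g_j$, which is the harmonic mean of $\{r_{ij}:i\in A\}$ and hence at most $\max_i r_{ij}$; a one-line check that $r_{ij}$ is decreasing in $\alpha_i$ on $\alpha_i\ge1$ (its numerator derivative being negative) shows $r_{ij}\le r_{ij}\big|_{\alpha_i=1}=2\le1+\sqrt2$. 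Together these give $\sumL_{j\in B}\frac{\beta_j-1}{\beta_j}d(j,P)\le(1+\sqrt2)\big(\sumL_{i\in A}d(i,Z)+\sumL_{j\in B}d(j,Z)\big)=(1+\sqrt2)SC(Z)$, and adding back the $A$-contribution and the $\sumL_{j\in B}d(j,Q)$ terms finishes with $SC(P)\le(1+\sqrt2)SC(Z)+SC(Q)$.

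The main obstacle is discovering the charging weights $x_{ij}=g_j/r_{ij}$: they are engineered so that the $d(i,Z)$-coefficient degenerates into precisely the expression controlled by Lemma \ref{lemma-exact-2}, while the $d(j,Z)$-coefficient becomes a harmonic mean that is automatically tamed by the uniform bound $r_{ij}\le2$. A minor technical point to dispatch is the treatment of indifferent voters ($\alpha_i=1$ or $\beta_j=1$), for which $r_{ij}$ or its reciprocal degenerates; such voters receive weight $0$ under Weighted Majority Rule \ref{weighted_majority_rule_exact} and satisfy $d(\cdot,P)=d(\cdot,Q)$, so they can be removed at the outset without affecting either side of the inequality.
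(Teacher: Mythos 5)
Your proposal is correct and takes essentially the same route as the paper's own proof: your fractional charging weights $x_{ij}=g_j/r_{ij}$ are precisely the paper's step of summing $\frac{1}{r_{ij}}\frac{\beta_j-1}{\beta_j}d(j,P)\le d(i,Z)+d(j,Z)$ over $i\in A$ and normalizing by $\sum_{i\in A}1/r_{ij}$, with Lemma \ref{lemma-exact-2} controlling the $d(i,Z)$ coefficient and the uniform bound $r_{ij}\le 2$ (which the paper proves by direct algebra rather than by monotonicity in $\alpha_i$) controlling the $d(j,Z)$ coefficient, before splitting $d(j,P)=d(j,Q)+\frac{\beta_j-1}{\beta_j}d(j,P)$ and absorbing the $A$-side into $SC(Q)$ exactly as you do. The only genuine addition is your explicit handling of indifferent voters ($\alpha_i=1$ or $\beta_j=1$), where $r_{ij}$ degenerates, a point the paper leaves implicit.
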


\begin{proof}

Let $A$ be the set of voters prefer $P$ to $Q$, and $B$ be the set of voters prefer $Q$ to $P$. If $P$ is selected as the winner, it means

	\begin{equation*}
	\sum_{i \in A| \alpha_i > \sqrt{2}} \frac{\sqrt{2} \alpha_i-1}{\alpha_i + 1} + \sum_{i \in A| \alpha_i \le \sqrt{2}} (\alpha_i-1) \ge \sum_{j \in B | \beta_j > \sqrt{2}}  \frac{\sqrt{2} \beta_j-1}{\beta_j + 1} + \sum_{j \in B| \beta_j \le \sqrt{2}} (\beta_j-1)
\end{equation*}

Select an arbitrary voter $i \in A$, and voter $j \in B$.

By Lemma \ref{lemma-exact-ij},

\begin{align*}
	d(j, P) &\le \frac{\beta_j (\alpha_i + 1)}{\alpha_i \beta_j - 1} d(i, j)\\
 \frac{\beta_j - 1}{\beta_j} d(j, P) &\le \frac{\beta_j - 1}{\beta_j} \times \frac{\beta_j (\alpha_i + 1)}{\alpha_i \beta_j - 1} d(i, j)\\
 \frac{\beta_j - 1}{\beta_j} d(j, P) &\le \frac{(\beta_j-1) (\alpha_i + 1)}{\alpha_i \beta_j - 1} d(i, j)\\
  \frac{\beta_j - 1}{\beta_j} d(j, P) &\le \frac{(\beta_j-1) (\alpha_i + 1)}{\alpha_i \beta_j - 1} (d(i, Z) + d(j, Z))\\
\end{align*}

Let $r_{ij} = \frac{(\beta_j-1) (\alpha_i + 1)}{\alpha_i \beta_j - 1}$,

$$\frac{1}{r_{ij}} \frac{\beta_j - 1}{\beta_j} d(j, P) \le (d(i, Z) + d(j, Z))$$

\begin{claim}
	For any $\alpha_i \ge 1$, $\beta_j \ge 1$, $r_{ij} \le 2$.
\end{claim}

\begin{proof}
	\begin{align*}
		\alpha_i &\ge 1\\
		\alpha_i(\beta_j+1) &\ge \beta_j + 1\\
		\alpha_i \beta_j + \alpha_i - \beta_j - 1 &\ge 0 \\
		2\alpha_i \beta_j - \alpha_i \beta_j + \alpha_i - \beta_j - 2 + 1  &\ge 0 \\
		2\alpha_i \beta_j - 2 &\ge \alpha_i \beta_j - \alpha_i + \beta_j - 1 \\
		2(\alpha_i \beta_j - 1) &\ge (\alpha_i + 1) (\beta_j - 1)\\
		 \frac{(\beta_j-1) (\alpha_i + 1)}{\alpha_i \beta_j - 1} & \le 2
	\end{align*}
\end{proof}

Sum up for all $i \in A$,

\begin{align*}
\sum_{i \in A} \frac{1}{r_{ij}} \frac{\beta_j - 1}{\beta_j} d(j, P) &\le \sum_{i \in A} (d(i, Z) + d(j, Z))\\
 \frac{\beta_j - 1}{\beta_j} d(j, P) &\le \frac{1}{\sum_{i \in A} \frac{1}{r_{ij}}} \sum_{i \in A} d(i, Z) + \frac{1}{\sum_{i \in A} \frac{1}{r_{ij}}} |A| d(j, Z)
\end{align*}

Then sum up for all $j \in B$,

\begin{align*}
 \sum_{j \in B} \frac{\beta_j - 1}{\beta_j} d(j, P) &\le \sum_{j \in B} \frac{1}{\sum_{i \in A} \frac{1}{r_{ij}}} \sum_{i \in A} d(i, Z) + |A| \sum_{j \in B} \frac{1}{\sum_{i \in A} \frac{1}{r_{ij}}} d(j, Z)\\
 &\le \sum_{j \in B} \frac{1}{\sum_{i \in A} \frac{1}{r_{ij}}} \sum_{i \in A} d(i, Z) + |A| \sum_{j \in B} \frac{1}{ \sum_{i \in A} \frac{1}{2}} d(j, Z)\\
 &\le \sum_{j \in B} \frac{1}{\sum_{i \in A} \frac{1}{r_{ij}}} \sum_{i \in A} d(i, Z) + 2 \sum_{j \in B} d(j, Z)\\
 &\le (1+\sqrt{2}) \sum_{i \in A} d(i, Z) + 2 \sum_{j \in B} d(j, Z)\\
\end{align*}

 Thus,
 \begin{align*}
\sum_{i \in A} d(i, P) +  \sum_{j \in B} d(j, P)
&= \sum_{i \in A} d(i, P) + \sum_{j \in B} (\frac{1}{\beta_j} + \frac{\beta_j - 1}{\beta_j}) d(j, P)\\
&\le \sum_{i \in A} d(i, Q) + \sum_{j \in B} d(i,Q) + \sum_{j \in B} \frac{\beta_j - 1}{\beta_j} d(j, P)\\
&\le \sum_{i \in A} d(i, Q) + \sum_{j \in B} d(i,Q) + (1+\sqrt{2}) \sum_{i \in A} d(i, Z) + 2 \sum_{j \in B} d(j, Z)
 \end{align*}

\end{proof}

\begin{corollary}
With every voter's exact preference strength, we use Weighted Majority Rule \ref{weighted_majority_rule_exact} to decide pairwise winners. In the two candidates setting, the winner has an ideal candidate distortion of $\icd \leq \frac{(\sqrt{2} + 1) \actual}{\actual - 1}$. And in the multiple candidate setting, any candidate in the uncovered set has an ideal candidate distortion of  $\icd \leq \frac{2(\sqrt{2} + 1) \actual}{\actual - 1}$.
\end{corollary}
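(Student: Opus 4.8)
The plan is to observe that this corollary follows immediately by combining two earlier results: Theorem \ref{thm_PQZ_exact}, which controls the pairwise social cost, and Theorem \ref{thm-ideal-distortion}, which converts any $\lambda$-bounded guarantee into an ideal candidate distortion bound. The only conceptual step is to recognize that Weighted Majority Rule \ref{weighted_majority_rule_exact} fits the $\lambda$-bounded framework with the right value of $\lambda$; everything else is substitution.

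First I would verify that Weighted Majority Rule \ref{weighted_majority_rule_exact} is $(1+\sqrt{2})$-bounded. By the definition of a $\lambda$-bounded rule, I need that whenever $P$ beats $Q$ directly, $SC(P) \leq SC(Q) + \lambda \cdot SC(Z)$ for every point $Z$ in the metric space. But this is exactly the content of Theorem \ref{thm_PQZ_exact}, which states $SC(P) \leq (1+\sqrt{2}) SC(Z) + SC(Q)$. So the rule is $(1+\sqrt{2})$-bounded with $\lambda = 1+\sqrt{2}$.

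Next I would simply invoke Theorem \ref{thm-ideal-distortion} with this value of $\lambda$. In the two candidate setting the theorem gives $\icd \leq \frac{\lambda \actual}{\actual - 1} = \frac{(\sqrt{2}+1)\actual}{\actual - 1}$, and in the multiple candidate setting (for any $P$ in the uncovered set) it gives $\icd \leq \frac{2\lambda \actual}{\actual - 1} = \frac{2(\sqrt{2}+1)\actual}{\actual - 1}$. These are precisely the two bounds claimed in the corollary.

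There is no real obstacle here: the entire difficulty was already absorbed into the proof of Theorem \ref{thm_PQZ_exact} (in particular Lemmas \ref{lemma-exact-ij} and \ref{lemma-exact-2}, which established the charging argument yielding the constant $1+\sqrt{2}$). The corollary is purely a matter of matching the hypothesis of the $\lambda$-bounded definition to the conclusion of Theorem \ref{thm_PQZ_exact} and then reading off Theorem \ref{thm-ideal-distortion}. I would write the proof as a two-sentence argument, mirroring the structure of Corollaries \ref{corollary-ideal-distortion-1} and \ref{corollary-ideal-distortion-1-tau}, which handle the ordinal and single-threshold cases in exactly this way.

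\begin{proof}
By Theorem \ref{thm_PQZ_exact}, Weighted Majority Rule \ref{weighted_majority_rule_exact} is $(1+\sqrt{2})$-bounded. Applying Theorem \ref{thm-ideal-distortion} with $\lambda = 1 + \sqrt{2}$ then yields $\icd \leq \frac{(\sqrt{2}+1)\actual}{\actual - 1}$ in the two candidates setting, and $\icd \leq \frac{2(\sqrt{2}+1)\actual}{\actual - 1}$ for any candidate in the uncovered set in the multiple candidates setting.
\end{proof}
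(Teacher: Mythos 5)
Your proposal is correct and matches the paper's own proof exactly: the paper likewise observes that Theorem \ref{thm_PQZ_exact} makes Weighted Majority Rule \ref{weighted_majority_rule_exact} a $(\sqrt{2}+1)$-bounded rule and then invokes Theorem \ref{thm-ideal-distortion} to read off both bounds. Nothing is missing; your two-sentence proof is precisely the argument given in the paper.
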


\begin{proof}
	By Theorem \ref{thm_PQZ_exact}, Weighted Majority Rule \ref{weighted_majority_rule_exact} is $(\sqrt{2} + 1)$-bounded. Then we get the conclusion directly from Theorem \ref{thm-ideal-distortion}.
\end{proof}

\subsection{Ideal Candidate Distortion without knowing voter preferences}
\label{subset-ideal-tau}
In Section \ref{sec-1tau}, we discussed that with only one threshold $\thresh$, Weighted Majority Rule \ref{rule_majority_1thresh} is not $\lambda$-bounded for any constant $\lambda$. However, we can still get some tradeoff between the distortion $\actual$ of the winning candidate and the ideal candidate distortion $\icd$ for $\actual$ in a certain range. We will first show the relationship among $SC(P)$, $SC(Q)$, and $SC(Z)$ for any point $Z$ in the metric space by the following lemma.

\begin{lemma}
	\label{lemma-ideal-1thresh}
	Consider the setting with two candidates $P$, $Q$, and a single threshold $\thresh$. If $P$ pairwise beats $Q$ by Weighted Majority Rule \ref{rule_majority_1thresh}, then $SC(P) \le \thresh SC(Q) + 2 SC(Z)$ for any point $Z$ in the metric space.
\end{lemma}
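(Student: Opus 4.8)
The plan is to reduce the claim to Goel et al.'s Lemma~\ref{Goel} by separating out the voters whose preference is too weak to be registered by Weighted Majority Rule~\ref{rule_majority_1thresh}. First I would partition the electorate into $A = \{i : d(i,Q)/d(i,P) \ge \thresh\}$ (those who strongly prefer $P$), $B = \{j : d(j,P)/d(j,Q) \ge \thresh\}$ (those who strongly prefer $Q$), and the remaining set $C$ of voters with preference strength below $\thresh$. Since voters in $C$ carry weight $0$ while those in $A$ and $B$ carry weight $1$, the hypothesis that $P$ pairwise beats $Q$ is exactly the statement that $|A| \ge |B|$.

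The central step is to treat the sub-electorate $A \cup B$. Within it, every voter genuinely prefers the candidate the rule ascribes to them ($d(i,P) \le d(i,Q)$ for $i \in A$ and $d(j,Q) \le d(j,P)$ for $j \in B$), and at least half of $A \cup B$ — namely the $|A| \ge |B|$ voters in $A$ — prefer $P$. I would therefore invoke Lemma~\ref{Goel} restricted to the population $A \cup B$, exactly as that lemma is applied to a sub-electorate in the proof of Theorem~\ref{thm_PQZ_1_tau}, obtaining $\sum_{i \in A \cup B} d(i,P) \le \sum_{i \in A \cup B} d(i,Q) + 2\sum_{i \in A \cup B} d(i,Z) \le \sum_{i \in A \cup B} d(i,Q) + 2\,SC(Z)$, where the last step merely discards the nonnegative $Z$-costs of the $C$-voters.

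Next I would bound the $C$-voters directly: by the definition of $C$ (in particular $k \notin B$), each $k \in C$ satisfies $d(k,P) < \thresh\, d(k,Q)$, so $\sum_{k \in C} d(k,P) \le \thresh \sum_{k \in C} d(k,Q)$. Adding this to the previous inequality gives $SC(P) \le \sum_{i \in A \cup B} d(i,Q) + \thresh \sum_{k \in C} d(k,Q) + 2\,SC(Z)$. The final step is purely arithmetic: since $\thresh \ge 1$ we may replace $\sum_{i \in A \cup B} d(i,Q)$ by $\thresh \sum_{i \in A \cup B} d(i,Q)$, which collapses the two $Q$-terms into $\thresh\,SC(Q)$ and yields $SC(P) \le \thresh\,SC(Q) + 2\,SC(Z)$.

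The step I expect to require the most care is the application of Lemma~\ref{Goel} to the restricted population $A \cup B$: I must verify that the majority condition is with respect to the \emph{true} metric preferences (which is guaranteed, since membership in $A$ or $B$ already certifies $d(i,P) \le d(i,Q)$ or $d(j,Q) \le d(j,P)$), and that Goel's conclusion is read as a sum over the chosen sub-electorate rather than over all voters. Everything else is bookkeeping; the only place the threshold $\thresh$ (rather than the constant $1$ appearing in Lemma~\ref{Goel}) enters is in absorbing the weak $C$-voters, which is precisely why this rule fails to be $\lambda$-bounded and why the coefficient on $SC(Q)$ degrades from $1$ to $\thresh$.
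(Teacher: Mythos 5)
Your proposal is correct and follows essentially the same route as the paper's own proof: the same partition into $A$, $B$, $C$, the same application of Lemma~\ref{Goel} to the sub-electorate $A \cup B$ using $|A| \ge |B|$, and the same bound $\sum_{k \in C} d(k,P) \le \thresh \sum_{k \in C} d(k,Q)$ for the weak voters. The only difference is that you spell out the final absorption of the $\thresh \ge 1$ factor explicitly, which the paper leaves implicit.
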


\begin{proof}
Let $A$ denote the set of voters who prefer $P$ to $Q$ and have preference strength $\ge \thresh$, let $B$ denote the set of voters who prefer $Q$ to $P$ and have preference strength $\ge \thresh$, and let $C$ denote the rest of the voters. Because $P$ pairwise beats $Q$ by Weighted Majority Rule \ref{rule_majority_1thresh}, we know that $|A| \ge |B|$.

For voters in $A$ and $B$, by Lemma \ref{Goel}, $\sum_{i \in A+B} d(i, P) \le \sum_{i \in A+B} d(i, Q) + 2 \sum_{i \in A+B} d(i, Z)$. For voters in $C$, because we know that they do not strongly prefer $Q$ to $P$, it must be that $\sum_{i \in C} d(i, P) \le \thresh \sum_{i \in C} d(i, Q)$. Summing up all the voters in $A$, $B$, $C$, we get $SC(P) \le \thresh SC(Q) + 2 SC(Z)$.
\end{proof}

\begin{theorem}
	\label{thm-ideal-1thresh}
Consider the setting with two candidates $P$, $Q$, and a single threshold $\thresh$. Suppose $Z^*$ is the ideal possible candidate. If $P$ pairwise beats $Q$ by Weighted Majority Rule \ref{rule_majority_1thresh}, when the distortion $\actual \ge \thresh$, then $\icd = \frac{SC(P)}{SC(Z^*)} \leq \frac{2 \actual}{\actual - \thresh}$.
\end{theorem}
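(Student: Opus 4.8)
The plan is to mimic the proof of Theorem \ref{thm-ideal-distortion}, but replacing the $\lambda$-boundedness property (which, as noted in Section \ref{sec-1tau}, fails here for any constant $\lambda$) with the weaker guarantee supplied by Lemma \ref{lemma-ideal-1thresh}. The essential difference is that Lemma \ref{lemma-ideal-1thresh} carries a factor of \thresh\ in front of $SC(Q)$ rather than a factor of $1$; this is precisely what will push the denominator from $\actual - 1$ (as in Theorem \ref{thm-ideal-distortion}) to $\actual - \thresh$, and it is also what forces us into the regime $\actual \ge \thresh$.

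First I would instantiate Lemma \ref{lemma-ideal-1thresh} at the ideal point, taking $Z = Z^*$, which yields $SC(P) \le \thresh\, SC(Q) + 2\, SC(Z^*)$. Next, since $P$ is the winner and $Q$ is the best available candidate, the definition of the winning-candidate distortion gives $SC(Q) = \frac{1}{\actual} SC(P)$. Substituting this into the previous inequality produces $SC(P) \le \frac{\thresh}{\actual} SC(P) + 2\, SC(Z^*)$.

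I would then collect the $SC(P)$ terms, obtaining $SC(P)\big(1 - \frac{\thresh}{\actual}\big) \le 2\, SC(Z^*)$, equivalently $SC(P)\,\frac{\actual - \thresh}{\actual} \le 2\, SC(Z^*)$. Dividing through by the coefficient $\frac{\actual - \thresh}{\actual}$ gives the desired bound $SC(P) \le \frac{2\actual}{\actual - \thresh} SC(Z^*)$, i.e. $\icd \le \frac{2\actual}{\actual - \thresh}$.

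The computation is entirely routine, so there is no genuine obstacle once Lemma \ref{lemma-ideal-1thresh} is available. The one point requiring care — and the reason for the hypothesis $\actual \ge \thresh$ — is the sign of the coefficient $\frac{\actual - \thresh}{\actual}$: the final division is valid (and preserves the direction of the inequality) only when this quantity is positive, i.e. when $\actual > \thresh$. This matches the intuition behind the whole Ideal Candidate Distortion section: when the winning-candidate distortion \actual\ is large, the winner $P$ must be comparatively close to the ideal point $Z^*$, whereas when \actual\ is small (below \thresh) the argument yields no such guarantee.
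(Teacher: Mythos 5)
Your proposal is correct and follows essentially the same route as the paper's proof: apply Lemma \ref{lemma-ideal-1thresh} with $Z = Z^*$, substitute $SC(Q) = \frac{1}{\actual}SC(P)$, and rearrange to obtain $SC(P) \le \frac{2\actual}{\actual - \thresh} SC(Z^*)$. Your added remark about the sign of the coefficient $\frac{\actual - \thresh}{\actual}$ is a valid observation that the paper leaves implicit in its hypothesis $\actual \ge \thresh$.
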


\begin{proof}
	By Lemma \ref{lemma-ideal-1thresh}, we know that:

	\begin{align*}
		SC(P) &\le \thresh SC(Q) + 2 SC(Z^*) \\
		SC(P) &\le \thresh \times \frac{1}{\actual} SC(P) + 2 SC(Z^*) \\
	(1 - \frac{\thresh}{\actual})	SC(P) &\le 2 SC(Z^*) \\
	SC(P) &\le \frac{2 \actual}{\actual - \thresh} SC(Z^*)
	\end{align*}

\end{proof}

Rewriting the bound of $\icd$ in terms of $\frac{\actual}{\thresh}$, $\icd \le \frac{2 \frac{\actual}{\thresh}}{\frac{\actual}{\thresh} - 1}$ when $\frac{\actual}{\thresh} \ge 1$. Thus the tradeoff is the same as in the case with only ordinal preferences being known (Figure \ref{fig:two_candidates_tradeoff_1_tau}), except replacing $\actual$ with $\frac{\actual}{\thresh}$. This makes sense since the case with only ordinal preferences is exactly the special case with a single threshold $\thresh=1$.


\subsection{Ideal Candidate Distortion with General Thresholds}
In Section \ref{subset-ideal-tau}, we discussed that with only one threshold $\thresh$, there is a tradeoff between $\actual$ and $\icd$ when the distortion $\actual \ge \thresh$. Similarly, in the general setting when we are given $m$ thresholds $\{1 \leq \thresh_1 < \thresh_2 < \ldots < \thresh_m\}$, there is also a tradeoff between $\actual$ and $\icd$ when the distortion $\actual \ge \thresh_m$.

\begin{lemma}
	\label{lemma-ideal-general}
Consider the setting with two candidates $P$, $Q$, and thresholds $\{1 \leq \thresh_1 < \thresh_2 < \ldots < \thresh_m\}$. If $P$ pairwise beats $Q$ by Weighted Majority Rule \ref{rule_general}, then $SC(P) \le \thresh_m SC(Q) + 2 SC(Z)$ for any point $Z$ in the metric space.
\end{lemma}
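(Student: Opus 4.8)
The plan is to mirror the single-threshold argument of Lemma~\ref{lemma-ideal-1thresh}: split the voters into a ``core'' on which the Goel majority bound (Lemma~\ref{Goel}) applies, and a ``remainder'' on which each voter's cost for \P is already at most $\thresh_m$ times its cost for \Q. First I would set $A = \bigcup_{l=1}^m A_l$ (all voters preferring \P), $B = \bigcup_{l=1}^m B_l$ (all preferring \Q), and let $C$ be the undecided voters with preference strength below $\thresh_1$. The key observation is that the only voters whose cost for \P cannot be bounded by $\thresh_m\, d(i,\Q)$ are those in $B_m$: every $i \in A$ has $d(i,\P) \le d(i,\Q) \le \thresh_m d(i,\Q)$; every $j \in B_l$ with $l < m$ has $d(j,\P) < \threshup d(j,\Q) \le \thresh_m d(j,\Q)$ because $\threshup \le \thresh_m$; and every $i \in C$ has $d(i,\P) < \thresh_1 d(i,\Q) \le \thresh_m d(i,\Q)$. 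A voter $j \in B_m$, by contrast, satisfies $d(j,\P) \ge \thresh_m d(j,\Q)$, so these fanatical \Q-supporters are exactly the voters that must be charged against \P-supporters through the point $Z$.

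The crucial intermediate step is to show $|A| \ge |B_m|$, i.e.\ that \P winning the weighted vote forces at least as many \P-supporters as there are voters in $B_m$. I would derive this from the winning inequality $\sumL_{l=1}^m w_l |A_l| \ge \sumL_{l=1}^m w_l |B_l|$, where $w_l \ge 0$ is the weight Rule~\ref{rule_general} assigns to level $l$. Keeping only the $l=m$ term on the right gives $\sumL_{l=1}^m w_l |B_l| \ge w_m |B_m|$, while bounding each $w_l \le w_m$ on the left gives $\sumL_{l=1}^m w_l |A_l| \le w_m |A|$; dividing by $w_m > 0$ then yields $|A| \ge |B_m|$. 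This is where the well-behavedness of Rule~\ref{rule_general} is essential, and verifying that $w_m = \max_{1 \le l \le m} w_l$ is the main obstacle. For levels $l \ge k$ one writes $w_l = \frac{\threshup - \ds}{\threshup - 1} + \frac{\ds\threshl - 1}{\threshl + 1}$, observes that the first summand lies in $(0,1)$ while $t \mapsto \frac{\ds t - 1}{t+1}$ is increasing, so $w_l < 1 + \frac{\ds \thresh_m - 1}{\thresh_m + 1} = w_m$; the levels $l < k$ need a short separate calculation confirming their (smaller) weight $\frac{(\ds+1)(\threshl\threshup - 1)}{(\threshl+1)(\threshup+1)}$ does not exceed $w_m$.

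With $|A| \ge |B_m|$ established, a weak majority of the voters in $A \cup B_m$ prefer \P, so applying Lemma~\ref{Goel} to this subpopulation gives $\sumL_{i \in A \cup B_m} d(i,\P) \le \sumL_{i \in A \cup B_m} d(i,\Q) + 2\sumL_{i \in A \cup B_m} d(i,Z)$. For the remaining voters $(B \setminus B_m) \cup C$, the per-voter bounds above give $\sumL_{i \in (B\setminus B_m)\cup C} d(i,\P) \le \thresh_m \sumL_{i \in (B\setminus B_m)\cup C} d(i,\Q)$. Adding the two, relaxing the coefficient $1 \le \thresh_m$ on the core's \Q-costs, and using $\sumL_{i \in A \cup B_m} d(i,Z) \le SC(Z)$, collapses everything to $SC(P) \le \thresh_m SC(Q) + 2 SC(Z)$, as claimed. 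As a sanity check, setting $m=1$ recovers Lemma~\ref{lemma-ideal-1thresh} exactly, with $B_m = B$ and the remainder equal to $C$.
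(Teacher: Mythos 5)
Your proposal is correct and takes essentially the same approach as the paper's own proof: both arguments establish $|A| \ge |B_m|$ from the fact that Rule~\ref{rule_general}'s weights are nonnegative, vanish on $C$, and are maximized at level $m$, then apply Lemma~\ref{Goel} to the subpopulation $A \cup B_m$ and bound every remaining voter by $d(i,P) \le \thresh_m d(i,Q)$. The one calculation you deferred, that the $l < k$ weights do not exceed $w_m$, does go through, since $\frac{(\ds+1)(\threshl\threshup-1)}{(\threshl+1)(\threshup+1)} = \frac{\ds\threshl-1}{\threshl+1} - \frac{\ds-\threshup}{\threshup+1} \le \frac{\ds\threshl-1}{\threshl+1} \le \frac{\ds\thresh_m-1}{\thresh_m+1} < 1 + \frac{\ds\thresh_m-1}{\thresh_m+1} = w_m$ (using $\threshup \le \ds$ for $l<k$ and monotonicity of $t \mapsto \frac{\ds t-1}{t+1}$); if anything, your treatment of this step is more explicit than the paper's, which simply asserts that stronger preferences receive heavier weights.
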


\begin{proof}
  Let $A$ denote the set of all the voters that prefer $P$ to $Q$, and have preference strength $\ge \thresh_1$, i.e., $A = A_1 \cup A_2 \cup \dots \cup A_m$. Recall $A_l$ denotes the set of voters have preference strength $\alpha_i$ such that $\thresh_l \le \alpha_i < \thresh_{l+1}$. Similarly, define $B = B_1 \cup B_2 \cup \dots \cup B_m$.

	First we prove the size of $A$ is at least the size of $B_m$. In the proof of Theorem \ref{thm-distortion-general-rule}, we have discussed that Weighted Majority Rule \ref{rule_general} assigns heavier weights to voters with stronger preference strengths. Thus, the voters in $B_m$ and $A_m$ are assigned the heaviest weight. Remember voters in $C$ are assigned weight 0, so the winner is decided by voters in $A$ and $B$. If $P$ wins over $Q$, it must be the case that $|B_m| \le |A|$, because otherwise the total weight of voters in $B_m$ must be higher than the total weight of voters in $A$, and $Q$ would be the winner instead.

	For voters in $A$ and $B_m$, by Lemma \ref{Goel}, $\sum_{i \in A+B_m} d(i, P) \le \sum_{i \in A+B_m} d(i, Q) + 2 \sum_{i \in A+B_m} d(i, Z)$. For any other voter $i$ in $C$ or $B_l$ ($l < m$), we know that $d(i, P) \le \thresh_m d(i, Q)$. Summing up for all voters, we get $SC(P) \le \thresh_m SC(Q) + 2 SC(Z)$.
\end{proof}

\begin{theorem}
Consider the setting with two candidates $P$, $Q$, and thresholds $\{1 \leq \thresh_1 < \thresh_2 < \ldots < \thresh_m\}$. Suppose $Z^*$ is the ideal possible candidate. If $P$ pairwise beats $Q$ by Weighted Majority Rule \ref{rule_general}, when the distortion $\actual \ge \thresh_m$, then $\icd = \frac{SC(P)}{SC(Z^*)} \leq \frac{2 \actual}{\actual - \thresh_m}$.
\end{theorem}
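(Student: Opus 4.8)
The plan is to mirror the proof of Theorem~\ref{thm-ideal-1thresh} essentially verbatim, substituting the general bound of Lemma~\ref{lemma-ideal-general} in place of Lemma~\ref{lemma-ideal-1thresh} and replacing the single threshold $\thresh$ with the largest threshold $\thresh_m$. Since Lemma~\ref{lemma-ideal-general} already carries out all of the metric and combinatorial work (establishing $SC(P) \le \thresh_m SC(Q) + 2\,SC(Z)$ for every point $Z$), the theorem collapses to a one-line algebraic manipulation and there is no genuine obstacle remaining; the substance lives entirely in the preceding lemma.

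Concretely, I would apply Lemma~\ref{lemma-ideal-general} with the point $Z = Z^*$, giving $SC(P) \le \thresh_m SC(Q) + 2\,SC(Z^*)$. I would then invoke the definition of the winning-candidate distortion, $\actual = \frac{SC(P)}{SC(Q)}$, to write $SC(Q) = \frac{1}{\actual}SC(P)$ and substitute:
\begin{align*}
SC(P) &\le \thresh_m \cdot \tfrac{1}{\actual} SC(P) + 2\,SC(Z^*)\\
\Big(1 - \tfrac{\thresh_m}{\actual}\Big) SC(P) &\le 2\,SC(Z^*).
\end{align*}

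The only step requiring any care is the sign of the coefficient $1 - \frac{\thresh_m}{\actual}$: the hypothesis $\actual \ge \thresh_m$ is exactly what guarantees this factor is non-negative, so dividing through preserves the direction of the inequality and yields $SC(P) \le \frac{2\actual}{\actual - \thresh_m} SC(Z^*)$, i.e.\ $\icd \le \frac{2\actual}{\actual - \thresh_m}$. When $\actual = \thresh_m$ the right-hand side is $+\infty$ and the bound is vacuous, consistent with needing $\actual > \thresh_m$ for a finite guarantee. As in the single-threshold case of Section~\ref{subset-ideal-tau}, I would close by noting that rewriting the bound in terms of $\frac{\actual}{\thresh_m}$ recovers exactly the pure-ordinal tradeoff curve (Figure~\ref{fig:two_candidates_tradeoff_1_tau}) with $\actual$ replaced by $\frac{\actual}{\thresh_m}$, which is natural since the ordinal setting is the special case $\thresh_m = 1$.
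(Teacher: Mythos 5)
Your proposal is correct and matches the paper's proof exactly: the paper likewise reduces the theorem to the algebraic manipulation of Theorem~\ref{thm-ideal-1thresh}, substituting Lemma~\ref{lemma-ideal-general} for Lemma~\ref{lemma-ideal-1thresh} and $\thresh_m$ for $\thresh$. Your added remark about the sign of $1 - \frac{\thresh_m}{\actual}$ is a small but legitimate point of care that the paper leaves implicit.
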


\begin{proof}
The proof is exactly the same as for Theorem \ref{thm-ideal-1thresh}, by using Lemma \ref{lemma-ideal-general} instead of Lemma \ref{lemma-ideal-1thresh}.
\end{proof}

\section{Conclusion}
As we have shown, even a tiny amount of preference strength information allows us to significantly improve the distortion of social choice mechanisms. We quantify tradeoffs between the amount of information known about preference strengths and the achievable distortion and provide advice about which type of information about preference strengths seems to be the most useful.

When voters provide a single bit of extra preference strength information beyond their ordinal preferences, the distortion drops from 3 down to 1.83 between two candidates and from 4.236 down to 3.35 for multiple candidates if we can choose our threshold. When the exact preference strengths of all voters are known the distortion falls precipitously down to $\sqrt{2}$ for two candidates and $2$ for multiple candidates. In general, with only one or two chosen thresholds, one would not choose a threshold of $\thresh_1 = 1$, since it conveys less information than a slightly larger threshold. Intuitively, having a small barrier to voting that requires some effort to overcome means that only the votes of those with some stake in the outcome are included, but setting such a barrier too high can mean that many people with some interest in the decision are excluded. If we have more thresholds at our disposal we can further minimize distortion, but there are diminishing returns to additional thresholds. Considering the large improvements to distortion given just a single extra threshold, further information may not be worth the effort to obtain.

Unfortunately, one of the drawbacks to distortion as a measure efficiency is that it is not robust in practice. For example, with multiple candidates, the addition or subtraction of a single candidate or voter can cause the actual approximation achieved by Copeland in an instance to swing between 1 (optimal) and 5 (worst-case distortion). Our notion of {\em ideal candidate distortion} partly addresses this issue by showing that the distortion of Copeland (and other $\lambda$-bounded rules) can only be high when the winning candidate is within a constant factor of the ideal conceivable candidate, even if they are not a candidate and nothing about them is known. However, when distortion is low the ideal candidate distortion is unbounded in general. Therefore we observe a general tradeoff between the quality of the available candidates and how poorly we might possibly choose from among the candidates.

\subsection*{Acknowledgements} This work was partially supported by NSF award CCF-1527497.

\bibliographystyle{plain}
\bibliography{Implicit_Utilitarian,Other,Preference_Strengths}

\begin{thebibliography}{10}

\bibitem{Anshelevich2018}
Elliot Anshelevich, Onkar Bhardwaj, Edith Elkind, John Postl, and Piotr
  Skowron.
\newblock {Approximating optimal social choice under metric preferences}.
\newblock {\em Artificial Intelligence}, 264:27--51, 2018.

\bibitem{anshelevich2017randomized}
Elliot Anshelevich and John Postl.
\newblock {Randomized social choice functions under metric preferences}.
\newblock {\em Journal of Artificial Intelligence Research (JAIR)},
  58:797--827, 2017.

\bibitem{anshelevich2018ordinal}
Elliot Anshelevich and Wennan Zhu.
\newblock {Ordinal Approximation for Social Choice, Matching, and Facility
  Location Problems Given Candidate Positions}.
\newblock In {\em International Conference on Web and Internet Economics
  (WINE)}, pages 3--20. Springer, 2018.

\bibitem{arrow1990advances}
Kenneth Arrow.
\newblock {\em {Advances in the spatial theory of voting}}.
\newblock Cambridge University Press, 1990.

\bibitem{benade2017preference}
Gerdus Benade, Swaprava Nath, Ariel~D Procaccia, and Nisarg Shah.
\newblock {Preference elicitation for participatory budgeting}.
\newblock In {\em Thirty-First AAAI Conference on Artificial Intelligence
  (AAAI)}, pages 376--382, 2017.

\bibitem{benade2019low}
Gerdus Benade, Ariel~D Procaccia, and Mingda Qiao.
\newblock {Low-Distortion Social Welfare Functions}.
\newblock In {\em Thirty-Third AAAI Conference on Artificial Intelligence
  (AAAI)}, 2019.

\bibitem{bhaskar2018truthful}
Umang Bhaskar, Varsha Dani, and Abheek Ghosh.
\newblock {Truthful and near-optimal mechanisms for welfare maximization in
  multi-winner elections}.
\newblock In {\em Thirty-Second AAAI Conference on Artificial Intelligence
  (AAAI)}, pages 925--932, 2018.

\bibitem{borodin2019primarily}
Allan Borodin, Omer Lev, Nisarg Shah, and Tyrone Strangway.
\newblock {Primarily about Primaries}.
\newblock In {\em Thirty-Third AAAI Conference on Artificial Intelligence
  (AAAI)}, 2019.

\bibitem{BCHL+15a}
C~Boutilier, I~Caragiannis, S~Haber, T~Lu, A~D Procaccia, and O~Sheffet.
\newblock {Optimal social choice functions: A utilitarian view}.
\newblock {\em Artificial Intelligence}, 227:190--213, 2015.

\bibitem{campbell1973social}
Donald~E Campbell.
\newblock {Social choice and intensity of preference}.
\newblock {\em Journal of Political Economy}, 81(1):211--218, 1973.

\bibitem{CNPS17a}
I~Caragiannis, S~Nath, A~D Procaccia, and N~Shah.
\newblock {Subset Selection Via Implicit Utilitarian Voting}.
\newblock {\em Journal of Artificial Intelligence Research (JAIR)},
  58:123--152, 2017.

\bibitem{caragiannis2011voting}
Ioannis Caragiannis and Ariel~D Procaccia.
\newblock {Voting almost maximizes social welfare despite limited
  communication}.
\newblock {\em Artificial Intelligence}, 175(9-10):1655--1671, 2011.

\bibitem{cheng2017people}
Yu~Cheng, Shaddin Dughmi, and David Kempe.
\newblock {Of the people: voting is more effective with representative
  candidates}.
\newblock In {\em Proceedings of the 2017 ACM Conference on Economics and
  Computation (EC)}, pages 305--322. ACM, 2017.

\bibitem{cheng2018distortion}
Yu~Cheng, Shaddin Dughmi, and David Kempe.
\newblock {On the distortion of voting with multiple representative
  candidates}.
\newblock In {\em Thirty-Second AAAI Conference on Artificial Intelligence
  (AAAI)}, pages 973--980, 2018.

\bibitem{fain2019random}
Brandon Fain, Ashish Goel, Kamesh Munagala, and Nina Prabhu.
\newblock {Random Dictators with a Random Referee: Constant Sample Complexity
  Mechanisms for Social Choice}.
\newblock {\em Thirty-Third AAAI Conference on Artificial Intelligence (AAAI)},
  2019.

\bibitem{farquhar1989preference}
Peter~H Farquhar and L~Robin Keller.
\newblock {Preference intensity measurement}.
\newblock {\em Annals of operations research}, 19(1):205--217, 1989.

\bibitem{feldman2016voting}
Michal Feldman, Amos Fiat, and Iddan Golomb.
\newblock {On voting and facility location}.
\newblock In {\em Proceedings of the 2016 ACM Conference on Economics and
  Computation (EC)}, pages 269--286. ACM, 2016.

\bibitem{gerasimou2019preference}
Georgios Gerasimou.
\newblock {Preference intensity representation and revelation. School of
  Economics and Finance Discussion Paper No. 1716}.
\newblock 2019.

\bibitem{ghodsi2018distortion}
Mohammad Ghodsi, Mohamad Latifian, and Masoud Seddighin.
\newblock {On the Distortion Value of the Elections with Abstention}.
\newblock {\em Thirty-Third AAAI Conference on Artificial Intelligence (AAAI)
  AAAI Conference on Artificial Intelligence (AAAI)}, 2019.

\bibitem{goel2017metric}
Ashish Goel, Anilesh~K Krishnaswamy, and Kamesh Munagala.
\newblock {Metric distortion of social choice rules: Lower bounds and fairness
  properties}.
\newblock In {\em Proceedings of the 2017 ACM Conference on Economics and
  Computation (EC)}, pages 287--304. ACM, 2017.

\bibitem{merrill1999unified}
Bernard Grofman and Samuel {Merrill III}.
\newblock {\em {A Unified Theory of Voting: Directional and Proximity Spatial
  Models}}.
\newblock Cambridge University Press, 1999.

\bibitem{gross2017vote}
Stephen Gross, Elliot Anshelevich, and Lirong Xia.
\newblock {Vote until two of you agree: Mechanisms with small distortion and
  sample complexity}.
\newblock In {\em Thirty-First AAAI Conference on Artificial Intelligence
  (AAAI)}, 2017.

\bibitem{hinich1984spatial}
Melvin~J Hinich and James~M Enelow.
\newblock {\em {The spatial theory of voting: an introduction}}.
\newblock Cambridge University Press Cambridge,, UK, 1984.

\bibitem{moulin1986choosing}
H.~Moulin.
\newblock {Choosing from a tournament}.
\newblock {\em Social Choice and Welfare}, 3(4):271--291, 1986.

\bibitem{mungala2019improved}
Kamesh Munagala and Kangning Wang.
\newblock {Improved Metric Distortion for Deterministic Social Choice Rules}.
\newblock {\em Proceedings of the 2019 ACM Conference on Economics and
  Computation (EC)}, 2019.

\bibitem{peter1990decade}
C~Ordeshook Peter.
\newblock {A decade of experimental research on spatial models of elections and
  committees}.
\newblock {\em Advances in the spatial theory of voting}, page~99, 1990.

\bibitem{pierczynski2019approval}
Grzegorz Pierczy{\'{n}}ski and Piotr Skowron.
\newblock {Approval-Based Elections and Distortion of Voting Rules}.
\newblock {\em arXiv preprint arXiv:1901.06709}, 2019.

\bibitem{Procaccia2006}
Ariel~D. Procaccia and Jeffrey~S. Rosenschein.
\newblock {The Distortion of Cardinal Preferences in Voting}.
\newblock In {\em 10th International Workshop on Cooperative Information Agents
  (CIA)}, pages 317--331. Springer, 2006.

\bibitem{schofield2007spatial}
Norman Schofield.
\newblock {\em {The spatial model of politics}}.
\newblock Routledge, 2007.

\bibitem{skowron2017social}
Piotr~Krzysztof Skowron and Edith Elkind.
\newblock {Social choice under metric preferences: scoring rules and STV}.
\newblock In {\em Thirty-First AAAI Conference on Artificial Intelligence
  (AAAI)}, pages 706--712, 2017.

\bibitem{Willmoore1968Intensity}
Kendall Willmoore and George~W. Carey.
\newblock {The ``intensity" problem and democratic theory}.
\newblock {\em American Political Science Review}, 62(1):5--24, 1968.

\end{thebibliography}


\end{document}